\providecommand{\customgenericname}{}
\newcommand{\newcustomtheorem}[2]{%
  \newenvironment{#1}[1]
  {%
   \renewcommand\customgenericname{#2}%
   \renewcommand\theinnercustomgeneric{##1}%
   \innercustomgeneric
  }
  {\endinnercustomgeneric}
}
\crefname{section}{Sec.}{Secs.}
\Crefname{section}{Section}{Sections}
\Crefname{table}{Table}{Tables}
\crefname{table}{Tab.}{Tabs.}
\newcolumntype{M}[1]{>{\centering\arraybackslash}m{#1}}
\newcommand\recviscol{0.14}
\newcommand\recviscoltwod{0.13}
\definecolor{turquoise}{cmyk}{0.65,0,0.1,0.3}
\definecolor{purple}{rgb}{0.65,0,0.65}
\definecolor{dark_green}{rgb}{0, 0.5, 0}
\definecolor{orange}{rgb}{0.8, 0.6, 0.2}
\definecolor{red}{rgb}{0.8, 0.2, 0.2}
\definecolor{darkred}{rgb}{0.6, 0.1, 0.05}
\definecolor{blueish}{rgb}{0.0, 0.3, .6}
\definecolor{light_gray}{rgb}{0.7, 0.7, .7}
\definecolor{pink}{rgb}{1, 0, 1}
\definecolor{greyblue}{rgb}{0.25, 0.25, 1}
\renewcommand{\paragraph}[1]{\vspace{1em}\noindent\textbf{#1}.}
\begin{document}

\title{DiGS : Divergence guided shape implicit neural representation for unoriented point clouds}


\author{
  Yizhak Ben-Shabat$^{1,2,3}$\thanks{Equal contribution} \qquad  Chamin Hewa Koneputugodage$^{2,3}$\footnotemark[1] \qquad Stephen Gould$^{2,3}$\vspace{3mm}\\
  { $^{1}$Technion Israel Institute of Technology\quad $^{2}$The Australian National University \quad}\\
  { $^{3}$Australian Centre for Robotic Vision}\\
  {\tt \small \{yizhak.benshabat, chamin.hewa, stephen.gould\}@anu.edu.au}
}

\maketitle
\begin{abstract}
Shape implicit neural representations (INRs) have recently shown to be effective in shape analysis and reconstruction tasks. Existing INRs require point coordinates to learn the implicit level sets of the shape. When a normal vector is available for each point, a higher fidelity representation can be learned, however normal vectors are often not provided as raw data.
Furthermore, the method's initialization has been shown to play a crucial role for surface reconstruction. 
In this paper, we propose a divergence guided shape representation learning approach that does not require normal vectors as input. We show that incorporating a soft constraint on the divergence of the distance function favours smooth solutions that reliably orients gradients to match the unknown normal at each point, in some cases even better than approaches that use ground truth normal vectors directly. Additionally, we introduce a novel geometric initialization method for sinusoidal INRs that further improves convergence to the desired solution.
We evaluate the effectiveness of our approach on the task of surface reconstruction and shape space learning and show SOTA performance compared to other unoriented methods.\\
Code and model parameters available at our project page \url{https://chumbyte.github.io/DiGS-Site/}.
\end{abstract}

\section{Introduction}
\label{Sec:intro}

Reconstructing surfaces from 3D point samples is a well studied problem in computer vision and computer graphics. Recently, neural networks have been used to learn an implicit neural representation (INR) that can be used to reconstruct the underlying surface \cite{park2019deepsdf, mescheder2019occupancy, peng2020convolutional_occ, chen2019learning, atzmon2020sal, Atzmon_2020_CVPR_SALD, urbach2020dpdist,gropp2020igr, sitzmann2020siren}, which we refer to as a shape INR. These methods are often supervised using estimated or known volumetric implicit representations in a regression setting \cite{mescheder2019occupancy, park2019deepsdf, chen2019learning} or using surface 3D points with or without extra 3D supervision (\eg normal data) \cite{atzmon2020sal, Atzmon_2020_CVPR_SALD, gropp2020igr, sitzmann2020siren} to regress the function directly. Due to the difficulty of optimizing a regression model to fit high fidelity surfaces, most methods use normal data. However, raw 3D point clouds from scans are typically unoriented, and thus do not have normal vectors, therefore a pre-processing estimation stage is required. While while some methods allow normal supervision to be absent~\cite{gropp2020igr,sitzmann2020siren}, we show that without it their performance drops significantly. While there have been significant advances in normal estimation algorithms \cite{guerrero2018pcpnet, ben2019nestinet, ben2020deepfit, lenssen2020deepiterative}, all yield unoriented and noisy predictions. This poses a great challenge for existing normal-based shape INR learning approaches. 

\begin{figure}[t]
     \centering
     \includegraphics[width=0.95\linewidth]{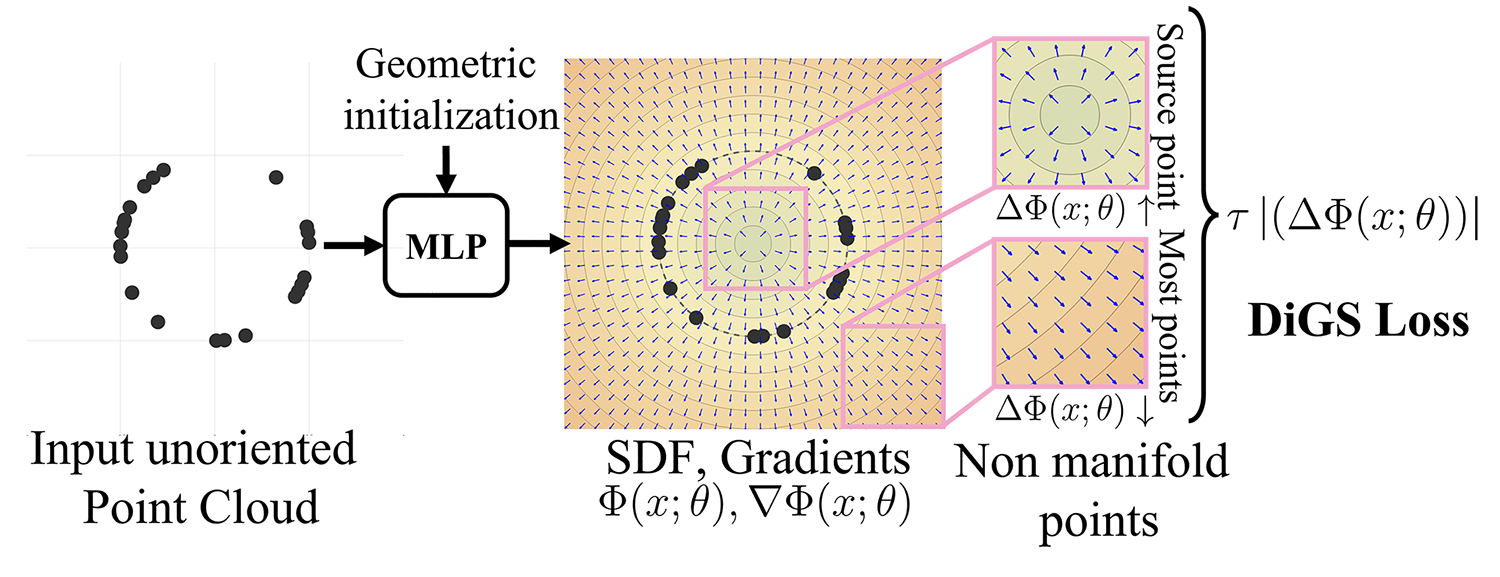}
     \caption{Toy 2D example. Given an unoriented input point cloud sampled on a shape, we train a shape implicit neural representation using a geometric initialization and a divergence penalty loss which arises from the observation that in most locations the divergence of the signed distance function should be low. }
    \label{fig:teaser}
\end{figure}

In this work, we introduce a divergence guided shape implicit neural representation learning approach (DiGS INR), which uses raw 3D point data for supervision without any pre-processing stages. Our approach is motivated by the observation, illustrated in \figref{fig:teaser}, that the gradient vector field of the signed distance function (produced by the network) has low divergence nearly everywhere. We incorporate this geometric prior as a soft constraint in the loss function and anneal it as training progresses. This requires a network architecture that has continuous second derivatives, such as SIRENs \cite{sitzmann2020siren} (which we use), in contrast to many previous works that use ReLU multi-layer perceptrons (MLPs). Additionally, we propose two novel geometrically driven initialization methods for this architecture to set the initial zero level set (i.e., the estimated shape's surface) to be approximately spherical, one of which explicitly maintains high frequencies in a controllable fashion. 

We test DiGS on the task of surface reconstruction and shape space learning, and use shapes from the Surface Reconstruction Benchmark (SRB) \cite{berger2013benchmark}, DFAUST~\cite{bogo2017dynamic}, and ShapeNet \cite{chang2015shapenet} datasets, which consist of shapes with challenging properties, \eg, topological complexity, nonuniform sampling, registration misalignment, missing data and different feature sizes (levels of detail). We show that our method suffers very little, if at all, without normal supervision and performs on par and sometimes better than state-of-the-art methods that use normals. Importantly, it performs significantly better than other methods that work on unoriented point clouds. 


The main contributions of this paper are: 
\begin{itemize}[topsep=0pt]
    \denselist
    \item Introducing divergence guided shape INR learning which incorporates soft second order derivative constraints to guide the INR learning process.
    \item Deriving two novel geometric initialization methods for sinusoidal-based shape INR networks that lead to better representations. 
\end{itemize}

\section{Related-work}
\label{Sec:related-work}

\textbf{Surface reconstruction.}
Reconstructing surfaces from point clouds is a difficult problem that has been studied for many years. Its main challenges include (1) nonuniform point sampling, (2) noisy point positions and normals due to sampling inaccuracy, scan registration misalignment and normal estimation errors, and (3) missing data in parts of the surface due to occlusions. Reconstruction methods attempt to overcome these challenges and infer the unknown underlying surface. Classical combinatorical approaches for reconstruction include  triangulation methods \cite{cazals2006delaunay, kolluri2004spectral}, alpha shapes \cite{bernardini1999ball} and Voronoi diagrams \cite{amenta2001power}. Classical implicit function based approaches include piecewise polynomial functions \cite{nagai2009smoothing, ohtake2005sparse}, summing radial basis functions \cite{carr2001reconstruction}, and solving a Poisson equation to find a global indicator function \cite{kazhdan2006poisson, kazhdan2013screened_poisson}. For a more in depth survey of classical surface reconstruction methods we refer the reader to Berger \etal~\cite{berger2017survey}.
Recently, and more relevant to this work, learning based approaches with neural networks have been proposed. These include parameteric methods that learn mappings from a parametric space to a region of the surface \cite{groueix2018papier, williams2019DGP}, methods that learn an implicit volumetric function on regular grids \cite{ jiang2020sdfdiff, tatarchenko2017octree}, methods that learn implicit neural representations (INRs) for the shape and recently a differentiable point-to-mesh optimisation layer \cite{peng2021shape}.

\textbf{Shape implicit neural representations (INRs).}
Neural networks have shown to be very efficient in representing shapes as implicit functions~\cite{park2019deepsdf, mescheder2019occupancy, gropp2020igr, atzmon2020sal, Atzmon_2020_CVPR_SALD, michalkiewicz2019implicit, tancik2020fourfeat, sitzmann2020siren, williams2020neural_splines, yifan2021geometry, lipman2021phase}, and scenes \cite{peng2020convolutional_occ, sitzmann2019srns, jiang2020local, azinovic2021neural}. These networks are trained to output either a signed distance function \cite{park2019deepsdf, atzmon2020sal, Atzmon_2020_CVPR_SALD, gropp2020igr, michalkiewicz2019implicit, williams2020neural_splines, sitzmann2020siren, sitzmann2019srns, jiang2020local, yifan2021geometry, azinovic2021neural},  an occupancy function \cite{mescheder2019occupancy, peng2020convolutional_occ, tancik2020fourfeat}, or recently a representation that unifies the two \cite{lipman2021phase}. 

Our method uses a signed distance function (SDF) to represent the shape. SDFs were popularised for shape INRs by DeepSDF \cite{park2019deepsdf}. However, for off surface points they require the ground truth for whether they are inside/outside the shape (i.e. the sign of the SDF), which is unrealistic. SAL \cite{atzmon2020sal} improves upon this by using a sign agnostic training method that requires no extra 3D supervision, and SALD \cite{Atzmon_2020_CVPR_SALD} shows that using ground truth normals for points on the surface greatly improves performance. IGR \cite{gropp2020igr} introduce an important  eikonal equation based loss term to regularise the learnt function towards being an SDF, which can be used with or without normals. NSP \cite{williams2020neural_splines} frame the task as a kernel regression problem where the kernel chosen is equivalent to the limit of infinitely wide, shallow ReLU networks. FFN \cite{tancik2020fourfeat} and SIREN \cite{sitzmann2020siren} demonstrate that INRs using conventional MLPs bias toward low-frequency solutions, and thus explicitly introduce high frequencies into their architecture. FFN (which learn an occupancy implicit function) use a ReLU MLP with a Fourier feature layer, while SIREN uses periodic (specific sine) activation functions. The benefit of the latter is that second derivatives (and in fact all orders) are defined and continuous, which allows for higher order supervision as we use in this paper. On the other hand, DeepSDF, SAL use ReLU for activation, so the same cannot be done with those methods.

Ground truth normal information is not usually available in real world, raw scan data, and must be noisily estimated. Note that all these methods, except SAL (and DeepSDF which uses other unrealistic ground truth information), report results with ground truth normal information. Technically IGR and SIREN can operate without such information by dropping the corresponding loss term in their overall loss, but our experimental results in Section~\ref{Sec:results} show that performance significantly drops in their absence. On the other hand, we show that without normal information our method does not reduce performance as significantly, and in fact does on par with methods that use normals.

Concurrent work include IDF \cite{yifan2021geometry} and PHASE \cite{lipman2021phase}. IDFs extend SIRENs to explicitly decompose learning the high and low frequencies of a shape, and combine by having the local high frequency as a displacement in the low frequency SIREN's normal direction. PHASE introduce a loss that learns a density function whose limit is an occupancy, and whose log transform is a SDF. They also introduce a version without ground truth normal supervision, PHASE+FF, which uses the Fourier features of FFN \cite{tancik2020fourfeat}.

\textbf{Initializing shape INRs.}
As finding good shape implicit representations is hard due to the ill-posed nature of the task~\cite{berger2017survey}, many methods introduce initialization or regularization to bias toward favourable solutions~\cite{atzmon2020sal,Atzmon_2020_CVPR_SALD,gropp2020igr}. SAL~\cite{atzmon2020sal} introduce a geometric initialization for ReLU MLPs, which carefully chooses the initial weights (or the distribution of the initial weights) in particular layers to make the initial function be approximately the signed distance function to a $r$-radius sphere. This initialization has shown to have favourable properties for reconstruction, such as in-object and out-object sign consistency.

For SIRENs, to allow for training deeper networks for general INRs, Sitzmann \etal~\cite{sitzmann2020siren} proposed an initialization that preserves the distribution of activations through its layers. However for shape INRs, this initialization lacks a geometric grounding and sometimes causes unwanted ghost geometries. In this paper, we propose two novel initializations for SIRENs that extend the notion of geometric initialization to periodic activations and show that initializing with high frequencies is important for capturing fine detail.

\section{Outline of Divergence Guided Shape INRs}\label{sec:overall-method}
We use a smooth-to-sharp approach that keeps the gradient vector field stay highly consistent during training (see \figref{fig:training}). In particular, it allows us to learn a good implicit representation without normal information. 
The proposed training procedure consists of the following four steps, which will be detailed in subsequent sections: 
\begin{itemize}[topsep=0pt]
    \denselist
    \item \textbf{Geometric initialization}. Initialize to a sphere, biasing the function to start with an SDF that is positive away from the object and negative in the centre of the object's bounding box, while keeping the model's ability to have high frequencies (in a controllable manner).
    \item \textbf{High divergence phase.} Guide the model towards a smooth reconstruction of the coarse shape. Importantly, this prevents the model from prematurely fitting to fine details.
    \item \textbf{Annealing divergence phase.} Slowly allow fine details to emerge while still learning a function that has smoothly changing normals. 
    \item \textbf{Low divergence phase.} Allow very fine details such as sharp corners to emerge, and for the function to interpolate the original data (point cloud samples) as much as possible (subject to also minimising the Eikonal term).
\end{itemize}
A high weight on the divergence loss produces very smooth SDF functions, leading to oversmoothed reconstructions. However learning such smooth representations can be done quickly and robustly. In the supplemental material, we provide a video showing the effects of this procedure on the reconstructions at different iteration steps. 
We divide the total number of iterations in 50\%, 25\% and 25\% for the high, annealing and low divergence phases, respectively.
\\

\begin{figure}
        \centering
        \rotatebox[origin=lB]{90}{\ \ \ \textbf{Our DiGS}}
        \includegraphics[scale=0.2]{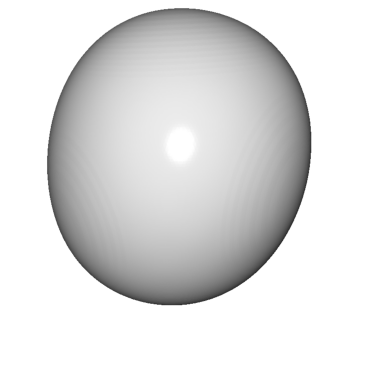}
        \includegraphics[scale=0.2]{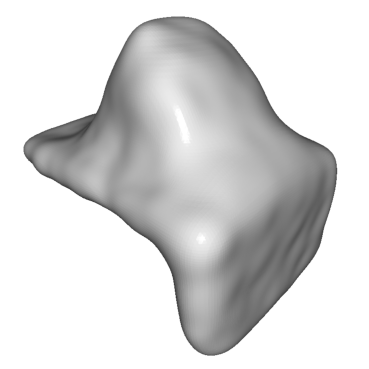}
        \includegraphics[scale=0.2]{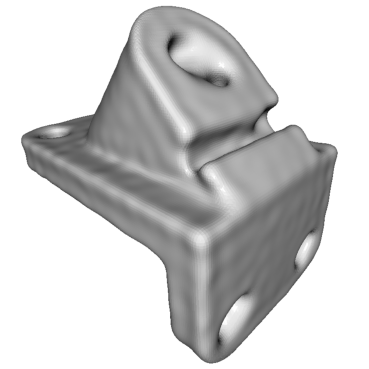}
        \includegraphics[scale=0.2]{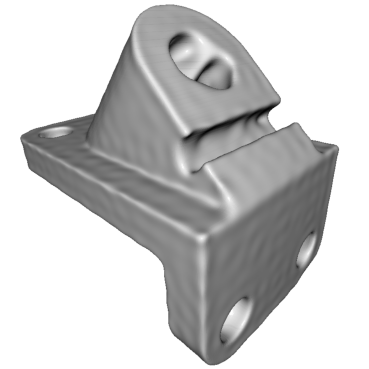}\\
        \rotatebox[origin=lB]{90}{\,SIREN wo n}
        \includegraphics[scale=0.2]{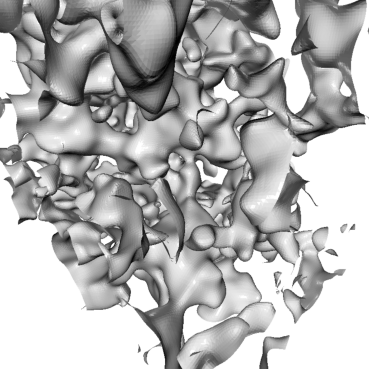}
        \includegraphics[scale=0.2]{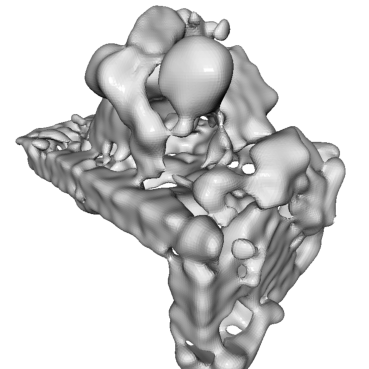}
        \includegraphics[scale=0.2]{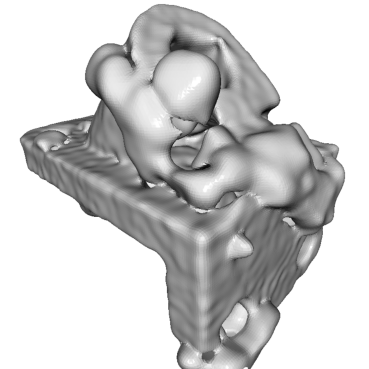}
        \includegraphics[scale=0.2]{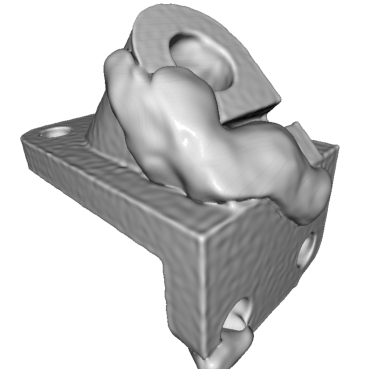}
        \caption{Results at four different training iterations (progressing from left to right) for DiGS (top) and SIREN wo n (bottom).}
        \label{fig:training}
\end{figure}

\section{Geometric initialization for SIRENs}
\label{Sec:approach_init}
We now detail our two geometrically motivated initializations for SIRENs, shown in \figref{fig:geometric_init} for 2D. 

\textbf{Initialization to a sphere.}
A key component of our method is a geometrically meaningful initialization for the parameters of SIRENs such that the initial signed distance function is approximately an $r$-radius sphere. 

Let us consider a SIREN specified by
\begin{align}
    \Phi(x; \theta) &= \mathbf{w}_n^T\left(\phi_{n-1}\circ\phi_{n-2}\circ\ldots\circ\phi_0\right)(x)+ b_n, \notag \\
    \phi_i(x_i) &=\sin\left(\mathbf{W}_ix_i+\mathbf{b}_i\right)
    \label{Eq:SIREN}
\end{align}
where $\phi_i:\mathbb{R}^{M_i}\to\mathbb{R}^{N_i}$ is the $i^{th}$ layer of the network, with input $x_i\in \mathbb{R}^{M_i}$ (so $x_0=x$) and parameters $\theta = \bigl\{ \mathbf{w}_n, b_n, \mathbf{W}_{n-1}, \mathbf{b}_{n-1}, \ldots \mathbf{W}_1, \mathbf{b}_1 \bigr\}$ where $\mathbf{W}_i\in\mathbb{R}^{N_i\times M_i}$,  $\mathbf{b}_i\in\mathbb{R}^{N_i}$, $\mathbf{w}_n\in \reals^{M_n}$, $b_n\in \reals$.
Rather than approximating a norm using smooth SIRENs, we instead approximate the more tractable signed squared norm \cite{willians2021drps} and
apply the following function to the output of the SIREN:
\begin{equation}
    \nu(d) = \sign{d}\sqrt{|d| + \varepsilon}.
\end{equation}
Thus, we develop an initialization of the network's parameters, $\theta = \theta_0$, such that $\nu(\Phi(x; \theta_0)) \approx \norm{x}{2}$ for $x$ within the unit ball. Note translating and scaling to the unit ball is standard for point clouds, so we are only interested in the INR within this region. We then manually minus $r$ from this to initialize to a an $r$-radius sphere.

The following proposition shows this for a single hidden layer SIREN, where we approximate $z\mapsto z^2$ using a translated sine wave (see supplemental for proof). 

\begin{proposition}
\label{prop:single_layer}
Let $\Phi$ be a single hidden layer SIREN ($n=1$ in \eqnref{Eq:SIREN}) of dimension $M_n$ and let $x$ be a point within the unit ball. Set  $\mathbf{W}_{n-1}=\frac{\pi}{2}I$, $\mathbf{b}_{n-1}=\frac{\pi}{2}\mathbf{1}$, $\mathbf{w}_n=-\mathbf{1}$ and $b_n=M_n$. Then, $\nu(\Phi(x)) \approx \norm{x}{2}$.
\end{proposition}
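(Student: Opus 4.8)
The plan is to substitute the prescribed parameters into \eqnref{Eq:SIREN}, collapse $\Phi$ to a sum of shifted cosines, recognize each summand as a close surrogate for a squared coordinate, and then compose with $\nu$. Concretely, taking $n=1$ with $\mathbf{W}_{0}=\tfrac{\pi}{2}I$, $\mathbf{b}_{0}=\tfrac{\pi}{2}\mathbf{1}$, $\mathbf{w}_{1}=-\mathbf{1}$, and $b_{1}=M_n$, and writing $x=(x_1,\dots,x_{M_n})$, the network output is
\[
  \Phi(x) \;=\; M_n \;-\; \sum_{j=1}^{M_n}\sin\!\left(\tfrac{\pi}{2}x_j+\tfrac{\pi}{2}\right).
\]
Using $\sin(\theta+\tfrac{\pi}{2})=\cos\theta$, this is exactly $\Phi(x)=\sum_{j=1}^{M_n}\bigl(1-\cos(\tfrac{\pi}{2}x_j)\bigr)$. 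I would flag two structural facts that drop out immediately: every summand is nonnegative (since $\cos\le 1$), so $\Phi\ge 0$ on all of $\mathbb{R}^{M_n}$, which is precisely what makes $\nu$ act as a genuine positive square root on the unit ball; and the expression is coordinate-separable, so it suffices to analyze the scalar map $t\mapsto 1-\cos(\tfrac{\pi}{2}t)$ on $[-1,1]$.

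The core of the argument is the elementary approximation $1-\cos(\tfrac{\pi}{2}t)\approx t^2$ for $t\in[-1,1]$. I would justify this not by the Taylor series (whose leading term $\tfrac{\pi^2}{8}t^2$ has the ``wrong'' constant) but by noting that the two functions agree exactly at $t\in\{-1,0,1\}$ — indeed the frequency $\tfrac{\pi}{2}$ was chosen so that $\cos(\tfrac{\pi}{2})=0$ forces the endpoints to match — together with a crude bound showing the residual $1-\cos(\tfrac{\pi}{2}t)-t^2$ stays small (on the order of a few hundredths) on $[-1,1]$. Since $x$ lies in the unit ball we have $|x_j|\le\|x\|_2\le 1$ for each coordinate, so applying this bound termwise and summing gives
\[
  \Phi(x)\;=\;\sum_{j=1}^{M_n}\bigl(1-\cos(\tfrac{\pi}{2}x_j)\bigr)\;\approx\;\sum_{j=1}^{M_n}x_j^2\;=\;\|x\|_2^2 .
\]
Composing with $\nu$ and using $\Phi(x)\ge 0$ then yields $\nu(\Phi(x))=\sqrt{\Phi(x)+\varepsilon}\approx\sqrt{\|x\|_2^2}=\|x\|_2$ for small $\varepsilon$, which is the claim; subtracting $r$ afterwards gives an approximate signed distance function to the $r$-sphere.

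The step I expect to require the most care is not any single computation but making the sense of ``$\approx$'' honest. There are two independent error sources — the $\varepsilon$-regularization inside $\nu$, and the gap between $1-\cos(\tfrac{\pi}{2}t)$ and $t^2$ — and the second accumulates over the $M_n$ coordinates, so the right statement is that the absolute error grows with width while the relative error stays uniformly controlled (and vanishes along the coordinate axes, where each term is exact). A secondary point worth an explicit sentence is the nonnegativity $\Phi\ge 0$, since without it the branch of $\nu=\operatorname{sign}(\cdot)\sqrt{|\cdot|+\varepsilon}$ would be ambiguous and the final square-root step would not follow.
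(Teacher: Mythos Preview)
Your proposal is correct and follows essentially the same route as the paper: substitute the parameters, use the scalar approximation $1-\sin\bigl(\tfrac{\pi}{2}t+\tfrac{\pi}{2}\bigr)=1-\cos\bigl(\tfrac{\pi}{2}t\bigr)\approx t^2$ on $[-1,1]$, sum over coordinates to get $\Phi(x)\approx\|x\|_2^2$, and then apply $\nu$. Your write-up is in fact more careful than the paper's (which just points to a figure for the scalar approximation), particularly in flagging nonnegativity of $\Phi$ and the coordinate-wise error accumulation.
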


To extend this to networks with an arbitrary number of layers, we design layers $\phi_i$ that preserve the norm on expectation w.r.t.\ the weights of the layer: $\mathbb{E}[\norm{\phi_i(x_i)}{2}]=\norm{x_i}{2}$, so $\|x_{i+1}\|_2\approx \|x_i\|_2$. We do this by sampling entries for the weight matrix uniformly in the range $[-c^i_{wr}, c^i_{wr}]$ (defined below), which makes the rows approximately orthonormal. This also preserves the distribution type of the activations through all layers (see Sitzmann \etal~\cite{sitzmann2020siren}). Applying \propref{prop:single_layer} to the last two layers yields the following proposition (see \figref{fig:geometric_init_illustration} for a illustration of this).

\begin{proposition}
\label{prop:multilayer_init}
Let 
$\Phi$ be a $n$-hidden layer SIREN (\eqnref{Eq:SIREN}) 
that maps from $\mathbb{R}^{M_0}\to\mathbb{R}$ 
and $\|x\|_2\leq 1$. Set $\mathbf{W}_i\sim\mathcal{U}\left(-c^i_{wr},c^i_{wr}\right)$, 
$c^i_{wr}=\sqrt{\frac{3}{M_{i+1}}}$,
$\mathbf{b}_i=\mathbf{0}$ for $0\leq i \leq n-2$ and $\mathbf{W}_{n-1}=\frac{\pi}{2}I$, $\mathbf{b}_{n-1}=\frac{\pi}{2} \mathbf{1}$, $\mathbf{w}_n=-\mathbf{1}$ and $b_n=M_{n}$. Then $\nu(\Phi(x))\approx \|x\|_2$.
\end{proposition}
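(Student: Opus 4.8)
The plan is to view $\Phi$ as the composition of two parts: the first $n-1$ sine layers $\phi_0,\ldots,\phi_{n-2}$, which approximately preserve the Euclidean norm, and then the last sine layer $\phi_{n-1}$ together with the linear output layer, which together are \emph{exactly} a single hidden layer SIREN of dimension $M_n$ acting on $x_{n-1}$, with the parameter choices of \propref{prop:single_layer}. The argument then reduces to: (i) show $\norm{x_i}{2}\approx\norm{x_{i-1}}{2}$ for $1\le i\le n-1$, so that $\norm{x_{n-1}}{2}\approx\norm{x}{2}\le 1$; and (ii) apply \propref{prop:single_layer} with input $x_{n-1}$.

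For step (i), I would fix $0\le i\le n-2$; since $\mathbf{b}_i=\mathbf{0}$ we have $\phi_i(x_i)=\sin(\mathbf{W}_i x_i)$, where the entries of $\mathbf{W}_i$ are i.i.d.\ uniform on $[-c^i_{wr},c^i_{wr}]$, $c^i_{wr}=\sqrt{3/M_{i+1}}$, hence of mean $0$ and variance $1/M_{i+1}$. Using independence and zero mean of the entries,
\begin{align*}
  \expect\!\left[\norm{\mathbf{W}_i x_i}{2}^2\right]
  &=\sum_{k=1}^{M_{i+1}}\sum_{j}\mathrm{Var}\bigl((\mathbf{W}_i)_{kj}\bigr)\,(x_i)_j^2\\
  &=M_{i+1}\cdot\tfrac{1}{M_{i+1}}\,\norm{x_i}{2}^2
  =\norm{x_i}{2}^2,
\end{align*}
which is precisely why this scaling is chosen: it makes the rows of $\mathbf{W}_i$ an approximately orthonormal system, preserving $\norm{\cdot}{2}$ in expectation (and, as in Sitzmann \etal~\cite{sitzmann2020siren}, the distribution type of the activations). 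Next I would argue that the sine perturbs this negligibly: since $\norm{x_i}{2}\lesssim 1$ is spread over the $M_{i+1}$ conditionally independent coordinates of $\mathbf{W}_i x_i$ (each of typical size $O(M_{i+1}^{-1/2})$), $\sin t=t-t^3/6+\cdots$ gives $\norm{\sin(\mathbf{W}_i x_i)}{2}^2=\norm{\mathbf{W}_i x_i}{2}^2+O(M_{i+1}^{-1})$. Combining, and using the conditional independence of the coordinates to turn the second moment into a high-probability statement, $\norm{\phi_i(x_i)}{2}\approx\norm{x_i}{2}$; iterating over $i=0,\ldots,n-2$ gives $\norm{x_{n-1}}{2}\approx\norm{x_0}{2}=\norm{x}{2}\le 1$.

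For step (ii), I would note that $\Phi(x)=\mathbf{w}_n^T\phi_{n-1}(x_{n-1})+b_n$ with $\phi_{n-1}(x_{n-1})=\sin(\mathbf{W}_{n-1}x_{n-1}+\mathbf{b}_{n-1})$, and that the prescribed $\mathbf{W}_{n-1}=\tfrac{\pi}{2}I$ (which forces $M_{n-1}=M_n$, just as the identity-matrix choice does in \propref{prop:single_layer}), $\mathbf{b}_{n-1}=\tfrac{\pi}{2}\mathbf{1}$, $\mathbf{w}_n=-\mathbf{1}$, $b_n=M_n$ are exactly those of a single hidden layer SIREN of dimension $M_n$ evaluated at $x_{n-1}$. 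Since $\norm{x_{n-1}}{2}\lesssim 1$ lies (approximately) in the unit ball, \propref{prop:single_layer} yields $\nu(\Phi(x))\approx\norm{x_{n-1}}{2}$; chaining with step (i), $\nu(\Phi(x))\approx\norm{x_{n-1}}{2}\approx\cdots\approx\norm{x_0}{2}=\norm{x}{2}$.

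The step I expect to be the main obstacle is the error control in (i): norm preservation is exact only at the level of the second moment of the linear map, so one must (a) bound the bias from the sine --- a routine Taylor-remainder estimate once coordinate magnitudes are controlled --- and, more importantly, (b) show each $\norm{x_i}{2}$ concentrates around its expectation so that the hypothesis $\norm{x_i}{2}\lesssim 1$ propagates through all $n-1$ layers instead of accumulating multiplicative error; the conditional independence of the pre-activation coordinates makes this concentration straightforward, and at the ``$\approx$'' level of the statement a variance bound suffices. Everything else is bookkeeping plus the direct appeal to \propref{prop:single_layer}.
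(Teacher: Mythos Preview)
Your proposal is correct and follows essentially the same approach as the paper: show that the first $n-1$ sine layers approximately preserve $\norm{\cdot}{2}$ (via the second-moment/approximate-orthonormality argument for $\mathbf{W}_i$ together with the small-argument approximation $\sin t\approx t$), then invoke \propref{prop:single_layer} on the remaining single hidden layer SIREN at $x_{n-1}$. The paper packages step~(i) into two short lemmas ($A^TA\approx I$ and $\sin(Ax)\approx Ax$ when coordinates are small) but is less explicit than you are about the concentration and Taylor-remainder control.
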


We perturb all constant parameters in \propref{prop:multilayer_init} with small Gaussian noise to facilitate learning.

However, as this initialization keeps all activations within the first period of $\sin$, in practice SIRENs initialized this way will not generate high frequency output. Sitzmann \etal~ \cite{sitzmann2020siren} also notes this problem, and specifically scales the weight matrix of the first layer by $\omega_0=30$ to hit up to 30 periods, thus giving multiple frequencies through the network. To overcome this problem, we proposed the multi frequency geometric initialization (MFGI). 

\begin{figure}[]
     \centering
     \begin{tabular}{c c c}

         \raisebox{-.5\height}{\includegraphics[width=0.25\linewidth, trim=30pt 30pt 7pt 37pt, clip]{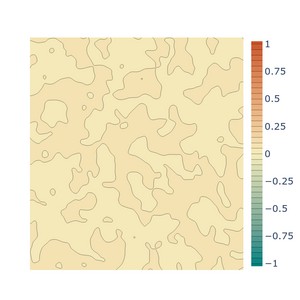} }
         &

         \raisebox{-.5\height}{\includegraphics[width=0.25\linewidth, trim=30pt 30pt 7pt 37pt, clip]{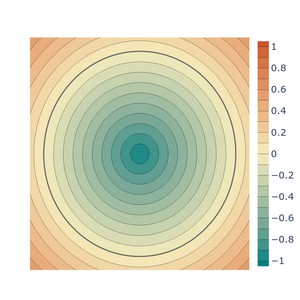} }
         &
        
         \raisebox{-.5\height}{\includegraphics[width=0.25\linewidth, trim=30pt 30pt 7pt 37pt, clip]{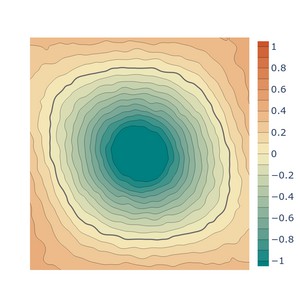} }
         \\

         SIREN \cite{sitzmann2020siren} &  Our Geometric & Our MFGI \\
     \end{tabular}
        \caption{Visualization of the SDF of the proposed geometric initialization and multi-frequency geometric initialization (MFGI) for SIRENs in 2D compared to Sitzmann \etal~\cite{sitzmann2020siren}.}
        \label{fig:geometric_init}
\end{figure}

\begin{figure}[tb]
     \centering
         \includegraphics[width=0.98\linewidth]{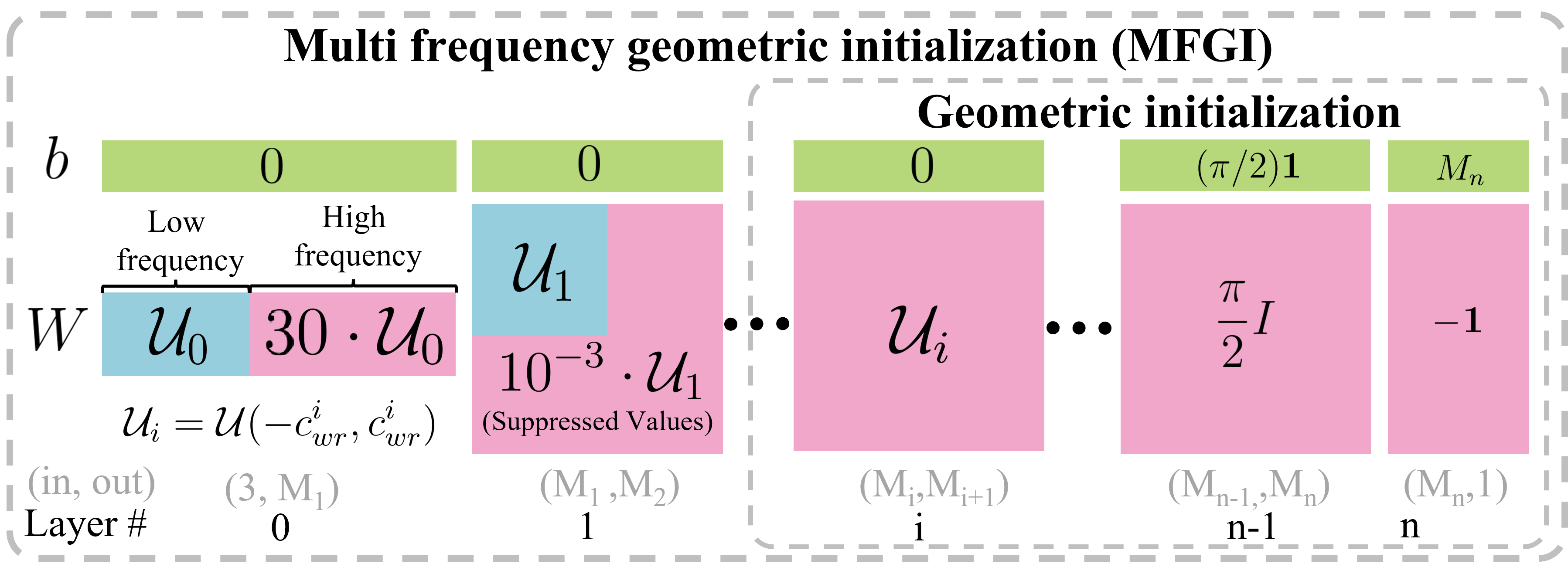} 
        \caption{Illustration of our geometric initialization and MFGI.}
        \label{fig:geometric_init_illustration}
\end{figure}

\textbf{Multi frequency geometric initialization (MFGI).}
We initialize using the geometric initialization of  \propref{prop:multilayer_init} and introduce high frequencies into the first layer in a controlled manner (see \figref{fig:geometric_init_illustration} for an illustration of the method). Specifically, we keep the initialization for the first $k_r$ rows of the weight matrix $\mathbf{W}_0 = \left[w^0_{ij}\right]\in\reals^{N_0 \times M_0}$ as per \propref{prop:multilayer_init}, and for the last $N_0 - k_r$ rows we scale the initialized values by $n_p=30$. This means that the output after the first layer, $x_1\in\reals^{N_0}$, has its first $k_r$ elements hitting only one period and its remaining $N_0 - k_r$  elements hitting up to $n_p$ periods. Then so that our geometric initialization still works, we scale down any part of the next weight matrix, $\mathbf{W}_1 =\left[w^1_{ij}\right]\in\reals^{N_1 \times M_1}$, that would multiply into the multi-period part of the vector by a  factor $s = 10^{-3}$. This can be summarised as: 
\begin{equation}
\begin{split}
            & w^0_{ij} \sim \begin{cases} 
                \mathcal{U}\left(-c^0_{wr},c^0_{wr}\right) &  {\scriptstyle 0 \leq j \leq k_r}  \\
                \mathcal{U}\left(-n_p\, c^0_{wr},n_p\, c^0_{wr}\right) & \text{otherwise} \\
            \end{cases}
\\
        & w^1_{ij} \sim \begin{cases} 
                \mathcal{U}\left(-c^1_{wr},c^1_{wr}\right) & {\scriptstyle 0 \leq j \leq k_r, 0 \leq i \leq k_r}\\
                \mathcal{U}\left(-s\, c^1_{wr},s\, c^1_{wr}\right) & \text{otherwise}  \\ 
            \end{cases}
    \\
\end{split}.
\end{equation} 

\noindent In our experiments we found that $k_r \approx \frac{1}{4}N_0$ is sufficient.

\textbf{Visualisation of the initalizations.} 
\figref{fig:geometric_init} shows the two initializations introduced compared to the initialization proposed in SIREN \cite{sitzmann2020siren} for a 4-layer SIREN with 128 nodes in each layer. Notice that SIREN's initialization has both SDF and gradient vector norm of almost zero everywhere. On the other hand, our geometric initialization has sphere-like level sets, which provide smooth and desirable eikonal and divergence terms (see supplemental for visualization of these components). Our MFGI initialization is a noisy version of our geometric initialization, where the amount of noise can be tuned by $n_p$, $k_r$ and $s$.



\section{DiGS Loss}

\label{Sec:approach_divergence}
\subsection{Existing loss function components and setup}\label{subsec:existing-losses}
Prominent neural implicit representation methods~\cite{atzmon2020sal, Atzmon_2020_CVPR_SALD, gropp2020igr, park2019deepsdf, sitzmann2020siren} train a neural network with parameters $\theta$ to output a SDF $\Phi(x; \theta)$ to the surface of an underlying (unknown) shape for every given point $x \in \reals^3$ . They require several loss functions when training to constrain the learned function in certain ways: (A) \textit{manifold constraint:} points on the surface manifold should be on the function's zero level set \cite{atzmon2020sal, Atzmon_2020_CVPR_SALD, gropp2020igr, park2019deepsdf, sitzmann2020siren}, (B) \textit{normal constraint:} the gradients of points on the surface manifold should match the ground truth normal if such supervision is available \cite{Atzmon_2020_CVPR_SALD, gropp2020igr, sitzmann2020siren} (C) either (C1) \textit{non-manifold constraint:} points off the surface manifold should match their ground truth SDF or SDF magnitude \cite{atzmon2020sal, Atzmon_2020_CVPR_SALD, park2019deepsdf} or (C2) \textit{Eikonal constraint:} all points should have a unit gradient~\cite{gropp2020igr, sitzmann2020siren}, (D) \textit{non-manifold penalisation constraint:} off surface points should not have zero SDF \cite{sitzmann2020siren}. Note that (A) and (C) are necessary, and (B) and (D) are used for improving results.

Our method uses the same network architecture and loss functions of SIREN \cite{sitzmann2020siren}, which we provide for completeness. Given domain $\Omega$ and surface manifold $\Omega_0$, they define the above loss functions as
\begingroup
\allowdisplaybreaks 
\begin{align}
    L_{A} &= \int_{\Omega_0} \norm{\Phi(x ;\theta)}{2} dx \label{eq:L-mnfld} \\
    L_{B} &= \int_{\Omega_0} \left(1 - \left\langle \nabla_x \Phi(x; \theta) , n_{\text{GT}}(x) \right\rangle \right) dx \label{eq:L-n} \\
    L_{C2} &= \int_\Omega \abs{\norm{\nabla_x \Phi(x; \theta)}{2} - 1} dx \label{eq:eikonal}\\
    L_{D} &= \int_{\Omega \setminus \Omega_0} \exp{\left(-\alpha\abs{\Phi(x; \theta)}\right)}, \quad \alpha \gg 1 \label{eq:L-inter}.
\end{align}%
\endgroup

In summary, the final loss for SIREN is given by a weighted sum of all of the above terms:
\begin{equation}
L_{SIREN} = \lambda_{A} L_{A} + \lambda_{B} L_{B} +  \lambda_{C2} L_{C2} + \lambda_{D} L_{D} 
\end{equation}
with $\left(\lambda_{A}, \lambda_{B}, \lambda_{C2}, \lambda_{D} \right) = \left(3000, 100, 50, 100 \right)$ as given by Sitzmann et al. \cite{sitzmann2020siren}.

Supervision for (B) and (C1) are either unlikely to have in practice or costly to approximate well. Gropp \etal~\cite{gropp2020igr} showed however that (C1) can be replaced by (C2), a popular constraint in PDE theory called the Eikonal equation, which does not require extra data. In fact, given continuous (and sufficiently well behaved) boundary constraints (A), solving for a viscous solution to the PDE defined by the Eikonal equation will be unique (i.e., only (A) and (C2) are necessary). The difficulty in practice, however, is that constraint (A) is only defined at discrete points, resulting in infinitely many solutions, hence the requirement of other constraints to guide to favourable solutions (especially (B)).

We observe and deal with another problem (that we highlight in Section~\ref{subsec:div-as-reg}): our loss functions (and thus constraints) can only be evaluated at finite, discrete points within the domain. As a result the quality of our solution depends highly on the sampling density of our method, and how effectively our losses constrain the surrounding region. While we cannot increase the number of discrete points we get as supervision for (A), (B) greatly constrains the function space by adding higher order information (but requires extra supervision). We now do the same for (C) without extra supervision, and show that it can even remove the need for (B). Note that to benchmark against no normal supervision (B), we can also define
\begin{equation}
L_{SIREN\ wo\ n} = \lambda_{A} L_{A} + \lambda_{C2} L_{C2} + \lambda_{D} L_{D}.
\end{equation}

\subsection{Second order unsupervised constraints} \label{subsec:second-order-loss}
We turn to second order information to further constrain the SDF around each sample on $\Omega \setminus \Omega_0$. Given the gradient vector field $\nabla \Phi$, we can compute its curl, $\nabla\times \nabla \Phi$, and divergence, $\Delta \Phi := \nabla \cdot \nabla \Phi$ (note that this is also known as the Laplacian of the underlying scalar field $\Phi$)\cite{marsden2003vector}. The curl of any gradient vector field is zero everywhere so it does not give us any information. However, we can observe that for a ground truth SDF, the magnitude of the divergence is very low at most areas of the domain 
(see supplemental material 
for discussion and intuition). We can thus impose a penalty on the magnitude of the divergence, \ie,
\begin{equation}\label{eq:L-div}
L_{div} = \int_{\Omega \setminus \Omega_0} \!\! \abs{\Delta \Phi(x; \theta)}dx = \int_{\Omega \setminus \Omega_0} \!\! \abs{\nabla_x \cdot \nabla_x \Phi(x; \theta)} dx.
\end{equation}
This leads to our proposed DiGS loss, given by: 
\begin{equation}
L_{DiGS} = L_{SIREN\ wo\ n} + \tau\lambda_{div} L_{div}
\end{equation}
where we use $\lambda_{div}=100$ and $\tau$ is an annealing factor we discuss in Section~\ref{sec:overall-method}.

Note that our approach can only be used with architectures that have activation functions with nonzero second derivative such as SIRENs. ReLU based networks will not be affected by the divergence constraint.



\subsection{Minimising divergence as regularisation}\label{subsec:div-as-reg}
We now provide some justification of the loss in Equation~\ref{eq:L-div} by showing that it is equivalent to regularising the learnt function. We can quantify the complexity of our learnt function by using the Dirichlet Energy. The Dirichlet Energy of a function $\Phi$ over a space $\Omega$ gives a notion for how smooth or variable the function is \cite{bronstein2017geometric} (where lower implies smoother), defined by
\begin{equation}
    E[\Phi]=\frac{1}{2}\int_\Omega\|\nabla\Phi(x)\|_2^2 \, dx.
\end{equation}
To minimize this with respect to our constraints (A)--(D), it suffices to find the function satisfying our conditions whose magnitude of the divergence is as small as possible, i.e., minimizing our divergence term $L_{div}$ in Equation~\ref{eq:L-div} (see
supplemental material
for proof).

Why not explicitly minimize the Dirichlet energy as a loss function? In fact, doing so would be redundant, due to our Eikonal term (\ref{eq:eikonal}): if the gradient is constrained to have unit norm on the sampled points, then the Dirichlet energy, as far as can be determined from at those sampled points, is already determined. However we argue that adding our divergence term does a much better job of reducing the variability of our learned function over the entire space, rather than just having the Eikonal term. The reason for this is that it constrains the local region more due to its second order nature. This can also be motivated by the divergence theorem~\cite{morse1954methods} (see
supplemental for a thorough discussion). We use the following toy problem to demonstrate this. 

Consider the problem of learning the SDF to the line $y=0$ where below the line is negative, i.e. $\Phi((x,y))=y$. We train a 2-layer SIREN with point constraints (A) and Eikonal term (C2), and then add our divergence constraint for comparison. For point constraints we sample 10 points on the lines $y=-1$, $y=0$ and $y=1$ (for $x\in\{0, 1\}$), and for Eikonal and divergence constraints we sample on a $n\times n$ grid. We repeat the experiment for $n\in\{20, 200\}$ and evaluate our learned functions on a finer $m\times m$ grid ($m=1000$). We perform 20 repetitions of these four experiments with different random initializations.

The results can be seen in Table~\ref{tab:toyDEproblem}, where we report the mean Dirichlet energy $\bar{E}=\overline{\|\nabla f\|_2^2}$, the mean gradient norm $\overline{\|\nabla f\|_2}$ and its standard deviation $\sigma(\|\nabla f\|_2)$. We also visualize the learnt SDFs for one of the repetitions in \figref{fig:toyDEProblemSmall}. The results show that adding the divergence constraint both reduces the mean Dirichlet energy, indicating that the learned function is less variable, and makes the function more faithful to the Eikonal constraint. This can also be seen in the visualization, where the level set contours are less variable and have more consistent spacing. More visualizations are provided in the supplemental.

\begin{table}[tb]\scriptsize
        \centering
        \setlength{\tabcolsep}{2pt}
        \begin{tabular}{l c | c c c }
        \toprule
            Loss        & grid size & $\bar{E}=\overline{\|\nabla f\|_2^2}$ & $\overline{\|\nabla f\|_2}$ & $\sigma(\|\nabla f\|_2)$ \\
         \hline
            A+C2        & 20x20     & 4.32 $\pm$ 3.07 & 1.63 $\pm$ 0.51 & 1.05 $\pm$ 0.56  \\
            A+C2 + Div  & 20x20     & 3.37 $\pm$ 3.89 & 1.46 $\pm$ 0.76 & 0.69 $\pm$ 0.43  \\
            A+C2        & 200x200   & 3.58 $\pm$ 3.55 & 1.51 $\pm$ 0.54 & 0.85 $\pm$ 0.52 \\
            A+C2 + Div  & 200x200   & \textbf{1.37 $\pm$ 1.35} & \textbf{1.04 $\pm$ 0.39} & 0.22 $\pm$ 0.33 \\
         \bottomrule
        \end{tabular}
        \caption{Results on the toy problem. Comparing mean Dirichlet energy and mean gradient norm. The divergence term significantly improves performance.}
        \label{tab:toyDEproblem}
\end{table}

\begin{figure}
        \centering
        \includegraphics[scale=0.35, clip]{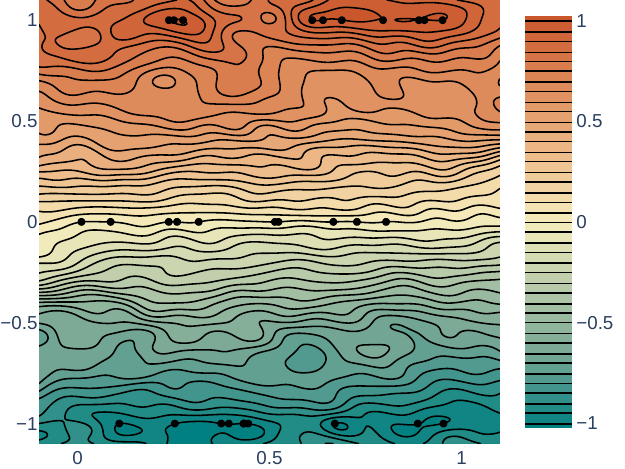}
        \includegraphics[scale=0.35, clip]{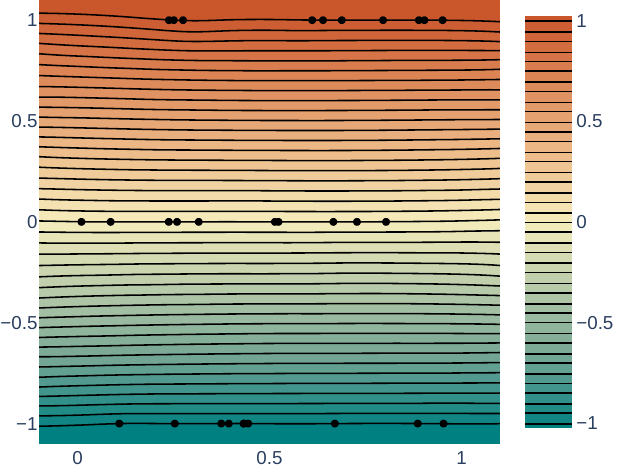}\\
        \caption{SDF contour lines for the toy problem on a 200x200 grid. With (right) and without (left) the divergence loss term.}
        \label{fig:toyDEProblemSmall}
\end{figure}

\section{Experiments}
\label{Sec:results}
We evaluate our method on the tasks of surface reconstruction and shape space learning. We test the former on the Surface Reconstruction Benchmark (SRB) \cite{berger2013benchmark}, ShapeNet \cite{chang2015shapenet} and on a scene from Sitzmann~et~al.~\cite{sitzmann2020siren}, and the latter on DFaust \cite{bogo2017dynamic}. In both tasks, we extract the mesh representing the zero level set of the shape INR using the \textit{marching cubes} algorithm~\cite{lorensen1987marching}. We follow the same mesh generation procedure as in IGR \cite{gropp2020igr} and use a grid whose shortest axis is $512$ elements, tightly fitted onto the shape (adapting the grid range to the input scan bounding box). Unless otherwise specified, we use the same architecture (5 layers, 256 units) and hyperparameters as SIREN~\cite{sitzmann2020siren}. In the supplemental material, we provide full implementation details, extended results, ablations, additional experiments and additional visualizations. 


\textbf{Evaluation metrics.} To compare between two point sets we use the Chamfer ($d_C$) and Hausdorff ($d_H$) distances. 
We follow IGR~\cite{gropp2020igr} and sample 1M points on the reconstruction and compare to the GT and scan point clouds.
For ShapeNet we follow NSP~\cite{williams2020neural_splines} and report the squared Chamfer and Intersection over Union (IoU). See the supplemental for definitions and further details.   

\subsection{Surface reconstruction}
\label{sec:results:surface_recon}
The task of surface reconstruction of unoriented point clouds is specified as follows. Given an input point cloud $\mathcal{X} \subset \mathbb{R}^3$ find  the surface $\mathcal{S}$ from which $\mathcal{X}$ was sampled. The point set is usually acquired by a 3D scanner which introduces various types of data corruptions, e.g., noise, occlusions, nonuniform sampling. We evaluate the performance of our method on the Surface Reconstruction Benchmark dataset \cite{berger2013benchmark} and on ShapeNet \cite{chang2015shapenet}.

We compare our results to recent prominent methods, most of which have shown to outperform classical methods, including SAL \cite{atzmon2020sal}, IGR \cite{gropp2020igr}, SIREN \cite{sitzmann2020siren}, DGP \cite{williams2019DGP} and NSP \cite{williams2020neural_splines}. 
Note that apart from SAL, these methods report results using normal vector supervision, therefore for fair comparison we evaluate on SIREN and IGR without the normal vector loss term (note that the same cannot be done with DGP and NSP). 
Furthermore we compare to results reported from the concurrent work PHASE \cite{lipman2021phase}, who evaluate on SRB. While PHASE uses normal information, they also provide a version without normals that uses Fourier features, PHASE+FF, and a baseline for IGR without normal information, IGR+FF.

\textbf{Surface Reconstruction Benchmark (SRB).}
We use the simulated scan and ground truth data provided in Williams~et~al. \cite{williams2019DGP} (freely available for academic use). The dataset contains five shapes, each with it own challenging traits, e.g., complex topology, high level of detail, missing data, and different feature sizes. In \tabref{tab:results:surf-recon-summary} we report the average Chamfer and Hausdorff distance for the shapes, and since the shapes have varying difficulties, we also report the mean deviation from the best performing method for each shape.
It shows that our method is consistently better than SoTA methods on both metrics when normal information is not available.
Qualitative results for a subset of the dataset are shown in \figref{fig:surface_recon_qual_results}. In the absence of normal information, SIREN and IGR struggle to converge to the correct zero level set and produce undesired artifacts (ghost geometries). 
DiGS, on the other hand, is able to remove such artifacts. When compared to the version with normal supervision added there is not much change other than DiGS being slightly smoother. In fact, DiGS manages to get similar results to the methods that use normal supervision.
For extended results (results per shape, results with normal supervision and more metrics), visualization of all shapes and a video showing the shapes from multiple angles, see the supplemental material. 

\begin{table}[]\scriptsize
    \centering
    \begin{tabular}{l c c c c}
         \toprule          Method    & $d_{C}$ & $d_H$ & $\Delta d_C$ & $\Delta d_H$  \\
            \hline
            IGR wo n        & 1.38 & 16.33 & 1.20 & 12.84 \\
            SIREN wo n      & 0.42 & \phantom{1}7.67  & 0.23 & \phantom{1}4.18  \\
            SAL \cite{atzmon2020sal}    & 0.36 & \phantom{1}7.47  & 0.18 & \phantom{1}3.99  \\
            IGR+FF \cite{lipman2021phase}  & 0.96 & 11.06 & 0.78 & \phantom{1}7.58  \\
            PHASE+FF \cite{lipman2021phase}  & 0.22 & \phantom{1}4.96  & 0.04 & \phantom{1}1.48  \\
            Our DiGS            & \textbf{0.19} & \textbf{\phantom{1}3.52}  & \textbf{0.00} & \textbf{\phantom{1}0.04}  \\
         \bottomrule
    \end{tabular}
    \caption{Results on the Surface Reconstruction Benchmark~\cite{berger2013benchmark}.
    We compare against other methods without normal information, for all methods see the supplemental.
    We report the mean Chamfer $d_{C}$ and Hausdorff distance $d_H$ to the GT scans and their mean deviation from the best performing method ($\Delta d_C$ and $\Delta d_H$).}
    \label{tab:results:surf-recon-summary}
\end{table}
\begin{table}[]\scriptsize\setlength{\tabcolsep}{3pt}
    \centering
    \begin{tabular}{l c c c c c c}
         \toprule
          & \multicolumn{3}{c}{squared Chamfer $\downarrow$} & \multicolumn{3}{c}{IoU $\uparrow$}\\
          Method    & mean & median & std & mean & median & std \\
            \hline
            SPSR \cite{kazhdan2013screened_poisson} 		& 2.22e-4 & 1.70e-4 & 1.76e-4 & 0.6340 & 0.6728 & 0.1577 \\
            IGR \cite{gropp2020igr}					& 5.12e-4 & 1.13e-4 & 2.15e-3 & 0.8102 & 0.8480 & 0.1519 \\
            SIREN \cite{sitzmann2020siren}					& 1.03e-4 & 5.28e-5 & 1.93e-4 & 0.8268 & 0.9097 & 0.2329 \\
            FFN \cite{tancik2020fourfeat}		& 9.12e-5 & 8.65e-5 & 3.36e-5 & 0.8218 & 0.8396 & 0.0989 \\
            NSP	\cite{williams2020neural_splines}					& \textbf{5.36e-5} & 4.06e-5 & 3.64e-5 & 0.8973 & 0.9230 & 0.0871 \\
            DiGS + n 				& 2.74e-4 & \textbf{2.32e-5} & 9.90e-4 & \textbf{0.9200} & \textbf{0.9774} & 0.1992 \\
            \hline
            SIREN wo n				& 3.08e-4 & 2.58e-4 & 3.26e-4 & 0.3085 & 0.2952 & 0.2014 \\
            SAL \cite{atzmon2020sal}	& 1.14e-3 & 2.11e-4 & 3.63e-3 & 0.4030 & 0.3944 & 0.2722\\
            Our DiGS					& \textbf{1.32e-4} & \textbf{2.55e-5} & 4.73e-4 & \textbf{0.9390} & \textbf{0.9764} & 0.1262\\
         \bottomrule
    \end{tabular}
    \caption{Surface Reconstruction results on ShapeNet~\cite{chang2015shapenet}. Methods above the line use ground truth normal information, and methods below do not. The mean, median and standard deviation of the squared Chamfer distance and IoU of all 260 shapes are reported.}
    \label{tab:results:shapenet-summary}
\end{table}
\begin{figure} \scriptsize
     \centering
     \setlength\tabcolsep{2pt} 
     \begin{tabular}{c c c | c c c}
     
         \includegraphics[width=\recviscol\linewidth]{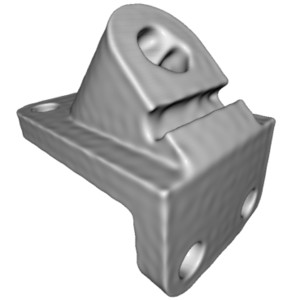}
         &
         \includegraphics[width=\recviscol\linewidth]{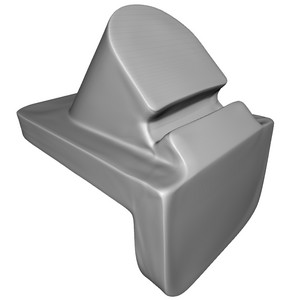}
         &
         \includegraphics[width=\recviscol\linewidth]{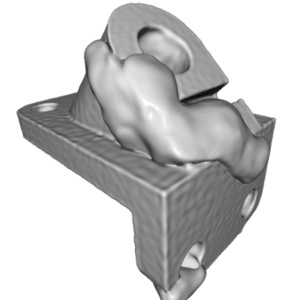}
         & 
         \includegraphics[width=\recviscol\linewidth]{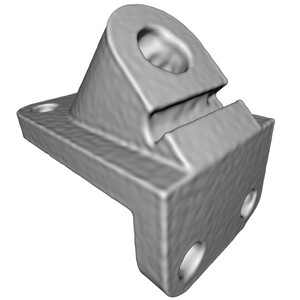}
         & 
         \includegraphics[width=\recviscol\linewidth]{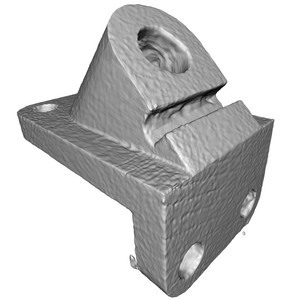}
         & 
         \includegraphics[width=\recviscol\linewidth]{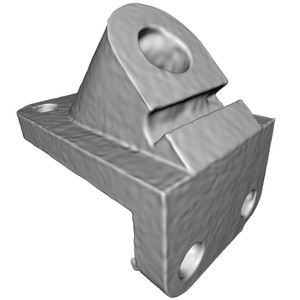}
          \\

         \includegraphics[width=\recviscol\linewidth]{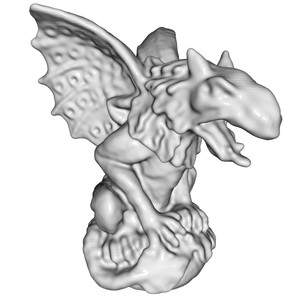} 
         &
         \includegraphics[width=\recviscol\linewidth]{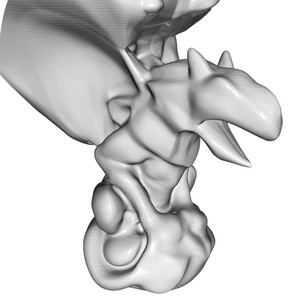} 
         &
         \includegraphics[width=\recviscol\linewidth]{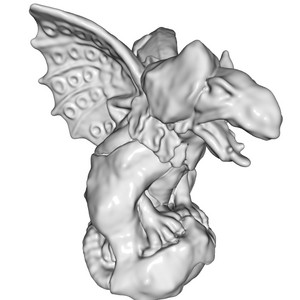} 
         & 
         \includegraphics[width=\recviscol\linewidth]{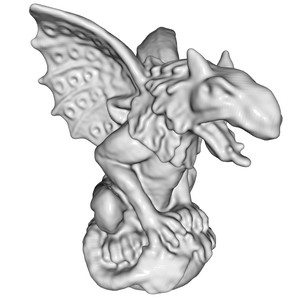}
         & 
         \includegraphics[width=\recviscol\linewidth]{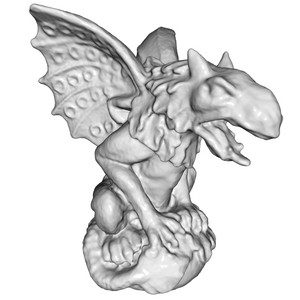}
         & 
         \includegraphics[width=\recviscol\linewidth]{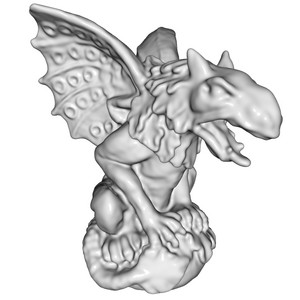} 
         \\
         
         \textbf{Our DiGS} & IGR wo n& SIREN wo n  & \textbf{Our DiGS} + n & IGR & SIREN \\
     \end{tabular}
        \caption{Qualitative results of surface reconstruction on the anchor and gargoyle shapes from the Surface Reconstruction Benchmark \cite{berger2013benchmark} compared to state of the art approaches (IGR, SIREN) that use normal vectors as ground truth.  }
        \label{fig:surface_recon_qual_results}
\end{figure}
\begin{figure}
    \centering
     \setlength\tabcolsep{5pt}
     \renewcommand{\arraystretch}{0.0}
        \begin{tabular}{c c c c } 
        \includegraphics[scale=0.25, trim=32pt 61pt 32pt 69pt, clip]{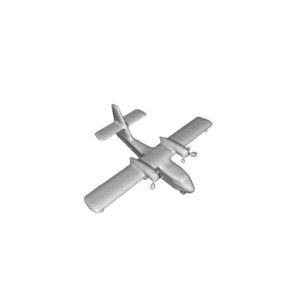} &
        \includegraphics[scale=0.25, trim=32pt 61pt 32pt 69pt, clip]{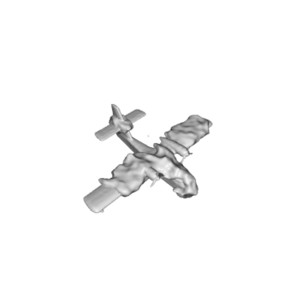} &
        \includegraphics[scale=0.25, trim=32pt 61pt 32pt 69pt, clip]{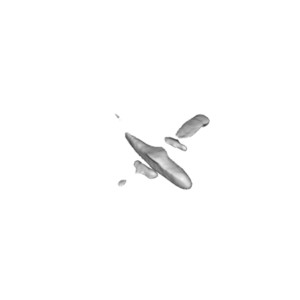} &
        \includegraphics[scale=0.25, trim=32pt 61pt 32pt 69pt, clip]{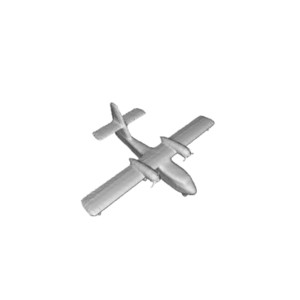}\\
        \includegraphics[scale=0.25, trim=32pt 20pt 32pt 49pt, clip]{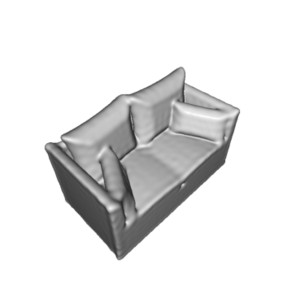} &
        \includegraphics[scale=0.25, trim=32pt 20pt 32pt 49pt, clip]{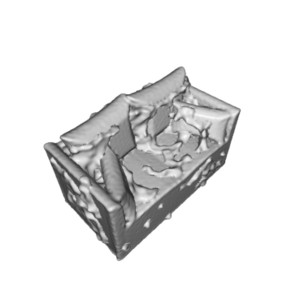} &
        \includegraphics[scale=0.25, trim=32pt 20pt 32pt 49pt, clip]{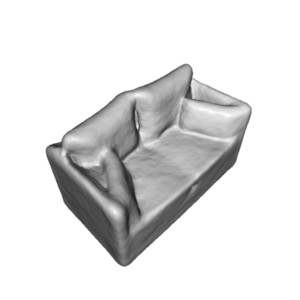} &
        \includegraphics[scale=0.25, trim=32pt 20pt 32pt 49pt, clip]{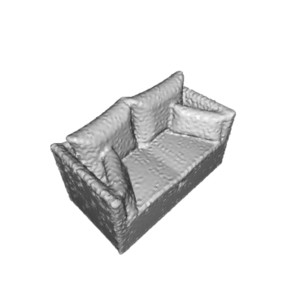}\\
        \textbf{Our DiGS} & SIREN wo n & SAL & NSP
        \end{tabular}
        \caption{Results for surface reconstruction on ShapeNet~\cite{chang2015shapenet}.}
        \label{fig:shapenet-results}
\end{figure}

\textbf{ShapeNet.}
The ShapeNet dataset contains 3D CAD models of a diverse range of objects. These shapes often have internal structure, inconsistent normals, and non-manifold meshes. We use the preprocessing and split of Williams~et~al.~\cite{williams2020neural_splines}, who evaluate on 20 shapes in each of 13 categories, and preprocess to make the normals consistent and internal structure to be manifold meshes. 
The results are reported in \tabref{tab:results:shapenet-summary} and we provide visualizations in \figref{fig:shapenet-results}. When adding normal supervision to DiGS, our method has a much better median squared Chamfer distance and median IoU among the 260 shapes.  We attribute this to our divergence term smoothing out the space, which does well on shapes without much internal structure. However for some shapes with internal structure (\eg, a loudspeaker with components inside or a sofa with structural beams) we get significant internal ghost geometry. As a result our mean squared Chamfer distance, while competitive with the other methods, is worse than methods such as SIREN. Note, however, that ths does not affect the IoU as much, our mean IoU is still better than other methods. When comparing without normals, DiGS has similar medians on both metrics to when normal supervision is added, however it has better means. We attribute this to having fewer internal ghost geometries when not attempting to fit normal vectors at internal points. 
Note that DiGS outperforms other methods that do not use normal supervision on both metrics. The improvement in IoU is significant, which we attribute to other methods failing to contain ghost geometry and being inconsistent with what is inside/outside the shape. On the other hand, DiGS appropriately deals with both of these challenges by its structured training procedure.

\textbf{Scene reconstruction.}
Qualitative results for scene reconstruction are presented in \figref{fig:scene_recon_qual_results}. Here we use eight layers with 512 units and train on the scene from Sitzmann~et~al.~\cite{sitzmann2020siren} which includes 10M oriented points. 
We train SIREN with and without the normal constraint, and the proposed DiGS method.  In this experiment we did not use a geometric initialization because, unlike the shape reconstruction task, the target surface is vastly different than a sphere, and instead only increase the frequencies. When training SIREN without normal supervision we observe many ghost geometries (SIREN wo n). On the other hand, DiGS is able to reconstruct the scene without these, though it produces a very smooth result. This is desirable in most planar regions, e.g., ceiling, floor and table, however, this trait has the drawback of smoothing out fine details, e.g., sofa legs and picture frame. 
\begin{figure}
     \centering
     \begin{tabular}{c c c }
         \includegraphics[width=0.25\linewidth]{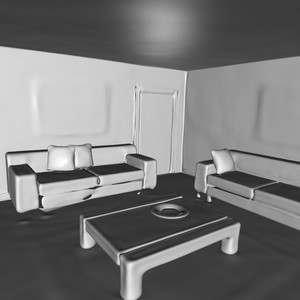} 
         &
         \includegraphics[width=0.25\linewidth]{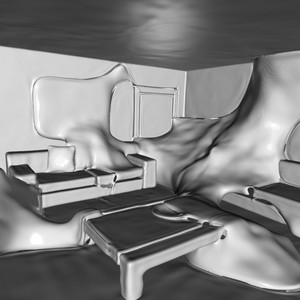}
         & 
         \includegraphics[width=0.25\linewidth]{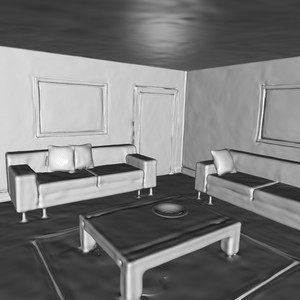} \\
         \textbf{Our DiGS} & SIREN wo n  & SIREN \\
     \end{tabular}
        \caption{Qualitative results of scene reconstruction. SIREN (right) provides high level of detail, however, when normal vectors are not available (center) the proposed divergence constraint (left) demonstrates a significant improvement.}
        \label{fig:scene_recon_qual_results}
\end{figure}
\begin{table}[]\scriptsize\setlength{\tabcolsep}{3pt}
    \centering
    \begin{tabular}{c c c c c c c c c}
        \toprule
             &\multicolumn{2}{c}{$d_{\vec{C}}$(reg, recon)} &\multicolumn{2}{c}{$d_{\vec{C}}$(recon, reg)} &\multicolumn{2}{c}{$d_{\vec{C}}$(scan, recon)} 
             &\multicolumn{2}{c}{$d_{\vec{C}}$(recon, scan)}\\
             & Mean & Median & Mean & Median & Mean & Median & Mean & Median\\ 
         \hline
            IGR \cite{gropp2020igr}  & 1.053 & 0.509  & 4.916 & 0.540 & 1.054 & 0.509 & 4.916 & 0.540 \\
            DiGS + n          & \textbf{0.568} & \textbf{0.458}  & \textbf{1.834} & \textbf{0.461} & \textbf{0.568} & \textbf{0.458} & \textbf{1.834} & \textbf{0.461} \\ 
            \hline
            IGR wo n        & 3.745 & 2.689 & \textbf{12.149} & \textbf{9.027} & 3.745 & 2.687 & \textbf{12.147} & \textbf{9.026} \\ 
            Our DiGS            & \textbf{0.856}  & \textbf{0.707} & 12.318 & 9.202 & \textbf{0.856} & \textbf{0.707} & 12.319 & 9.204 \\ 
        \bottomrule
    \end{tabular}
    \caption{Quantitative results on DFaust~\cite{bogo2017dynamic}. We compare the mean and median of the one-sided Chamfer distances (reported as $\times 10^2$) between the ground truth registration meshes (reg), reconstructions (recon) and raw input scans (scan). 
}
    \label{tab:dfaust-shapespace}
\end{table}
\begin{figure} \scriptsize
     \centering 
     \setlength\tabcolsep{0pt}
     \renewcommand{\arraystretch}{0.0}
        \begin{tabular}{c c c c c} 
        \includegraphics[width=0.19\linewidth, trim=94pt 67pt 62pt 65pt, clip]{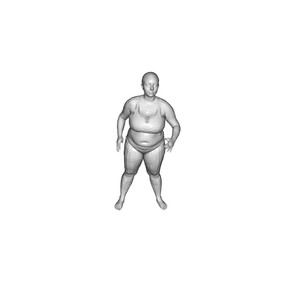}&
        \includegraphics[width=0.19\linewidth, trim=94pt 67pt 62pt 65pt, clip]{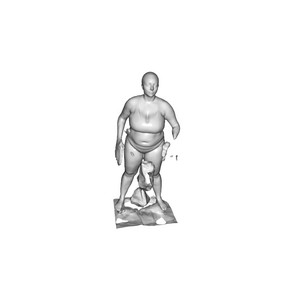}&
        \includegraphics[width=0.19\linewidth, trim=94pt 67pt 62pt 65pt, clip]{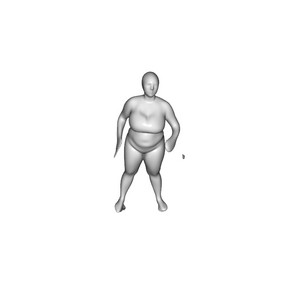}&
        \includegraphics[width=0.19\linewidth, trim=94pt 67pt 62pt 65pt, clip]{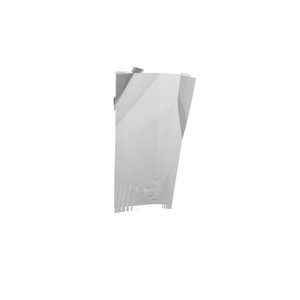}&
        \includegraphics[width=0.19\linewidth, trim=94pt 67pt 62pt 65pt, clip]{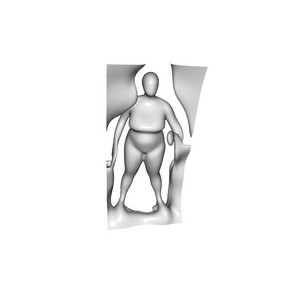}\\
        \includegraphics[width=0.19\linewidth, trim=94pt 67pt 62pt 65pt, clip]{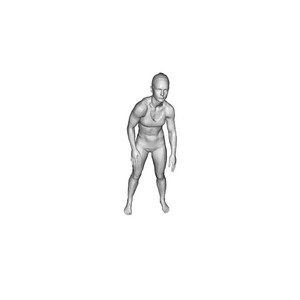}&
        \includegraphics[width=0.19\linewidth, trim=94pt 67pt 62pt 65pt, clip]{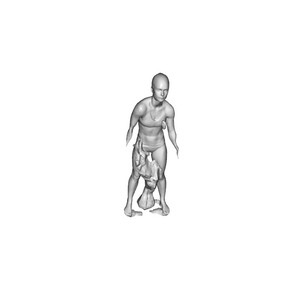}&
        \includegraphics[width=0.19\linewidth, trim=94pt 67pt 62pt 65pt, clip]{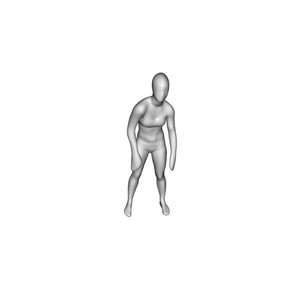}&
        \includegraphics[width=0.19\linewidth, trim=94pt 67pt 62pt 65pt, clip]{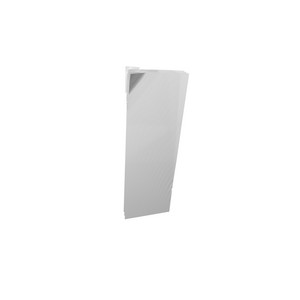}&
        \includegraphics[width=0.19\linewidth, trim=94pt 67pt 62pt 65pt, clip]{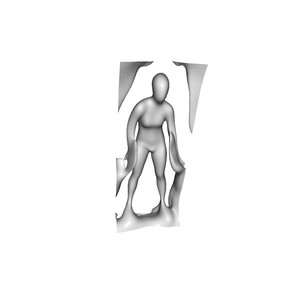}\\
        Ground Truth & IGR & \textbf{Our DiGS} + n & IGR wo n & \textbf{Our DiGS} \\
     \end{tabular}
    \caption{Qualitative results for the shape space experiment on the DFaust dataset \cite{bogo2017dynamic}. Each row is a single pose of a different human from the test set.}
    \label{fig:reduced_dfaust_shapespace_comparisons}
\end{figure}

\subsection{Shape Space Learning}
Shape space experiments requires training a single model to learn to represent multiple shapes from a class of related shapes. We use the DFAUST dataset \cite{bogo2017dynamic} which consists of ${\sim}40k$ raw human scans of ten humans at different time points during multiple types of activities. The scans are noisy and often incomplete (contains missing object surfaces). The dataset also provides ground truth registrations for each scan. Following IGR's \cite{gropp2020igr} setup, we use their 75\%/25\% train-test split, and we use DiGS as an autodecoder \cite{park2019deepsdf}. Thus at training time, multiple scans of the humans are learnt with a separate latent vector for each pose, and at test time, unseen poses of those same humans are reconstructed by jointly estimating a suitable latent vector. Note this is a much harder problem than surface reconstruction, the model has to learn to be able to fit to multiple shapes given different (learnable) latent codes, stretching the test of the model's capacity.

Table~\ref{tab:dfaust-shapespace} shows the quantitative results of one-sided Chamfer distances\footnote{For IGR we use the code and weights provided by \cite{gropp2020igr}. Note that some methods report the squared Chamfer~\cite{atzmon2020sal}. After correspondence with the authors of SAL and SALD~\cite{atzmon2020sal, Atzmon_2020_CVPR_SALD}, we were unable to reproduce their results and therefore these baseline are omitted.}.
In particular, for registration \& scan to reconstruction DiGS+n has a lower mean than IGR, showing it fits better to the ground truth and input surfaces and does not miss ground truth regions. For reconstruction to registration \& scan the mean distances increase for both IGR and DiGS+n, indicating that both models create ghost geometry, however DiGS+n creates much less. Both of these are demonstrated in Figure~\ref{fig:reduced_dfaust_shapespace_comparisons}: despite IGR sometimes having more detail (e.g., the face) it has multiple ghost geometries and significant missing parts (e.g., forearms). DiGS+n also achieves slightly smaller median distances.

For the methods without normals supervision, DiGS clearly outperforms IGR wo n, the latter is not able to converge. This can be seen in Figure~\ref{fig:reduced_dfaust_shapespace_comparisons}, where the reconstructions do not even resemble the initial humans. DiGS on the other hand captures the human shapes, but is oversmoothed and has large ghost geometries framing the humans. Looking at Table~\ref{tab:dfaust-shapespace} the means and medians for registration \& scan to reconstruction are still quite small, showing that despite oversmoothing the detail DiGS learns to fit to the unseen test surfaces quite well. Due to the ghost geometry, for the reconstruction to registration \& scan distances DiGS has very large values, similar to IGR wo n.



\subsection{Limitations}
DiGS is mainly limited in two aspects: (1) capturing very thin structures, e.g., the left sofa's legs in \figref{fig:scene_recon_qual_results}, and (2) smoothing effects, e.g., the pictures on the walls in \figref{fig:scene_recon_qual_results}. This is expected since these regions contain only very few points and without the normal vector information, uncovering the underlying surface is more challenging.

\section{Conclusion}
\label{Sec:Conclusions}

We introduce DiGS, a divergence guided shape implicit neural representation approach for raw unoriented point clouds without any pre-processing. Additionally, we derive a geometrically motivated initialization for sinusoidal representation networks while preserving high frequencies. Finally, we demonstrate that DiGS has the ability to reconstruct shapes of high fidelity with some limitations of smoothing and thin features reconstruction. We report state of the art results compared to other methods that do not use normal supervision, and show that our method is comparative to methods that do use such supervision.

All existing methods, including ours, struggle with missing data and thin features. Future work can explore extensions to use local self similarity to deal better with these regions. Point cloud density is also an important factor in the representation power and additional work can be done to mitigate density effects. 
Lastly, shape representation networks are highly dependant on the initialization and further work should be done to explore this direction.

\textbf{Potential societal impact.}
The proposed DiGS approach enables accurate representation of 3D shapes from 3D point cloud data in a deep learning framework. Many down stream tasks may be enabled by DiGS, including avatar creation and computer aided design (CAD). These applications may be leveraged for negative and positive outcomes. For example, DiGS may be extended for generative tasks and enable novel approaches for shape generation. This has potential misuses including digital impersonation without consent and unauthorized reproduction of mechanical designs. This ties into DeepFakes which were discussed in depth in a recent review on neural rendering \cite{tewari2020state}.  
\noindent\textbf{Acknowledgements}: This work was funded by The European Union’s Horizon 2020 research and innovation programme under the Marie Sklodowska-Curie grant agreement No 893465. 

\nocite{*}
{
    \small
    \bibliographystyle{ieee_fullname}
    \bibliography{long,references}
}

\newpage
\setcounter{section}{0}
\renewcommand{\thesection}{\Alph{section}}

\twocolumn[\section*{\centering DiGS : Divergence guided shape implicit neural representation for unoriented point clouds}\vspace{0.5in}]

\section{Supplementary Material}
\label{Sec:appendix}
Here we provide supplementary material for the proposed divergence guided shape implicit neural representation. In \secref{appx:sec:theory} we discuss SDF properties, theory behind the proposed divergence  constraint, and a second order supervised constraint. In \secref{sec:appx:approach_init} we provide proofs and additional visualuzations for the proposed geometric initializations, as well as visualizations of our overall training procedure.  In \secref{sec:appx:additoinal_results} we provide additional experimental results. Finally, we provide a high resolution video showcasing the performance of our method in multiple scenarios at \href{https://drive.google.com/file/d/1CuP8KN93JpzWY-597g0945r5VCIEOi3D/view?usp=sharing}{URL}. 


\subsection{SDF learning theory and the divergence term}
\label{appx:sec:theory}
\subsubsection{SDF properties}
As explained in \secref{Sec:approach_divergence}, current losses enforce two properties of SDFs, the function should be zero on the surface and the gradient of the norm should be one everywhere. In particular they are looking at two scalar fields related to the SDF function: the function itself, and the gradient norm field. On the other hand we also considered two more scalar fields, the divergence and curl of the gradient vector field of the SDF function. In \figref{fig:appx:gt-vis} we visualise the ground truth value of these four fields for three 2D shapes. We can see straight away that the curl is zero everywhere (as explained in \secref{Sec:approach_divergence}), and the gradient norm is one everywhere. The divergence on the other hand has very low magnitude in most areas, spiking sharply at points such as centres, skeletons or corners of the shape, and diffusing quickly from there. 

\begin{figure*}
     \centering
     \resizebox{\textwidth}{!}{
     \begin{tabular}{c c c c c}
         \rotatebox[origin=c]{90}{Circle}
         &
         \raisebox{-0.5\height}{\includegraphics[scale=0.2, trim=20pt 20pt 20pt 20pt, clip]{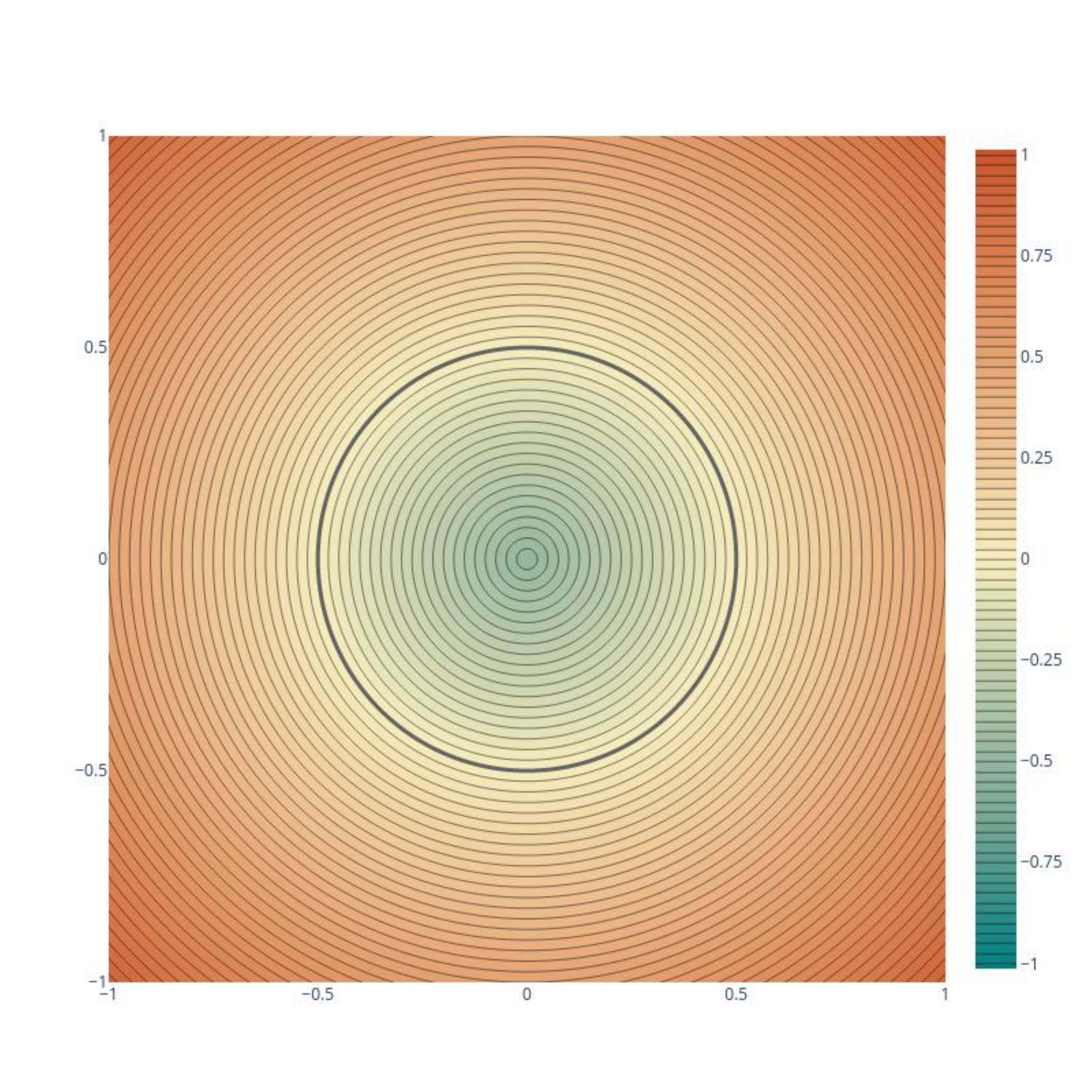}}
         &
         \raisebox{-0.5\height}{\includegraphics[scale=0.2, trim=20pt 20pt 20pt 20pt, clip]{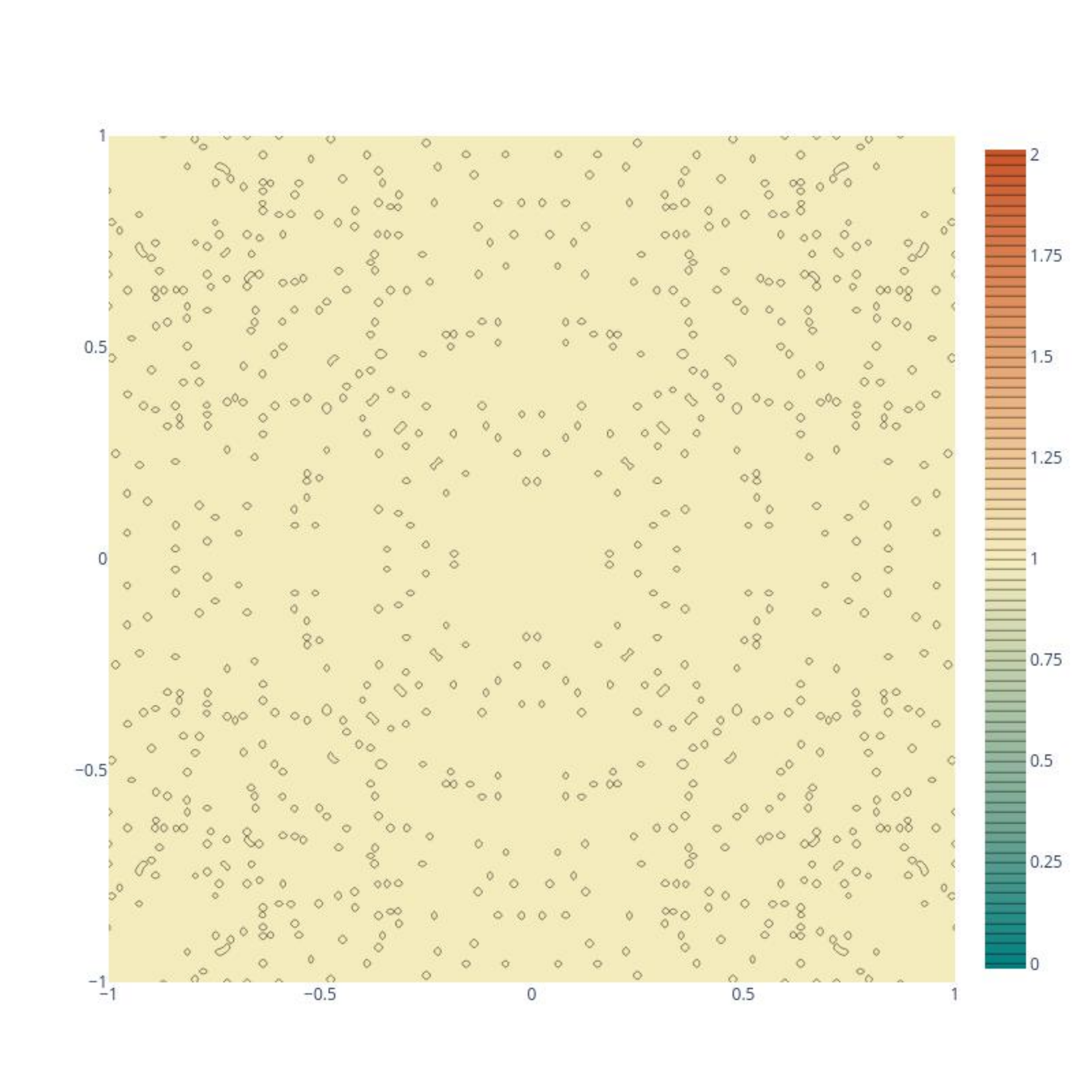}}
         &
         \raisebox{-0.5\height}{\includegraphics[scale=0.2, trim=20pt 20pt 20pt 20pt, clip]{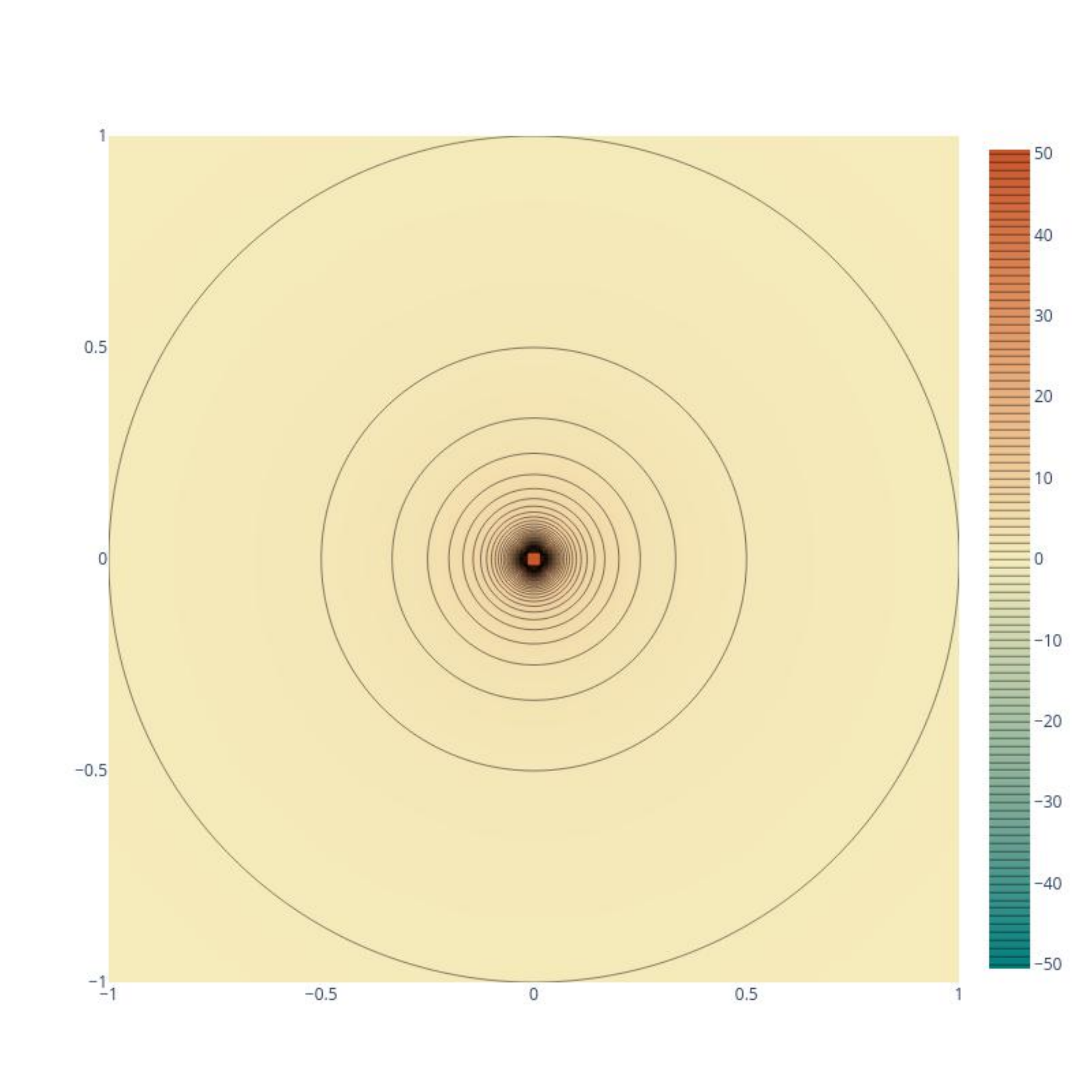}}
         &
         \raisebox{-0.5\height}{\includegraphics[scale=0.2, trim=20pt 20pt 20pt 20pt, clip]{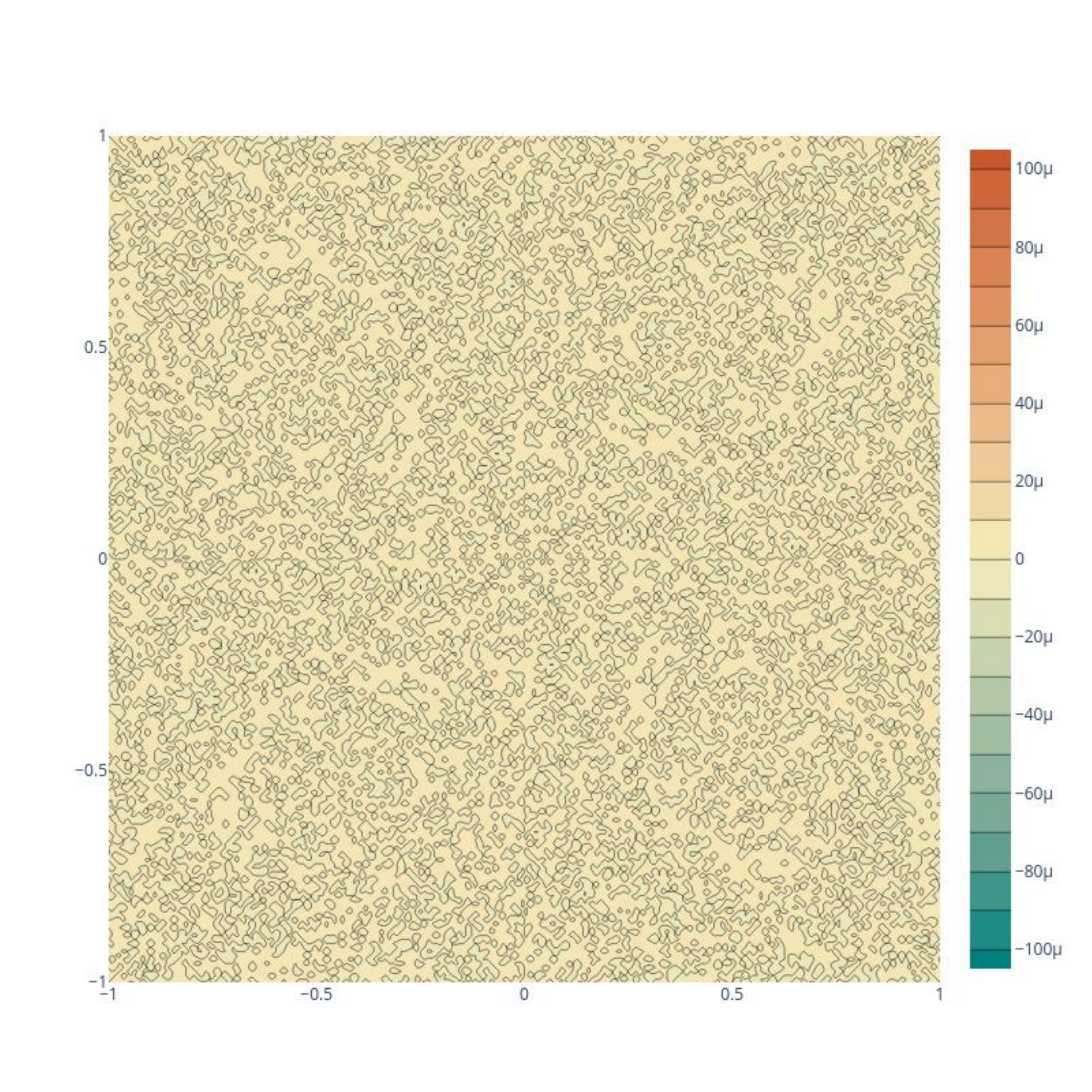}}\\
         \rule{0pt}{0.05ex} &&& \\  
         \rotatebox[origin=c]{90}{L}
         &
         \raisebox{-0.5\height}{\includegraphics[scale=0.2, trim=20pt 20pt 20pt 20pt, clip]{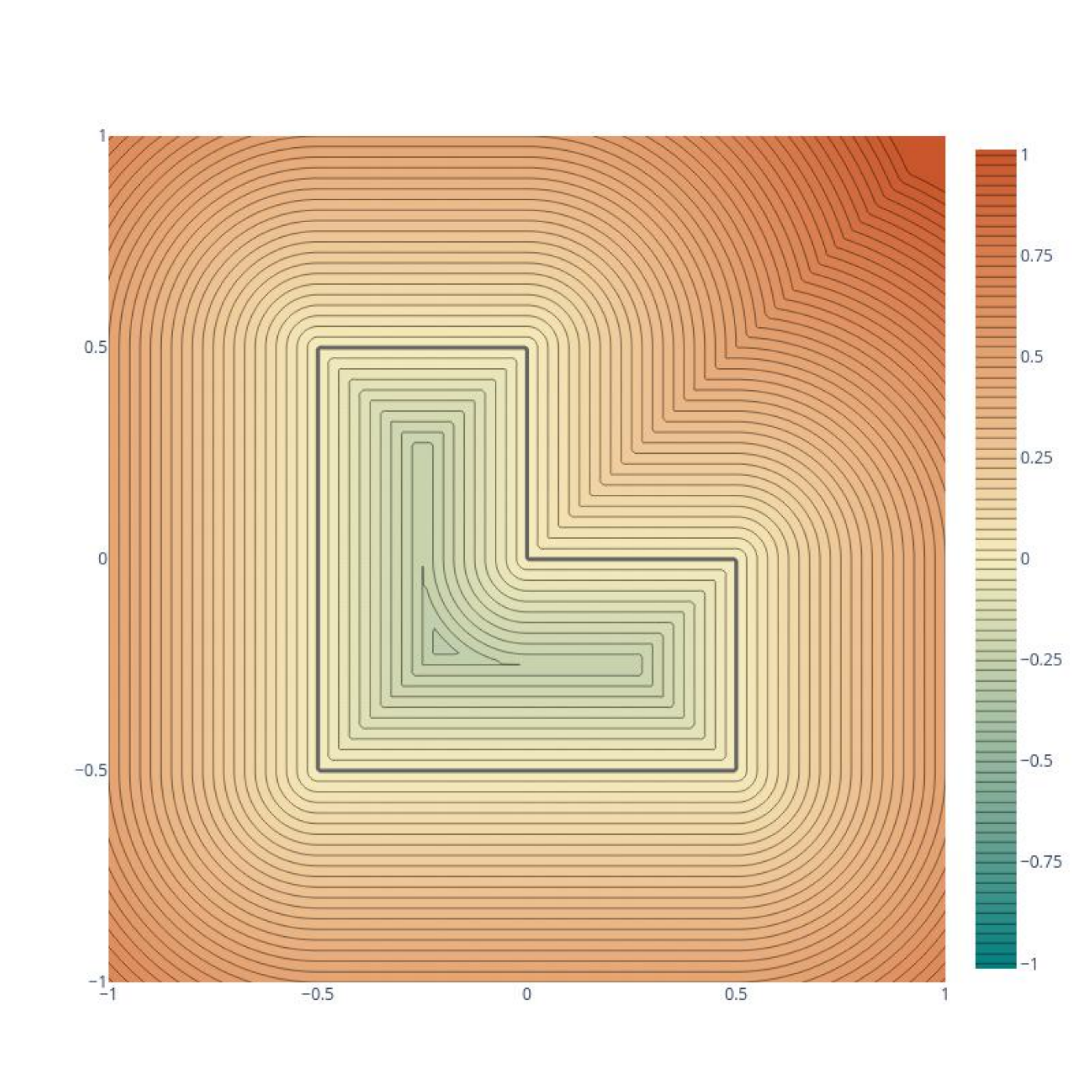}}
         &
         \raisebox{-0.5\height}{\includegraphics[scale=0.2, trim=20pt 20pt 20pt 20pt, clip]{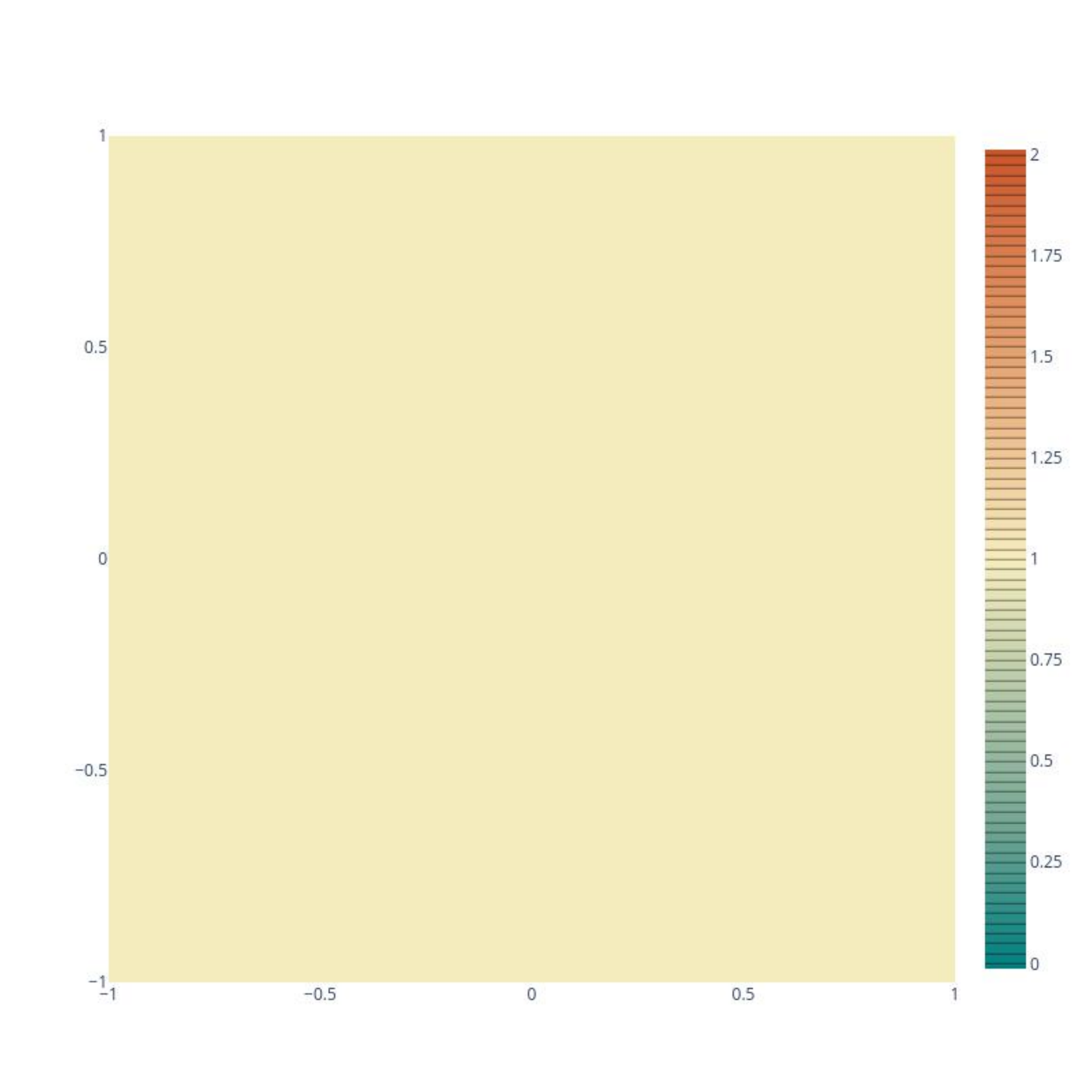}}
         &
         \raisebox{-0.5\height}{\includegraphics[scale=0.2, trim=20pt 20pt 20pt 20pt, clip]{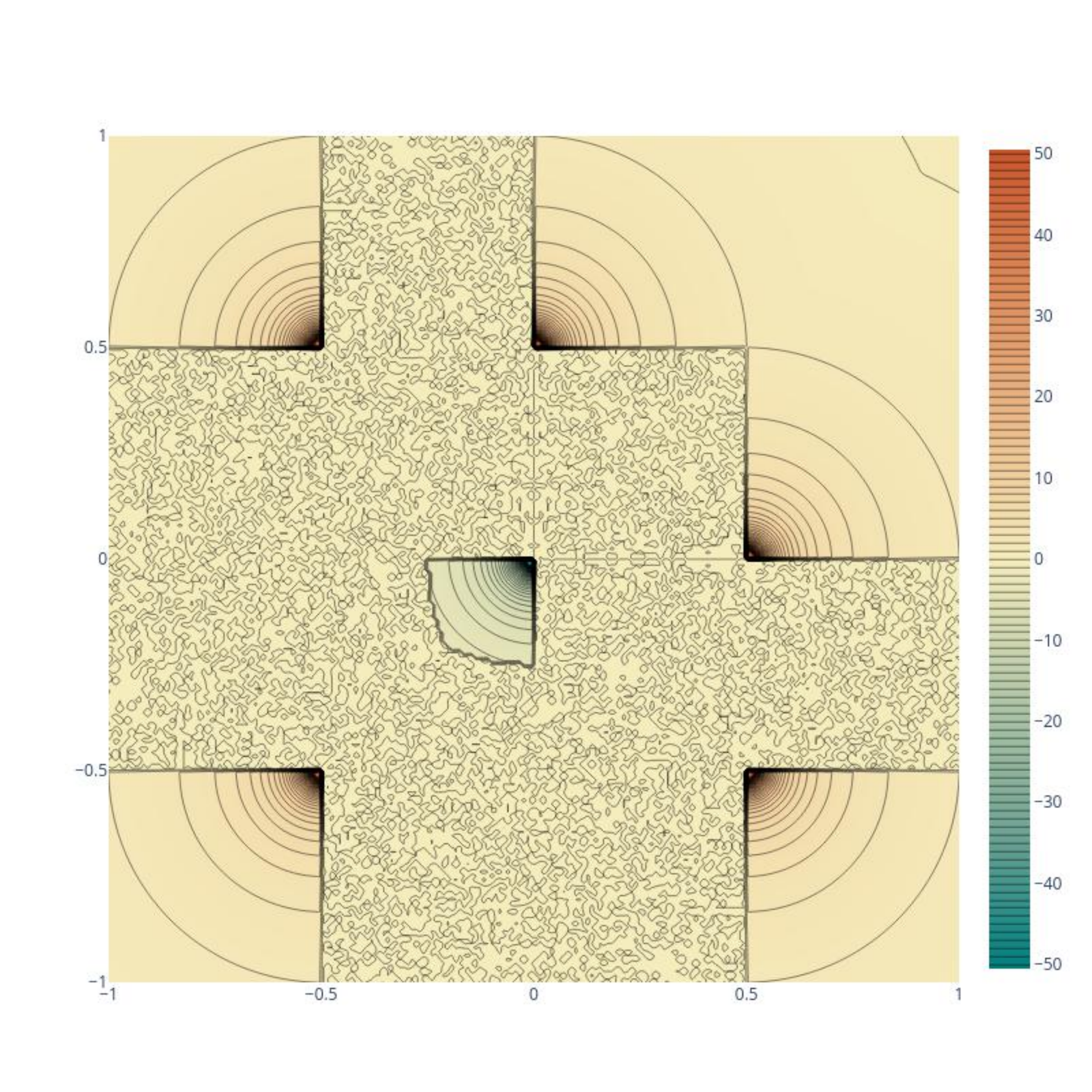}}
         &
         \raisebox{-0.5\height}{\includegraphics[scale=0.2, trim=20pt 20pt 20pt 20pt, clip]{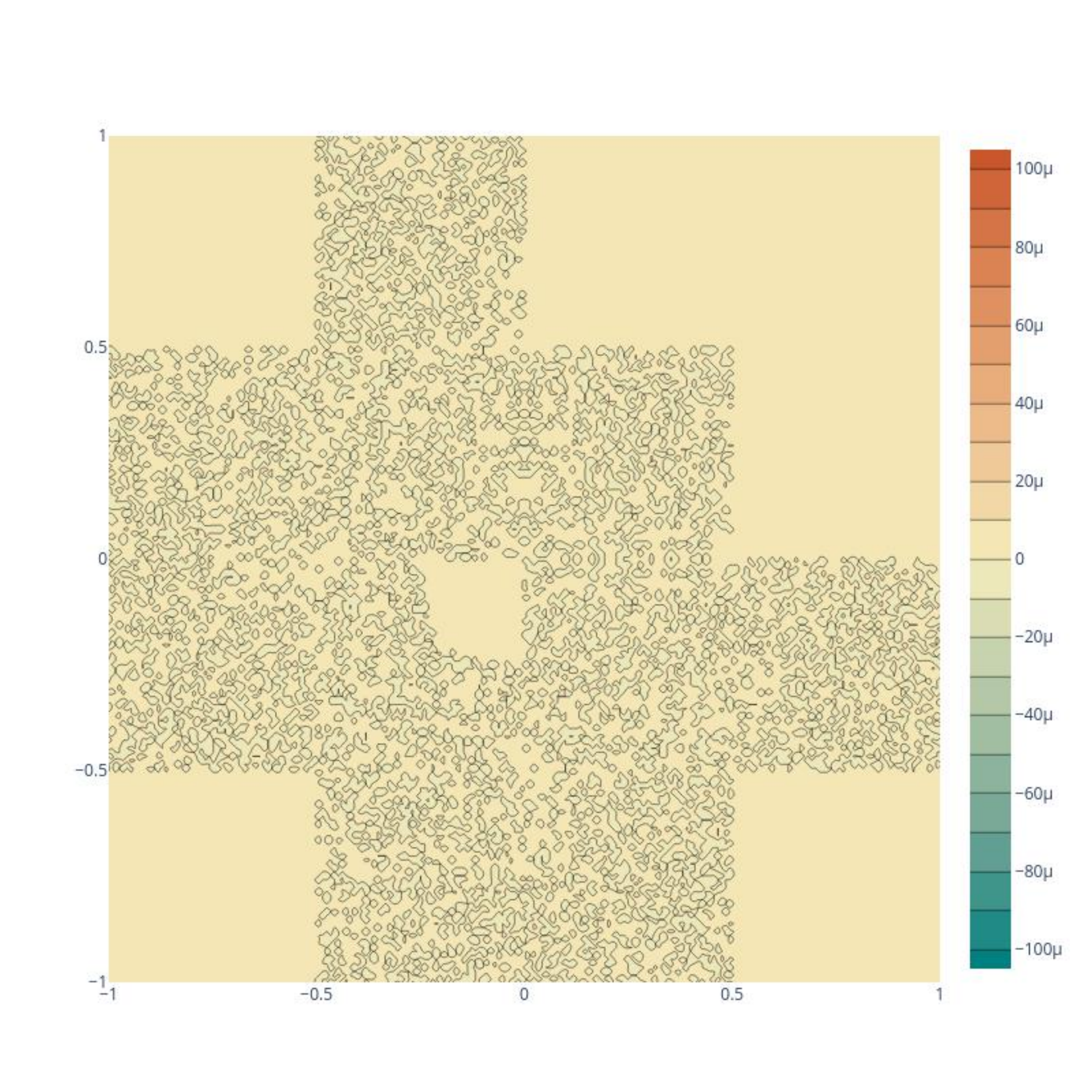}}\\
         \rule{0pt}{0.05ex} &&& \\         
         
         \rotatebox[origin=c]{90}{Snowflake}
         &
         \raisebox{-0.5\height}{\includegraphics[scale=0.2, trim=20pt 20pt 20pt 20pt, clip]{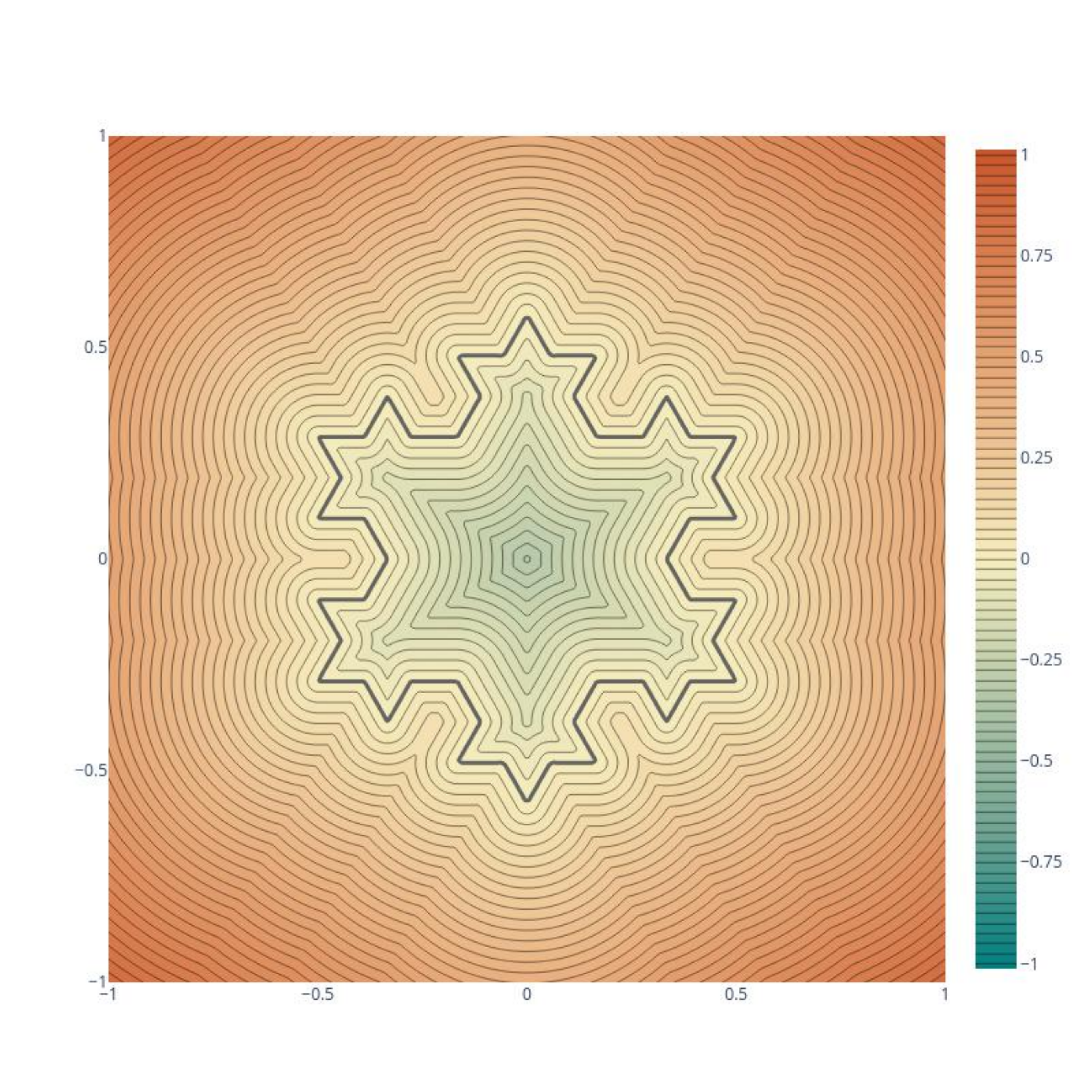}}
         &
         \raisebox{-0.5\height}{\includegraphics[scale=0.2, trim=20pt 20pt 20pt 20pt, clip]{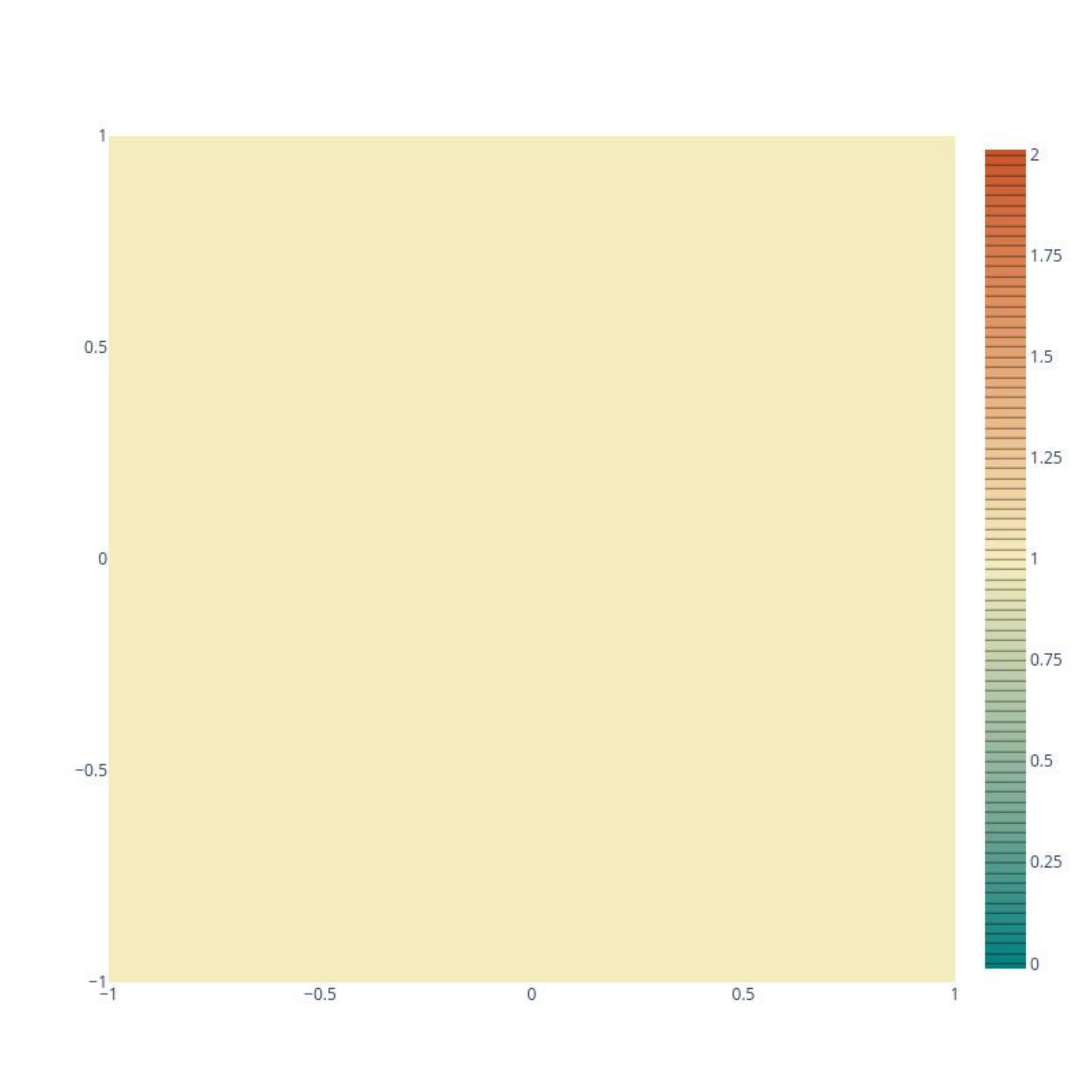}}
         &
         \raisebox{-0.5\height}{\includegraphics[scale=0.2, trim=20pt 20pt 20pt 20pt, clip]{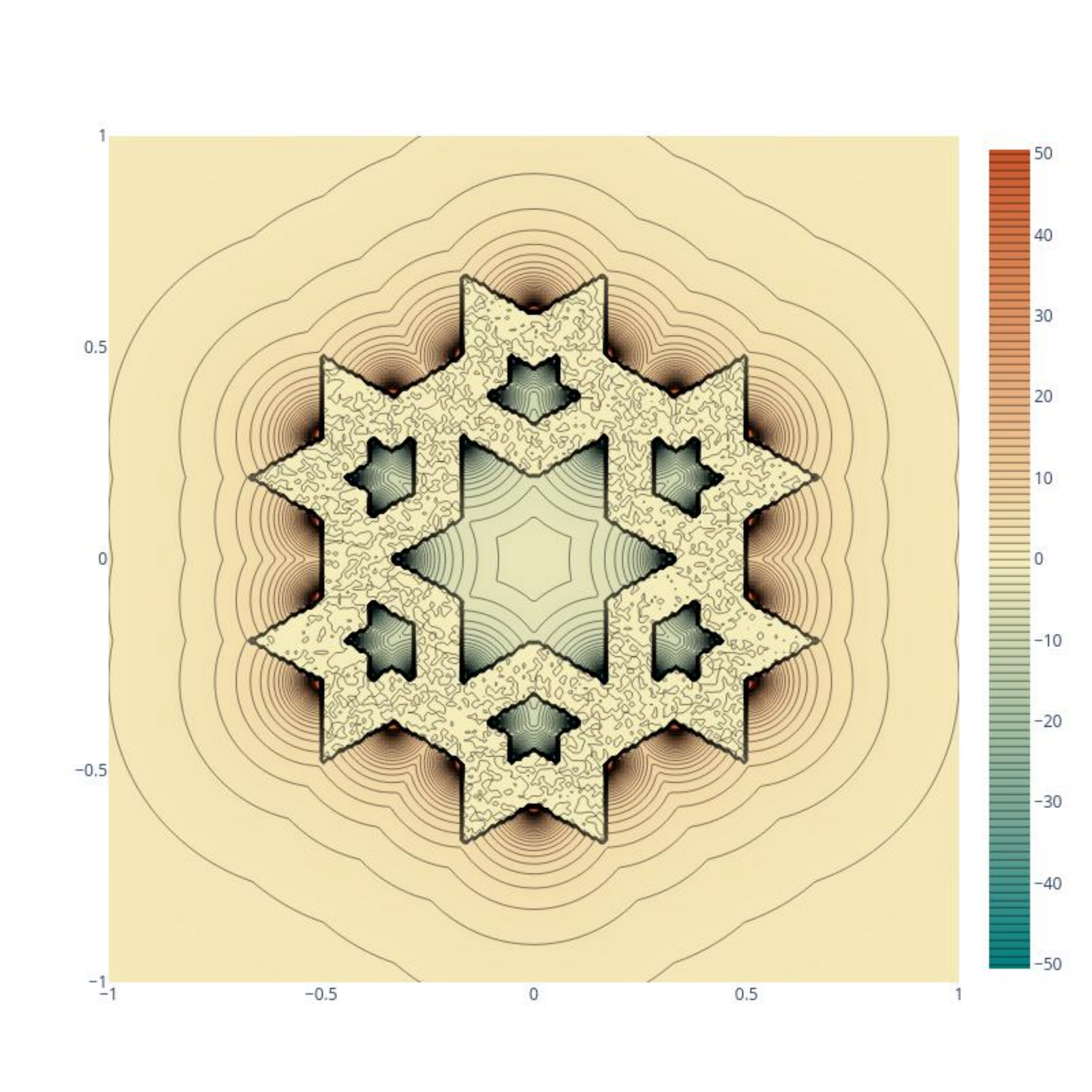}}
         &
         \raisebox{-0.5\height}{\includegraphics[scale=0.2, trim=20pt 20pt 20pt 20pt, clip]{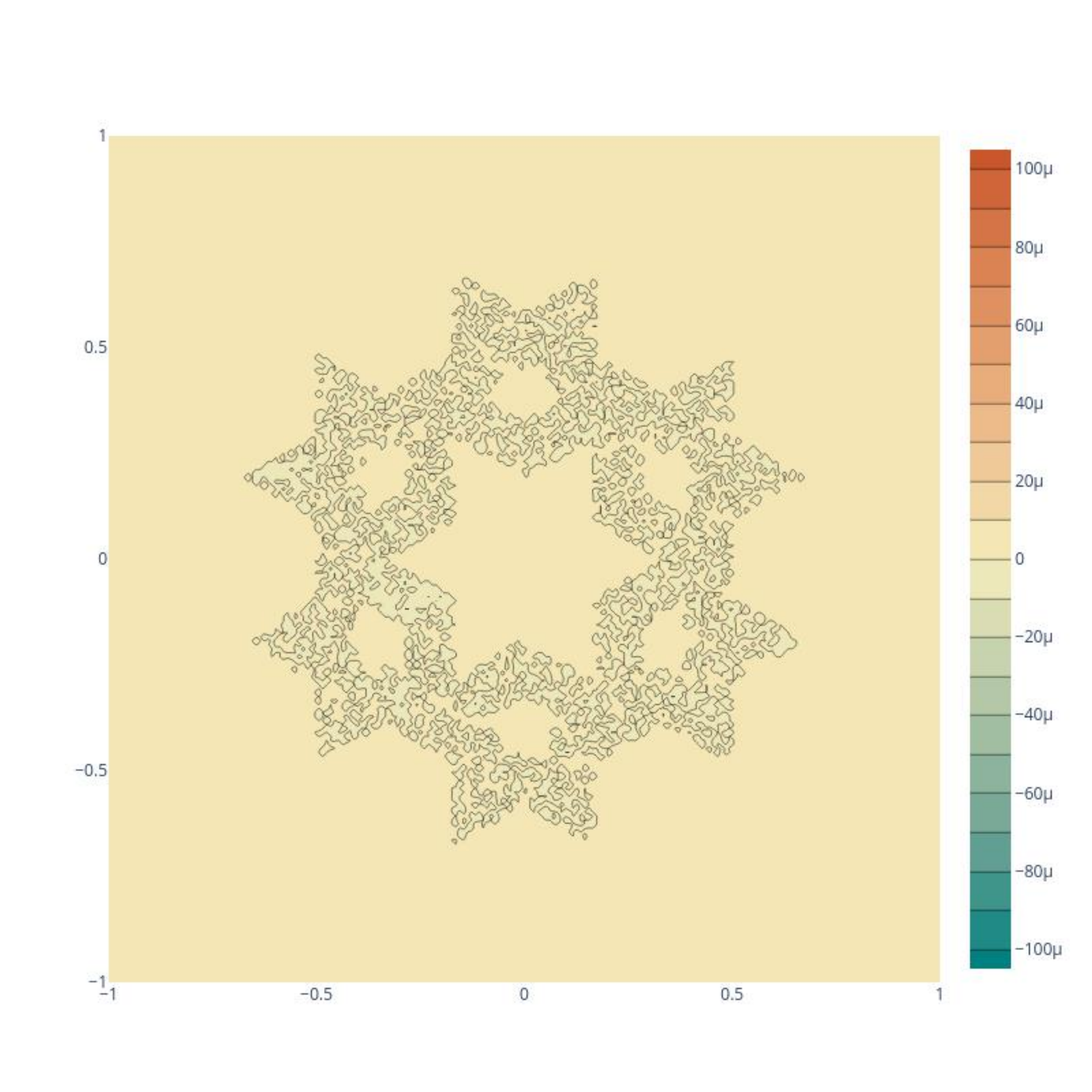}}\\
         & SDF & Gradient Norm & Divergence & Curl
     \end{tabular}}
        \caption{Ground Truth for 2D shapes. We can see that the divergence of the gradient vector field spikes sharply at points such as centres, skeletons or corners of the shape, and diffuses quickly from there.}
        \label{fig:appx:gt-vis}
\end{figure*}

\begin{figure*}
     \centering
     \resizebox{\textwidth}{!}{
     \begin{tabular}{c c c c}
     \setlength\tabcolsep{0pt}
         \includegraphics[scale=0.5]{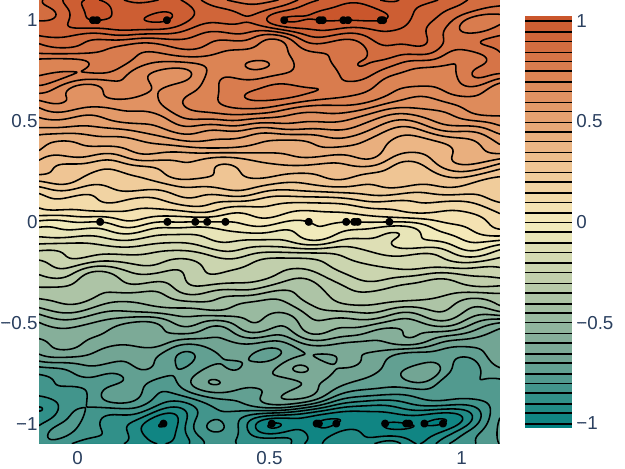}
         &
         \includegraphics[scale=0.5]{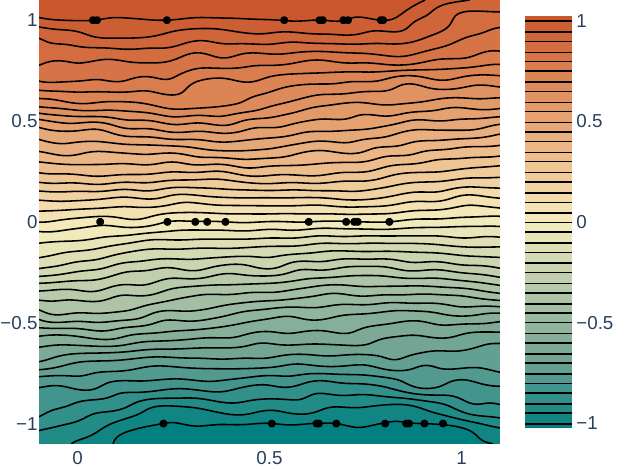}
         &
         \includegraphics[scale=0.5]{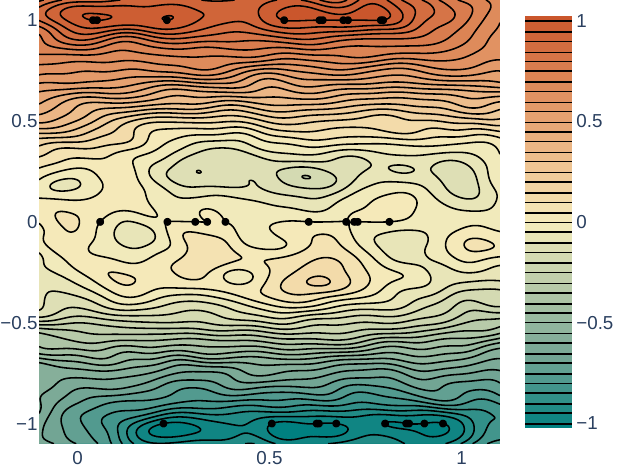}
         &
         \includegraphics[scale=0.5]{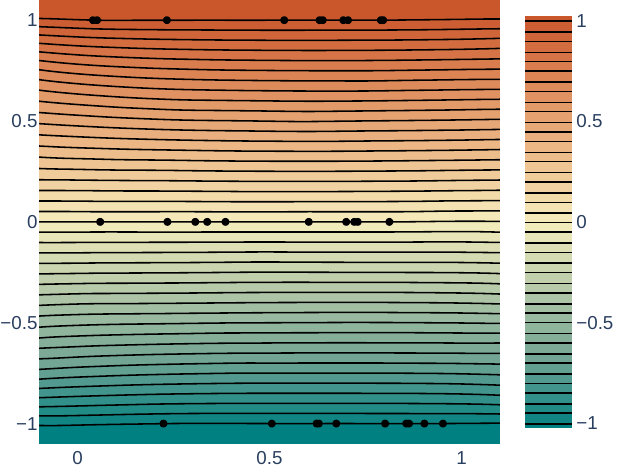}
         \\
         \includegraphics[scale=0.5]{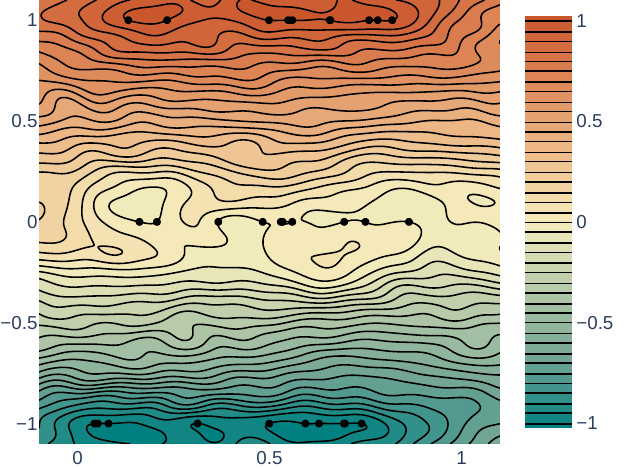}
         &
         \includegraphics[scale=0.5]{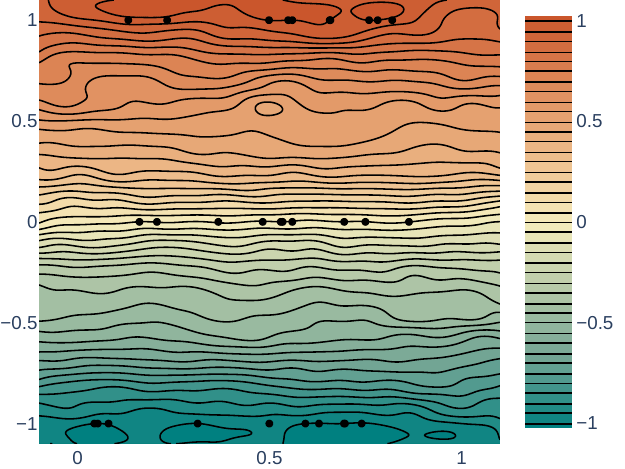}
         &
         \includegraphics[scale=0.5]{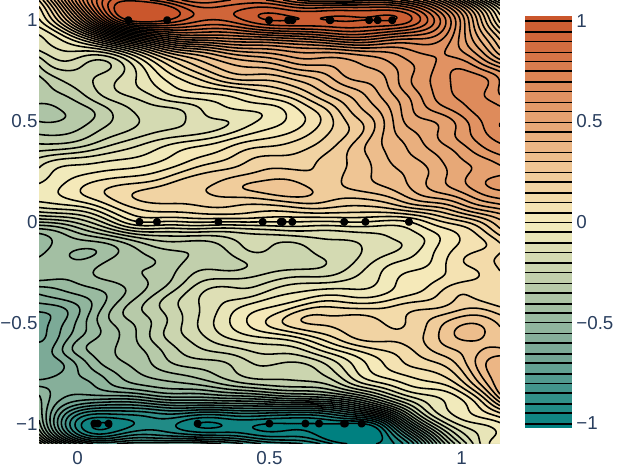}
         &
         \includegraphics[scale=0.5]{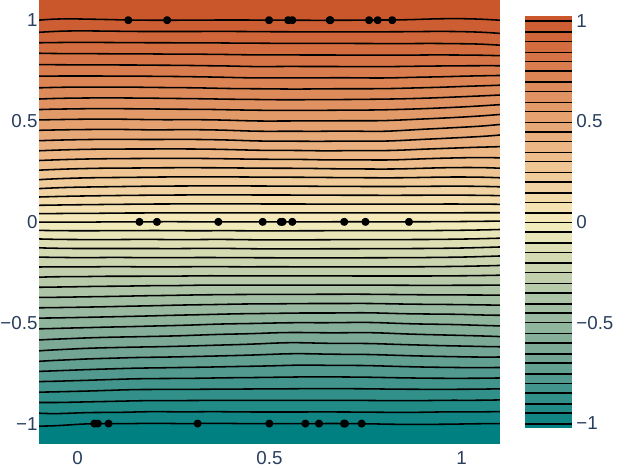}
         \\
         \includegraphics[scale=0.5]{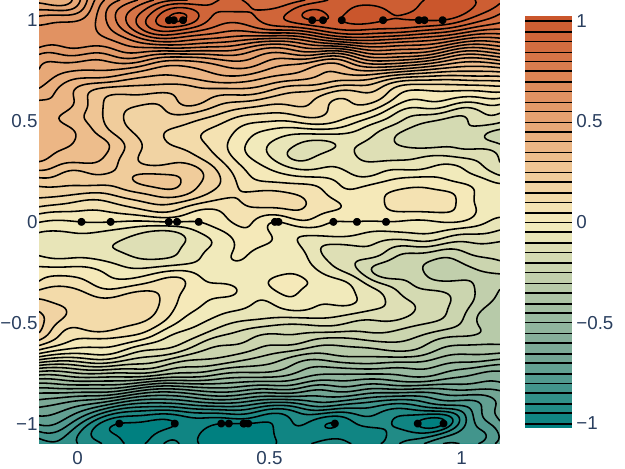}
         &
         \includegraphics[scale=0.5]{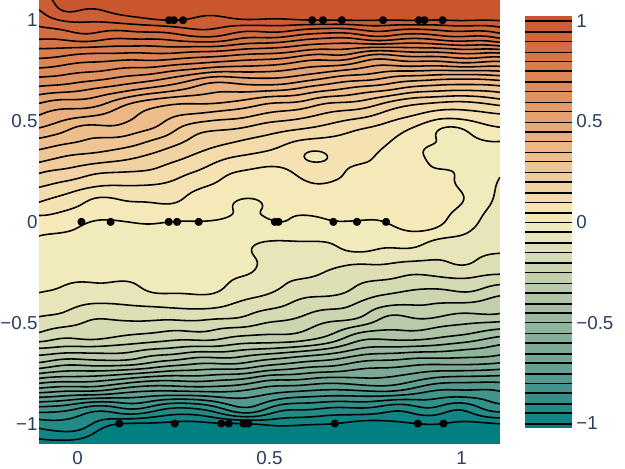}
         &
         \includegraphics[scale=0.5]{assets/figures/theory-vis/2021-11-11_13-53-00_4_200x200.pdf.png-min.png}
         &
         \includegraphics[scale=0.5]{assets/figures/theory-vis/2021-11-11_13-53-00_4_200x200_with_div.pdf.png-min.png}
         \\
         \includegraphics[scale=0.5]{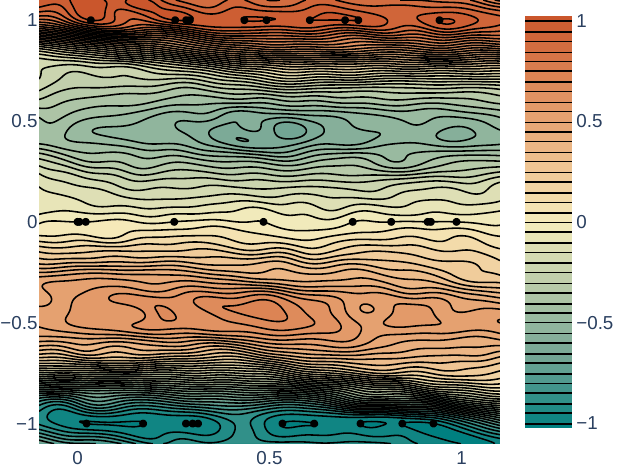}
         &
         \includegraphics[scale=0.5]{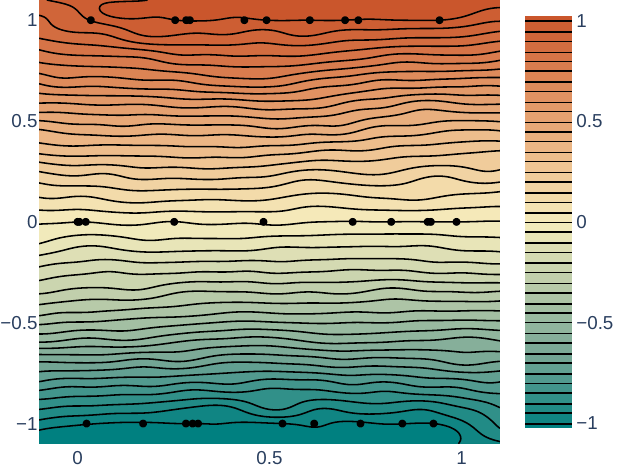}
         &
         \includegraphics[scale=0.5]{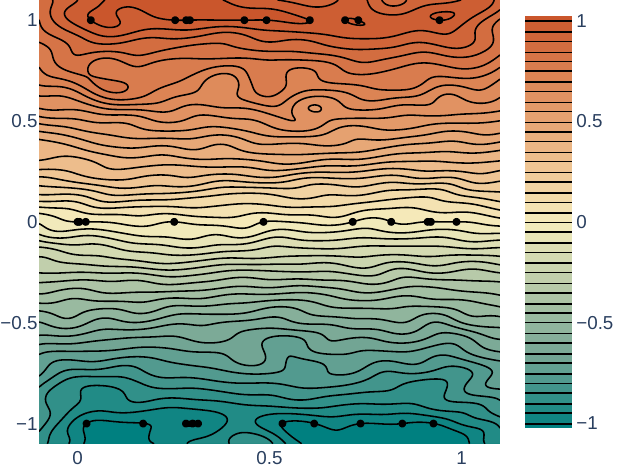}
         &
         \includegraphics[scale=0.5]{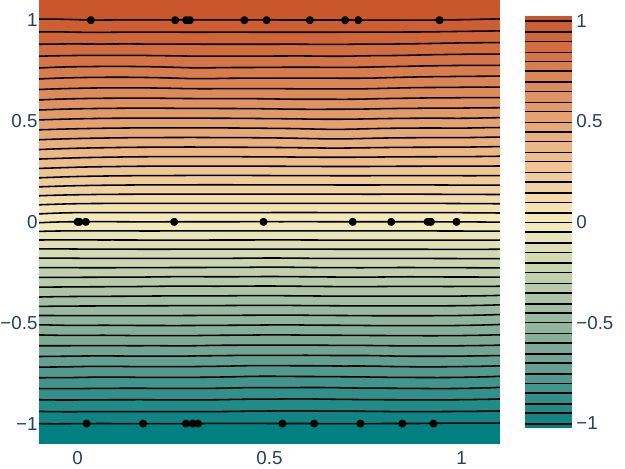}
         \\
         \includegraphics[scale=0.5]{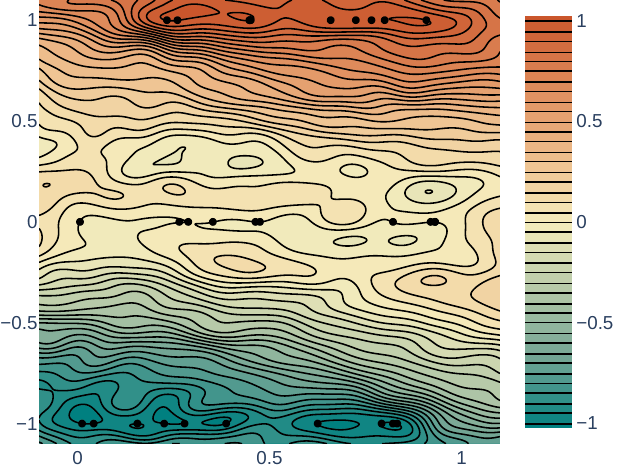}
         &
         \includegraphics[scale=0.5]{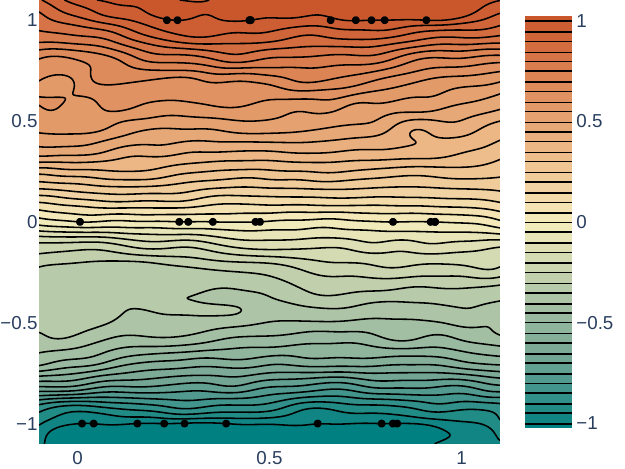}
         &
         \includegraphics[scale=0.5]{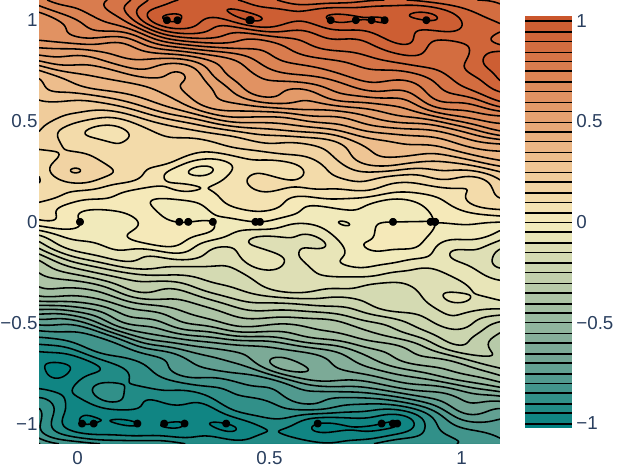}
         &
         \includegraphics[scale=0.5]{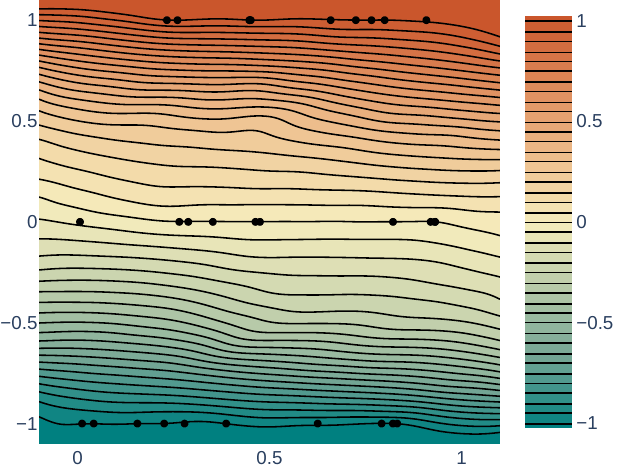}
         \\
         20x20 wo div & 20x20 w div & 200x200 wo div & 200x200 w div
     \end{tabular}
     }
        \caption{Five repetitions of the toy problem. Contour lines and coloring shows the learned function. Black dots on the lines $y=-1$, $y=0$ and $y=1$ show the point constraints. When the divergence term is present, the contour lines are more smooth and the spacing is more uniform, as desired. When the divergence term is absent, sometimes the sign of the function is considerably incorrect, with negative above $y=0$ and positive below.
        Notice that when the sampling for the point constraints are not uniform, e.g., the last row where the constraints are almost clustered in a diagonal, the learned function is often biased.}
        \label{fig:appx:theory-vis}
\end{figure*}

\subsubsection{Understanding the divergence constraint}\label{appx:sec:div-theory}
\textbf{The divergence theorem interpretation. }
The divergence theorem \cite{morse1954methods} states that integrating over the outward flow in a volume using a triple integral of the divergence is equivalent to a double integral of the flux through its encapsulating surface.
\begin{equation}
    \iiint_V \Delta \Phi(x; \theta)dV = \iint_S \nabla\Phi(x; \theta)\cdot \hat{n} dS
\end{equation}
To intuitively understand what the divergence at a point means, consider the above equation with the volume $V$ being a ball centered at the point, and taking the limit of the radius to 0. Then the divergence of a point is how much the vector field moves towards that point or away from that point from all directions, where mostly moving towards the point implies positive divergence (often called a sink), mostly moving away from that point implies negative divergence (often called a source), and it being balanced implies zero divergence. Thus a point having low divergence magnitude implies that the direction of the gradients are not changing much around that point. The theorem implies that divergence of a single point is heavily influenced by the surrounding region, so it incorporates a lot of local information of the (gradient) vector field. 


\textbf{Minimising Divergence as Regularisation \& the Dirichlet Energy.}
A common setup in machine learning is to not only optimise for a given loss function, but to penalise model complexity by regularising towards less complex solution. This can be viewed as an Occam's Razor principle, where simpler solutions are often better explanations/predictions. We can quantify the complexity of the SDF we have solved for using the Dirichlet Energy. The Dirichlet Energy of a function $\Phi$ over a space $\Omega$ gives a notion for how smooth variable a function is \cite{bronstein2017geometric}, defined by the convex functional
\begin{equation}
    E[\Phi]=\frac{1}{2}\int_\Omega\|\nabla\Phi(x)\|_2^2 dx.
\end{equation}
Using Green's first identity we have that
\begin{equation}
    E[\Phi]=\frac{1}{2}\int_{\partial \Omega} \langle \nabla \Phi(x), \mathbf{n}(x)\rangle \Phi(x) dx - \frac{1}{2}\int_\Omega \Delta \Phi(x) \Phi(x) dx
\end{equation}
and its functional derivative is
\begin{equation}
     DE[\Phi]\Psi = \int_{\partial \Omega} \langle \nabla \Phi(x), \mathbf{n}(x)\rangle \Psi(x) dx - \int_\Omega \Delta \Phi(x) \Psi(x) dx
\end{equation}
where $\mathbf{n}$ is the outward normal vector to the boundary $\partial \Omega$.   Thus to minimise the Dirichlet energy, the functional derivative needs to be 0 for all $\Psi$. As $\Psi$ is a infinitesimal displacement of $\Phi$, it vanishes on $\partial \Omega$, so we get $\Delta \Phi(x)=0$. This is Laplace's equation, whose solutions are harmonic function (e.g. steady-state heat equation, which will have a unique solution under sufficiently regular boundary conditions). 

However we are more restrictive in the functions $\Phi$ we want to minimise $E[\Phi]$ for, specifically we only consider $\Phi$ that interpolate our surface points within its zero level set (which can be considered as a boundary condition) and the eikonal equation must hold. As a result we would not be able to solve for a harmonic function, as most SDFs are not harmonic (see Figure~\ref{fig:appx:gt-vis} for the actual divergence field). However as our functional is convex, to find a function that satisfies our conditions and has minimal Dirichlet energy, it suffices to find the function satisfying our conditions whose absolute value of the functional derivative is as small as possible, i.e. minimising our divergence term $L_{div}$ (Equation~\ref{eq:L-div}). 

Note that this loss term is specifically defined over $\Omega~\setminus~\Omega_0$, compared to \eqref{eq:eikonal} which is defined over $\Omega$. The difference, $\Omega_0$, is a set of measure zero and mathematically does not change much, but it does computationally: to evaluate our losses we have random samples over the space ($\Omega\approx \Omega \setminus \Omega_0$) and samples at ground truth surface points (i.e., heavy sampling on the surface $\Omega_0$). Thus \eqref{eq:eikonal} is computed over both sets of samples, while the divergence loss term is only computer over the former. If we computed the divergence loss over the heavily sampled surface points, it would make our surface drastically smooth. In practice we want the opposite of this: a surface with fine detail will have high variability in its SDF near the surface and low variability further away from the surface.

PHASE \cite{lipman2021phase} also use the Dirichlet energy term for regularisation in their method, but motivate it from the Van der Waals-Cahn-Hilliard (WCH) theory for the physical phenomenon of phase transitions, and they do not anneal the loss.

\textbf{Toy Problem.}
We give more visualisations for the toy problem discussed in \secref{Sec:approach_divergence}. The experiment was repeated 20 times for each of the four cases (20x20 grid without divergence, 20x20 grid with divergence, 200x200 grid without divergence and 200x200 grid with divergence), where the same randomly sampled point constraints were used for the four experiments in the same repetition. \figref{fig:appx:theory-vis} shows the learned functions for five repetitions. 
When the divergence term is present, the contour lines are more smooth and the spacing is more uniform, as desired, showing that they do a better job at maintaining the Eikonal equation. When the divergence term is absent, sometimes the sign of the function is considerably incorrect, with negative above $y=0$ and positive below, showing it is less stable and more variable and thus that our divergence term is acting as regularization. Also notice that when the sampling for the point constraints are not uniform, e.g., the last row where the constraints are almost clustered in a diagonal, the learned function is often biased.

\subsubsection{Understanding the divergence constraint in 2D }
We perform qualitative analysis of the proposed approach on 2D simple shapes: circle, L shape polygon and Koch's snowflake polygon. In this experiment we trained a network with 4 layers and 128 elements in each layer with sine activation functions for 10K epochs, sampling a new set of points in each epoch. 
We compare to SIREN \cite{sitzmann2020siren}, SIREN without normal vectors, and the proposed DiGS approach with the proposed MFGI initialziation and without it. 
The results are shown in \figref{fig:appx:2D_reconstructions} and \figref{fig:appx:2D_reconstructions_snowflake} where a heatmap is used to visualize the learned distance function, the eikonal term, the divergence, the curl and the difference between the unsigned predicted distance and the ground truth distance. For the circle shape, incorporating the divergence constraint without decay yields the best result since the circle is a smooth closed shape without fine detail. In contrast, Koch's snowflake is characterised with sharp edges, therefore starting with the divergence constraint and annealing it yields the best performance. In this case the divergence term guides the learning process to a smoothed version of the snowflake, and the annealing allows it to fit the geometry more tightly. SIREN without the normal vectors exhibits ghost geometries (zero level sets that should not appear), while DiGS does not. 

The initialization significantly effects the sign of the distance function as well as the model's ability to properly reconstruct fine detail, particularly for the snowflake example. 

\begin{figure*}
     \centering
     \begin{tabular}{M{0.05\textwidth} c c c c c}
        
        \rotatebox[origin=c]{90}{SIREN} &
         \raisebox{-.5\height}{\includegraphics[width=\recviscoltwod\linewidth, trim=30pt 30pt 7pt 37pt, clip]{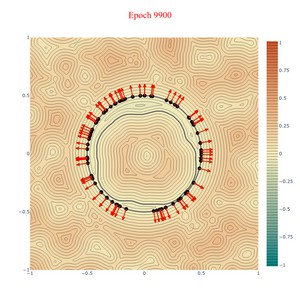}}
         &
        \raisebox{-.5\height}{\includegraphics[width=\recviscoltwod\linewidth, trim=30pt 30pt 7pt 37pt, clip]{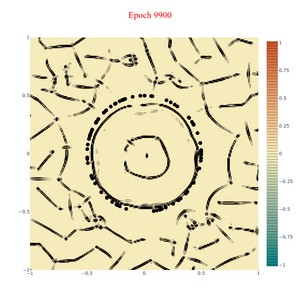}}
         &
         \raisebox{-.5\height}{\includegraphics[width=\recviscoltwod\linewidth, trim=30pt 30pt 7pt 37pt, clip]{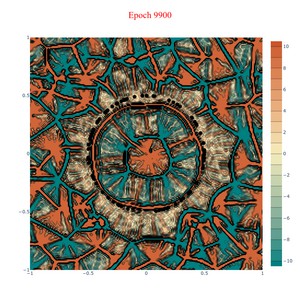}}
         &
        \raisebox{-.5\height}{\includegraphics[width=\recviscoltwod\linewidth, trim=30pt 30pt 7pt 37pt, clip]{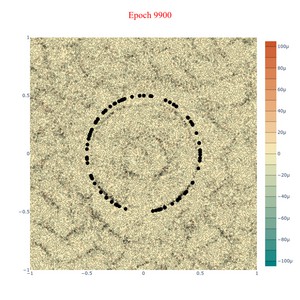}}
         &
        \raisebox{-.5\height}{\includegraphics[width=\recviscoltwod\linewidth, trim=30pt 30pt 7pt 37pt, clip]{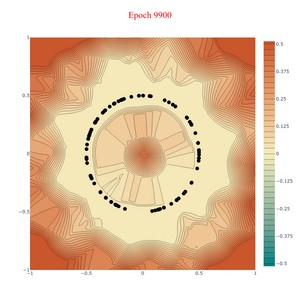}}
         \\
         
         \rotatebox[origin=c]{90}{\makecell{SIREN \\ wo n}   }  &
         \raisebox{-.5\height}{\includegraphics[width=\recviscoltwod\linewidth, trim=30pt 30pt 7pt 37pt, clip]{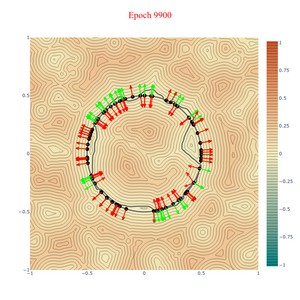}}
         &
        \raisebox{-.5\height}{\includegraphics[width=\recviscoltwod\linewidth, trim=30pt 30pt 7pt 37pt, clip]{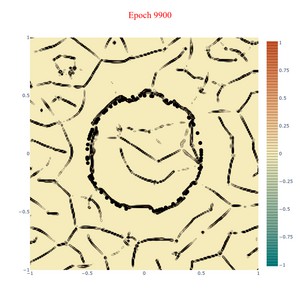}}
         &
         \raisebox{-.5\height}{\includegraphics[width=\recviscoltwod\linewidth, trim=30pt 30pt 7pt 37pt, clip]{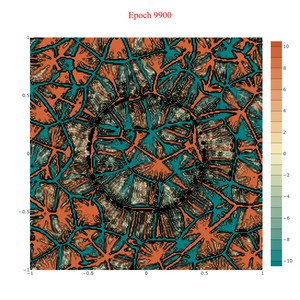}}
         &
        \raisebox{-.5\height}{\includegraphics[width=\recviscoltwod\linewidth, trim=30pt 30pt 7pt 37pt, clip]{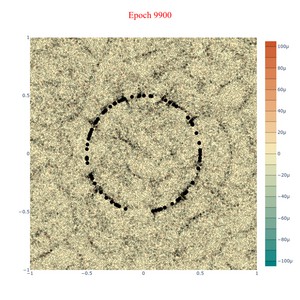}}
         &
        \raisebox{-.5\height}{\includegraphics[width=\recviscoltwod\linewidth, trim=30pt 30pt 7pt 37pt, clip]{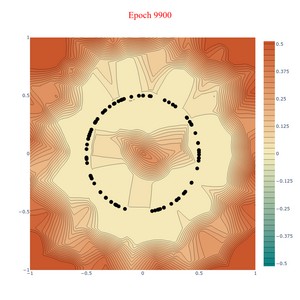}}
         \\
         
         \rotatebox[origin=c]{90}{\makecell{DiGS \\ no decay}}  &  
         \raisebox{-.5\height}{\includegraphics[width=\recviscoltwod\linewidth, trim=30pt 30pt 7pt 37pt, clip]{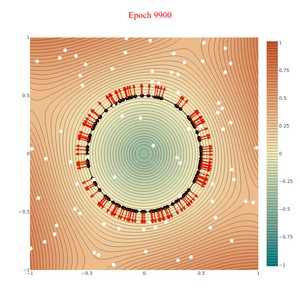}}
         &
        \raisebox{-.5\height}{\includegraphics[width=\recviscoltwod\linewidth, trim=30pt 30pt 7pt 37pt, clip]{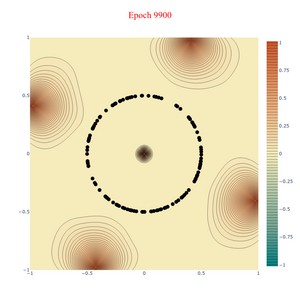}}
         &
        \raisebox{-.5\height}{\includegraphics[width=\recviscoltwod\linewidth, trim=30pt 30pt 7pt 37pt, clip]{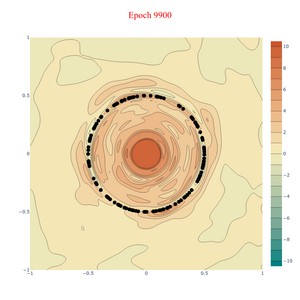}}
         &         
        \raisebox{-.5\height}{\includegraphics[width=\recviscoltwod\linewidth, trim=30pt 30pt 7pt 37pt, clip]{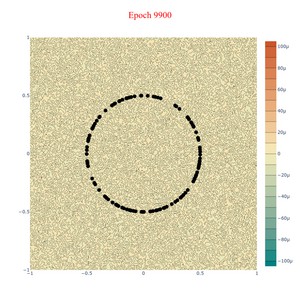}}
         &
        \raisebox{-.5\height}{\includegraphics[width=\recviscoltwod\linewidth, trim=30pt 30pt 7pt 37pt, clip]{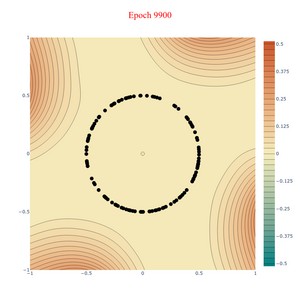}}
         \\
         
         \rotatebox[origin=c]{90}{\makecell{DiGS \\ wo MFGI}} & \raisebox{-.5\height}{\includegraphics[width=\recviscoltwod\linewidth, trim=30pt 30pt 7pt 37pt, clip]{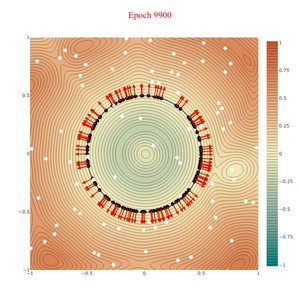}}
         &
        \raisebox{-.5\height}{\includegraphics[width=\recviscoltwod\linewidth, trim=30pt 30pt 7pt 37pt, clip]{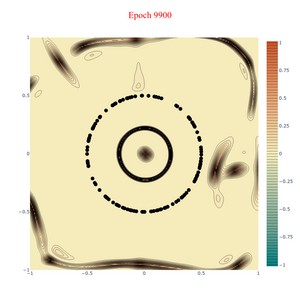}}
         &
        \raisebox{-.5\height}{\includegraphics[width=\recviscoltwod\linewidth, trim=30pt 30pt 7pt 37pt, clip]{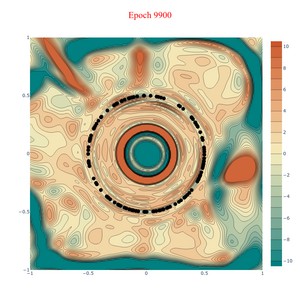}}
         &         
        \raisebox{-.5\height}{\includegraphics[width=\recviscoltwod\linewidth, trim=30pt 30pt 7pt 37pt, clip]{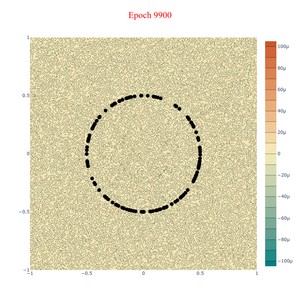}}
         &
        \raisebox{-.5\height}{\includegraphics[width=\recviscoltwod\linewidth, trim=30pt 30pt 7pt 37pt, clip]{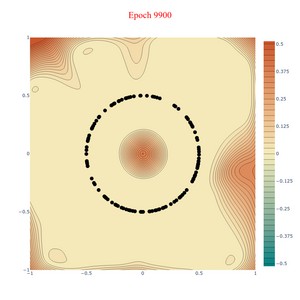}}
         \\
         
          \rotatebox[origin=c]{90}{\textbf{DiGS}} & \raisebox{-.5\height}{\includegraphics[width=\recviscoltwod\linewidth, trim=30pt 30pt 7pt 37pt, clip]{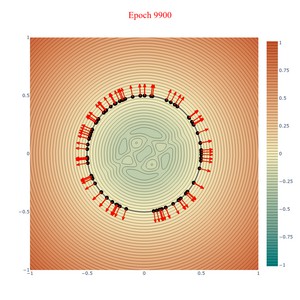}}
         &
        \raisebox{-.5\height}{\includegraphics[width=\recviscoltwod\linewidth, trim=30pt 30pt 7pt 37pt, clip]{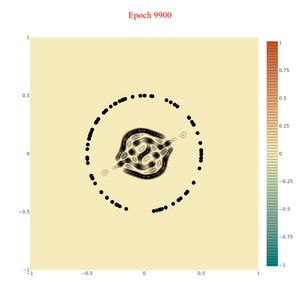}}
         &
        \raisebox{-.5\height}{\includegraphics[width=\recviscoltwod\linewidth, trim=30pt 30pt 7pt 37pt, clip]{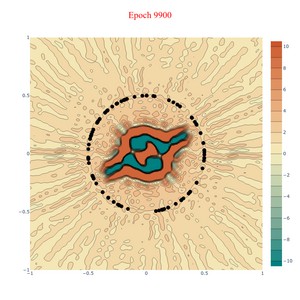}}
         &         
        \raisebox{-.5\height}{\includegraphics[width=\recviscoltwod\linewidth, trim=30pt 30pt 7pt 37pt, clip]{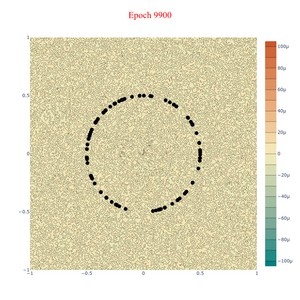}}
         &
        \raisebox{-.5\height}{\includegraphics[width=\recviscoltwod\linewidth, trim=30pt 30pt 7pt 37pt, clip]{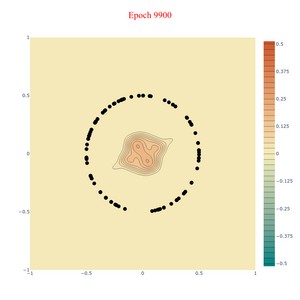}}
         \\

         \rule{0pt}{0.25ex} &&&&& \\
         \hline 
         \rule{0pt}{0.25ex} &&&&& \\
         

        \rotatebox[origin=c]{90}{SIREN} &
         \raisebox{-.5\height}{\includegraphics[width=\recviscoltwod\linewidth, trim=30pt 30pt 7pt 37pt, clip]{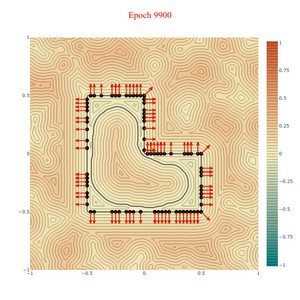}}
         &
        \raisebox{-.5\height}{\includegraphics[width=\recviscoltwod\linewidth, trim=30pt 30pt 7pt 37pt, clip]{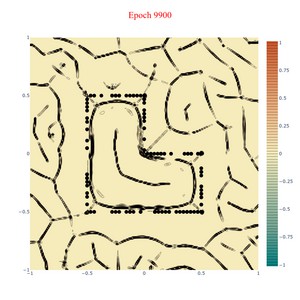}}
         &
         \raisebox{-.5\height}{\includegraphics[width=\recviscoltwod\linewidth, trim=30pt 30pt 7pt 37pt, clip]{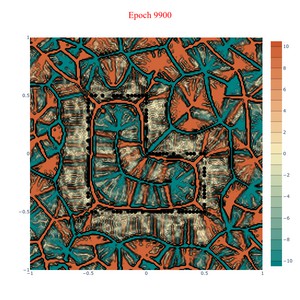}}
         &
        \raisebox{-.5\height}{\includegraphics[width=\recviscoltwod\linewidth, trim=30pt 30pt 7pt 37pt, clip]{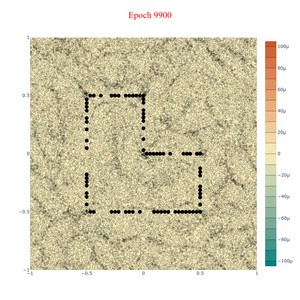}}
         &
        \raisebox{-.5\height}{\includegraphics[width=\recviscoltwod\linewidth, trim=30pt 30pt 7pt 37pt, clip]{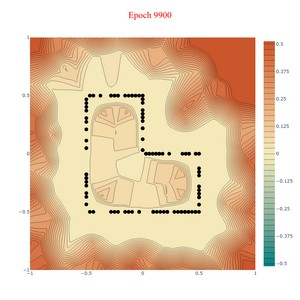}}
         \\
         
         \rotatebox[origin=c]{90}{\makecell{SIREN \\ wo n}   }  &
         \raisebox{-.5\height}{\includegraphics[width=\recviscoltwod\linewidth, trim=30pt 30pt 7pt 37pt, clip]{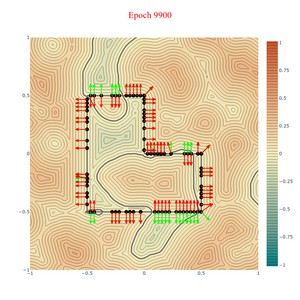}}
         &
        \raisebox{-.5\height}{\includegraphics[width=\recviscoltwod\linewidth, trim=30pt 30pt 7pt 37pt, clip]{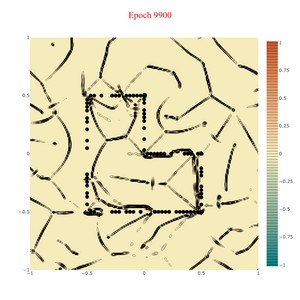}}
         &
         \raisebox{-.5\height}{\includegraphics[width=\recviscoltwod\linewidth, trim=30pt 30pt 7pt 37pt, clip]{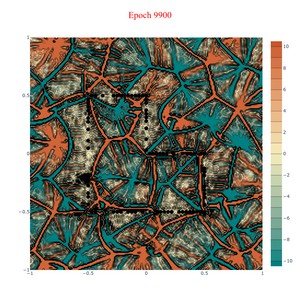}}
         &
        \raisebox{-.5\height}{\includegraphics[width=\recviscoltwod\linewidth, trim=30pt 30pt 7pt 37pt, clip]{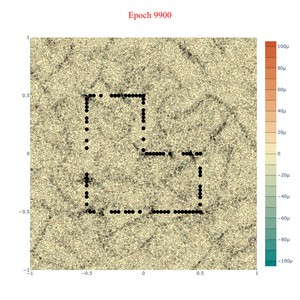}}
         &
        \raisebox{-.5\height}{\includegraphics[width=\recviscoltwod\linewidth, trim=30pt 30pt 7pt 37pt, clip]{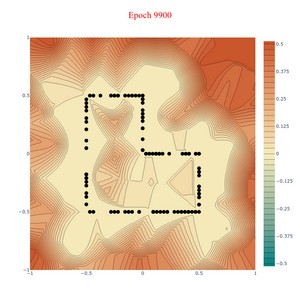}}
         \\
         
         \rotatebox[origin=c]{90}{\makecell{DiGS \\ no decay}}  &  
         \raisebox{-.5\height}{\includegraphics[width=\recviscoltwod\linewidth, trim=30pt 30pt 7pt 37pt, clip]{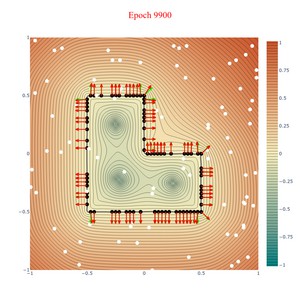}}
         &
        \raisebox{-.5\height}{\includegraphics[width=\recviscoltwod\linewidth, trim=30pt 30pt 7pt 37pt, clip]{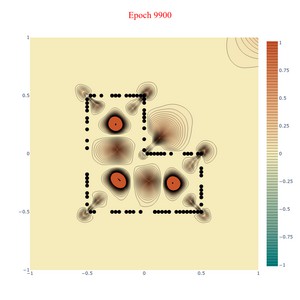}}
         &
        \raisebox{-.5\height}{\includegraphics[width=\recviscoltwod\linewidth, trim=30pt 30pt 7pt 37pt, clip]{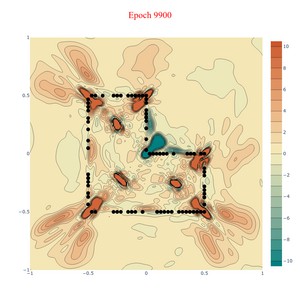}}
         &         
        \raisebox{-.5\height}{\includegraphics[width=\recviscoltwod\linewidth, trim=30pt 30pt 7pt 37pt, clip]{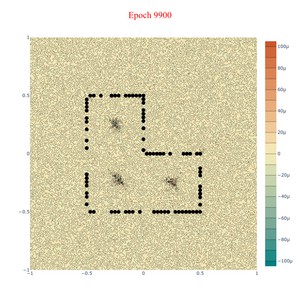}}
         &
        \raisebox{-.5\height}{\includegraphics[width=\recviscoltwod\linewidth, trim=30pt 30pt 7pt 37pt, clip]{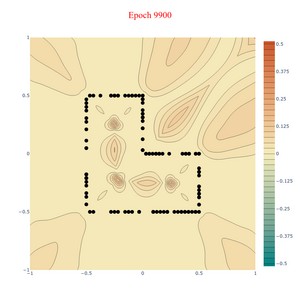}}
         \\
         
         \rotatebox[origin=c]{90}{\makecell{DiGS \\ wo MFGI}} & \raisebox{-.5\height}{\includegraphics[width=\recviscoltwod\linewidth, trim=30pt 30pt 7pt 37pt, clip]{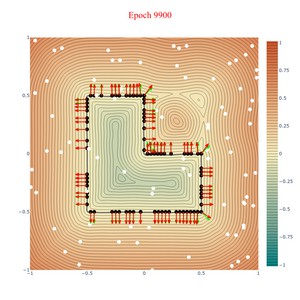}}
         &
        \raisebox{-.5\height}{\includegraphics[width=\recviscoltwod\linewidth, trim=30pt 30pt 7pt 37pt, clip]{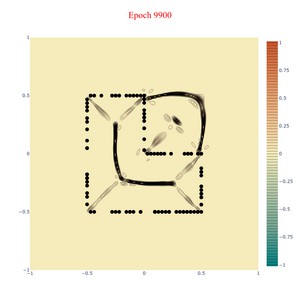}}
         &
        \raisebox{-.5\height}{\includegraphics[width=\recviscoltwod\linewidth, trim=30pt 30pt 7pt 37pt, clip]{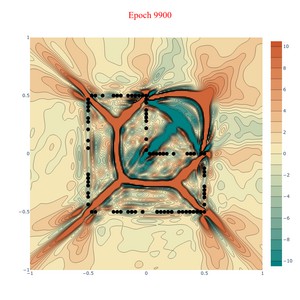}}
         &         
        \raisebox{-.5\height}{\includegraphics[width=\recviscoltwod\linewidth, trim=30pt 30pt 7pt 37pt, clip]{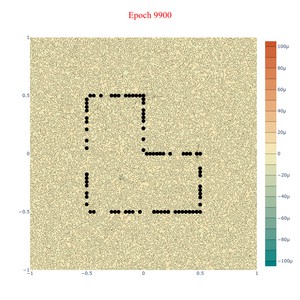}}
         &
        \raisebox{-.5\height}{\includegraphics[width=\recviscoltwod\linewidth, trim=30pt 30pt 7pt 37pt, clip]{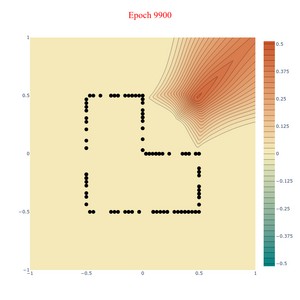}}
         \\
         
          \rotatebox[origin=c]{90}{\textbf{DiGS}} & \raisebox{-.5\height}{\includegraphics[width=\recviscoltwod\linewidth, trim=30pt 30pt 7pt 37pt, clip]{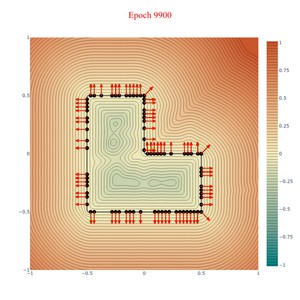}}
         &
        \raisebox{-.5\height}{\includegraphics[width=\recviscoltwod\linewidth, trim=30pt 30pt 7pt 37pt, clip]{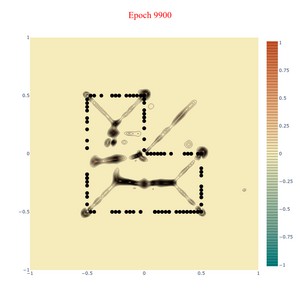}}
         &
        \raisebox{-.5\height}{\includegraphics[width=\recviscoltwod\linewidth, trim=30pt 30pt 7pt 37pt, clip]{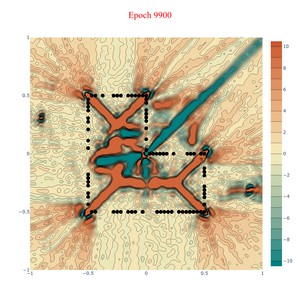}}
         &         
        \raisebox{-.5\height}{\includegraphics[width=\recviscoltwod\linewidth, trim=30pt 30pt 7pt 37pt, clip]{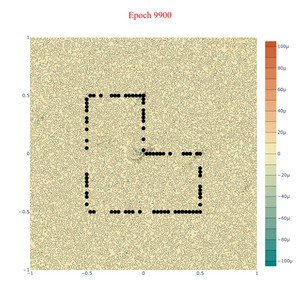}}
         &
        \raisebox{-.5\height}{\includegraphics[width=\recviscoltwod\linewidth, trim=30pt 30pt 7pt 37pt, clip]{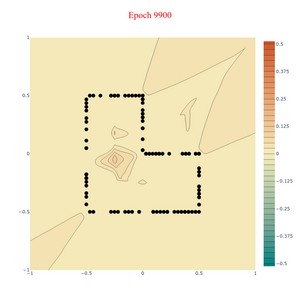}}
         \\
           method  & SDF & eikonal  & divergence & curl & distance gt diff  \\
     \end{tabular}
        \caption{Qualitative 2D results. Visualizing informative quantities as heatmaps over the evaluation space: (left to right) sign distance function, divergence, eikonal term, curl, and distance difference between ground truth and inference for the Circle and L shapes. }
        \label{fig:appx:2D_reconstructions}
\end{figure*}       
         
 \begin{figure*}
\centering
\begin{tabular}{M{0.05\textwidth} c c c c c}
         
        \rotatebox[origin=c]{90}{SIREN} &
         \raisebox{-.5\height}{\includegraphics[width=\recviscoltwod\linewidth, trim=30pt 30pt 7pt 37pt, clip]{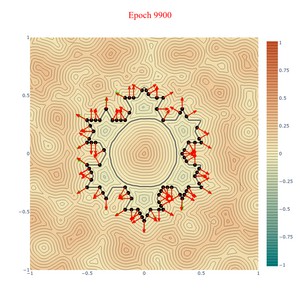}}
         &
        \raisebox{-.5\height}{\includegraphics[width=\recviscoltwod\linewidth, trim=30pt 30pt 7pt 37pt, clip]{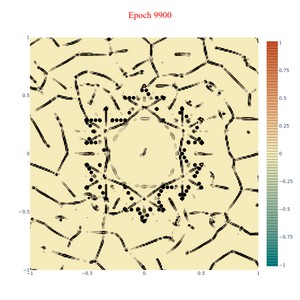}}
         &
         \raisebox{-.5\height}{\includegraphics[width=\recviscoltwod\linewidth, trim=30pt 30pt 7pt 37pt, clip]{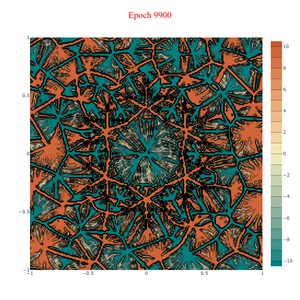}}
         &
        \raisebox{-.5\height}{\includegraphics[width=\recviscoltwod\linewidth, trim=30pt 30pt 7pt 37pt, clip]{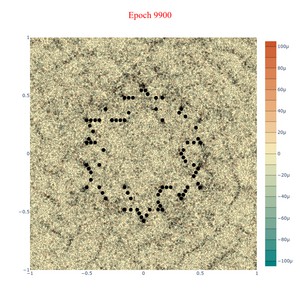}}
         &
        \raisebox{-.5\height}{\includegraphics[width=\recviscoltwod\linewidth, trim=30pt 30pt 7pt 37pt, clip]{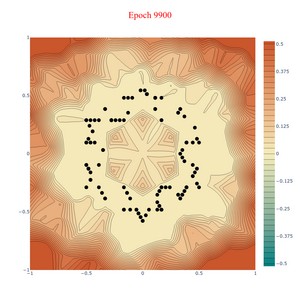}}
         \\
         
         \rotatebox[origin=c]{90}{\makecell{SIREN \\ wo n}   }  &
         \raisebox{-.5\height}{\includegraphics[width=\recviscoltwod\linewidth, trim=30pt 30pt 7pt 37pt, clip]{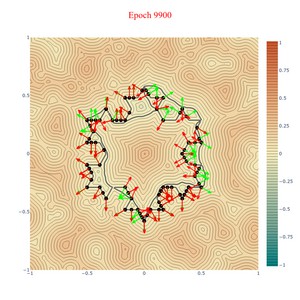}}
         &
        \raisebox{-.5\height}{\includegraphics[width=\recviscoltwod\linewidth, trim=30pt 30pt 7pt 37pt, clip]{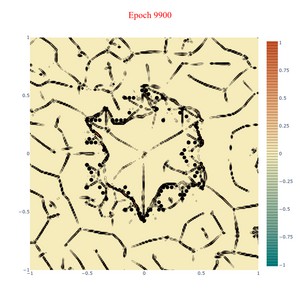}}
         &
         \raisebox{-.5\height}{\includegraphics[width=\recviscoltwod\linewidth, trim=30pt 30pt 7pt 37pt, clip]{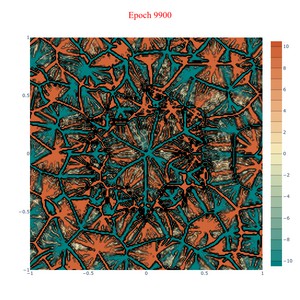}}
         &
        \raisebox{-.5\height}{\includegraphics[width=\recviscoltwod\linewidth, trim=30pt 30pt 7pt 37pt, clip]{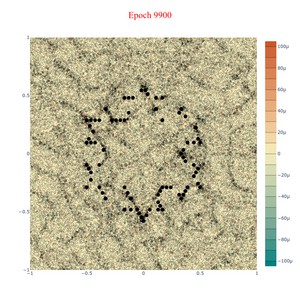}}
         &
        \raisebox{-.5\height}{\includegraphics[width=\recviscoltwod\linewidth, trim=30pt 30pt 7pt 37pt, clip]{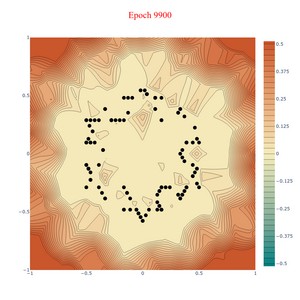}}
         \\
         
         \rotatebox[origin=c]{90}{\makecell{DiGS \\ no decay}}  &  
         \raisebox{-.5\height}{\includegraphics[width=\recviscoltwod\linewidth, trim=30pt 30pt 7pt 37pt, clip]{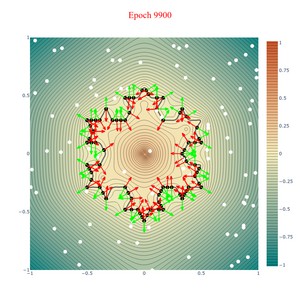}}
         &
        \raisebox{-.5\height}{\includegraphics[width=\recviscoltwod\linewidth, trim=30pt 30pt 7pt 37pt, clip]{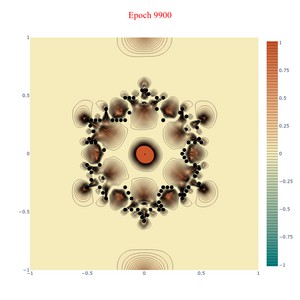}}
         &
        \raisebox{-.5\height}{\includegraphics[width=\recviscoltwod\linewidth, trim=30pt 30pt 7pt 37pt, clip]{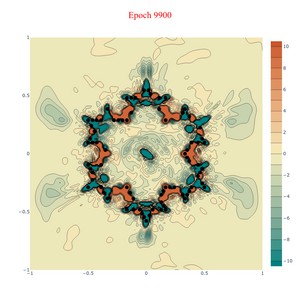}}
         &         
        \raisebox{-.5\height}{\includegraphics[width=\recviscoltwod\linewidth, trim=30pt 30pt 7pt 37pt, clip]{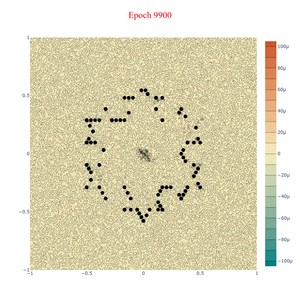}}
         &
        \raisebox{-.5\height}{\includegraphics[width=\recviscoltwod\linewidth, trim=30pt 30pt 7pt 37pt, clip]{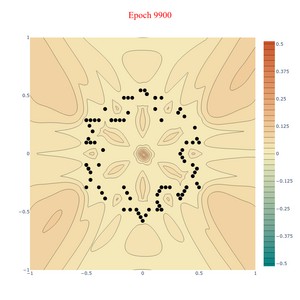}}
         \\
         
         \rotatebox[origin=c]{90}{\makecell{DiGS \\ wo MFGI}} & \raisebox{-.5\height}{\includegraphics[width=\recviscoltwod\linewidth, trim=30pt 30pt 7pt 37pt, clip]{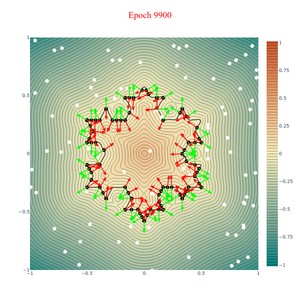}}
         &
        \raisebox{-.5\height}{\includegraphics[width=\recviscoltwod\linewidth, trim=30pt 30pt 7pt 37pt, clip]{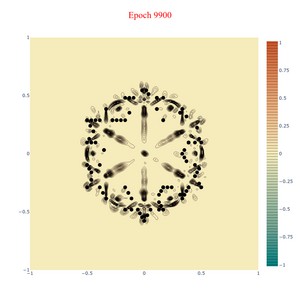}}
         &
        \raisebox{-.5\height}{\includegraphics[width=\recviscoltwod\linewidth, trim=30pt 30pt 7pt 37pt, clip]{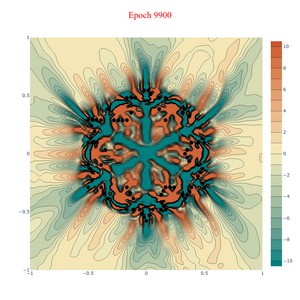}}
         &         
        \raisebox{-.5\height}{\includegraphics[width=\recviscoltwod\linewidth, trim=30pt 30pt 7pt 37pt, clip]{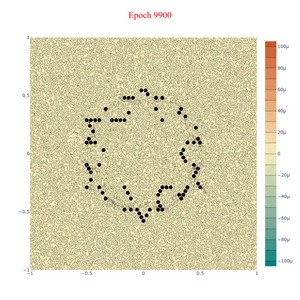}}
         &
        \raisebox{-.5\height}{\includegraphics[width=\recviscoltwod\linewidth, trim=30pt 30pt 7pt 37pt, clip]{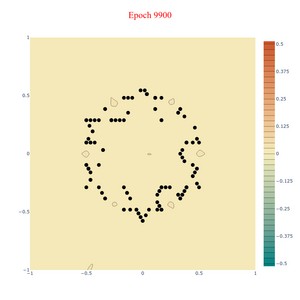}}
         \\
         
          \rotatebox[origin=c]{90}{\textbf{DiGS}} & \raisebox{-.5\height}{\includegraphics[width=\recviscoltwod\linewidth, trim=30pt 30pt 7pt 37pt, clip]{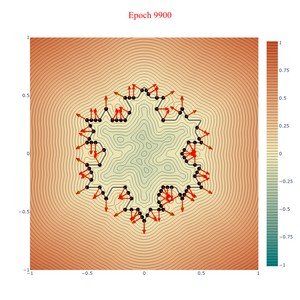}}
         &
        \raisebox{-.5\height}{\includegraphics[width=\recviscoltwod\linewidth, trim=30pt 30pt 7pt 37pt, clip]{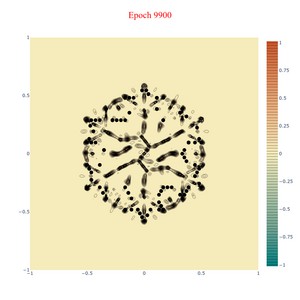}}
         &
        \raisebox{-.5\height}{\includegraphics[width=\recviscoltwod\linewidth, trim=30pt 30pt 7pt 37pt, clip]{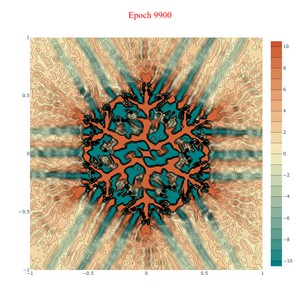}}
         &         
        \raisebox{-.5\height}{\includegraphics[width=\recviscoltwod\linewidth, trim=30pt 30pt 7pt 37pt, clip]{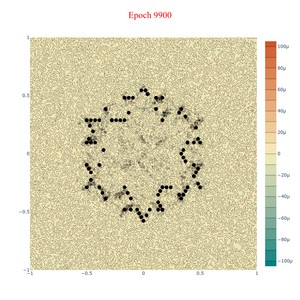}}
         &
        \raisebox{-.5\height}{\includegraphics[width=\recviscoltwod\linewidth, trim=30pt 30pt 7pt 37pt, clip]{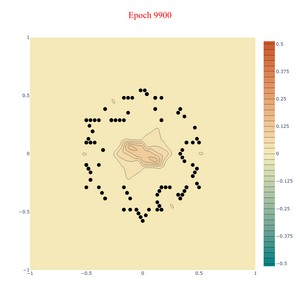}}
         \\
         
         method  & SDF & eikonal  & divergence & curl & distance gt diff  \\
     \end{tabular}
        \caption{Qualitative 2D results. Visualizing informative quantities as heatmaps over the evaluation space: (left to right) sign distance function, divergence, eikonal term, curl, and distance difference between ground truth and inference for the Koch Snoflake shape. }
        \label{fig:appx:2D_reconstructions_snowflake}
\end{figure*}

\subsubsection{Second order supervision constraint}
\label{appx:sec:second_order_supervision}
Normal vectors are often not available during training and are therefore estimated using some local approximation method \cite{cazals2005estimating, guerrero2018pcpnet, ben2020deepfit, ben2019nestinet}. Some methods also provide an approximation of the principal curvatures \cite{cazals2005estimating, ben2020deepfit}. The mean curvatures $\kappa_{mean} = \frac{1}{2}(\kappa_1 + \kappa_2)$ provides second order information that can be utilised as supervision for learning shape representation. We propose a supervised variation to DiGS that penalizes points on the surface for having a different mean curvature than the divergence of the vector field using the following constraint: 
\begin{equation}
L_{curv} = \int_{ \Omega_0} \abs{ \Delta \Phi(x; \theta)} - 2\abs{\kappa_{mean}}dx
\end{equation}
The new supervised loss is given by: 

\begin{equation}
\begin{split}
L_{DiGS+curv} =& \lambda_{A} L_{A} +  \lambda_{B} L_{B} + \\ & \lambda_{C2} L_{C2} + \lambda_{D} L_{D}  + \lambda_{curv} L_{curv}
\end{split}
\end{equation}

Note that normal and curvature estimations are noisy and highly depend on local neighboring points support size, therefore adding these supervisory signals does not guarantee improved performance. 

\clearpage
\pagebreak

\subsection{Geometric initialization and training procedure}\label{sec:appx:approach_init}
\subsubsection{Further initialization visualizations}
Following \secref{Sec:approach_init}, we provide a visualization for the proposed geometric initialization with and without multi-frequencies in \figref{appx:geometric_init} which shows the SDF, Eikonal term and divergence term. It shows the sphere-like (circle-like in this 2D case) level sets which provide a much smoother Eikonal and divergence terms. We qualitatively compare it to the initialization method proposed by Sitzmann et. al. \cite{sitzmann2020siren}. This initialization plays a major role in shape space learning.

\subsubsection{Proofs}

The following propositions and proofs show how we intialize our network to a sphere (i.e. the SDF to the function $\Phi(x)=\|x\|_2$). Following Williams~et~al.~\cite{willians2021drps}, rather than roughly approximating the norm with our function class, we instead do a much better approximation to the squared norm, $\Phi(x)=\|x\|_2^2$, (which we do in \eqnref{eg:appx:sq_approx}). We then apply the following function to the output of the SIREN:
\begin{equation}
    \nu(x) = \sign{x}\sqrt{|x| + \varepsilon},
\end{equation}
where $\sign{x}$ is important as we are learning an SDF, and $\varepsilon$ is important for both numerical stability and ensuring the function's derivative is  continuous and sub-differentiable. We use $\varepsilon=10^{-8}$.

\begin{customprop}{4.1}
\label{prop:appx_single_layer}
Let $\Phi$ be a single hidden layer SIREN ($n=1$ in \eqnref{Eq:SIREN}) of dimension $M_n$ and let $x$ be a point within the unit ball. Set,  $\mathbf{w}_n=-\mathbf{1}$, $\mathbf{W}_{n-1}=\frac{\pi}{2}I$, $\mathbf{b}_{n-1}=\frac{\pi}{2}\mathbf{1}$ and $b_n=M_n$. Then, $\nu(\Phi(x)) \approx \norm{x}{2}$.
\end{customprop}

\begin{proof}
For 1D input $z \in [-1,1]$ we can approximate $z^2$ by $1-\sin\left(\frac{\pi}{2}z+\frac{\pi}{2}\right)$ (see \figref{fig:l1_approx}). 

Then
\begin{align}
    \Phi(x) &= \left(-\mathbf{1}\right)^T\sin\left(\left(\frac{\pi}{2}I\right)x+\left(\frac{\pi}{2}\mathbf{1}\right)\right)+M_n\\
    &=\sum_{i=1}^{M_n} 1-\sin\left(\frac{\pi}{2}x_i+\frac{\pi}{2}\right)\\
    \label{eg:appx:sq_approx}
    &\approx \sum_{i=1}^{M_n}x_i^2 \\
    &= \|x\|_2^2
\end{align}
so $\nu(\Phi(x)) \approx \norm{x}{2}$.
\end{proof}

To extend this to networks with an arbitrary number of layers, we design layers $\phi_i$ that preserve the norm on expectation w.r.t. the weights of each layer up to the penultimate layer, i.e., $\mathbb{E}[\norm{\phi_i(x)}{2}]=\norm{x}{2}$ for $i = 1, \ldots, n - 2$. We first prove the following lemmas.

\begin{figure}[tb]
     \centering
     \begin{tabular}{c c c c}
         \rotatebox[origin=c]{90}{SIREN}
         &
         \raisebox{-.5\height}{\includegraphics[width=0.2\linewidth, trim=30pt 30pt 7pt 37pt, clip]{assets/figures/initialization/sdf_128_siren_init.jpg} }
         &
         \raisebox{-.5\height}{\includegraphics[width=0.2\linewidth, trim=30pt 30pt 7pt 37pt, clip]{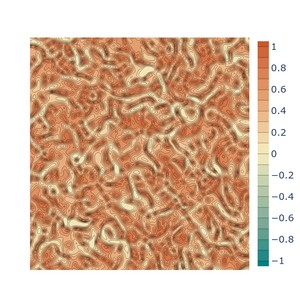} }
         &
         \raisebox{-.5\height}{\includegraphics[width=0.2\linewidth, trim=30pt 30pt 7pt 37pt, clip]{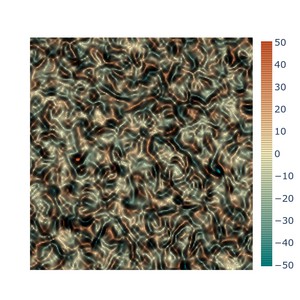}} \\
         \rule{0pt}{0.05ex} &&& \\
         
         \rotatebox[origin=c]{90}{Geometric}
         &
         \raisebox{-.5\height}{\includegraphics[width=0.2\linewidth, trim=30pt 30pt 7pt 37pt, clip]{assets/figures/initialization/sdf_128_geometric_sine_init.jpg} }
         &
         \raisebox{-.5\height}{\includegraphics[width=0.2\linewidth, trim=30pt 30pt 7pt 37pt, clip]{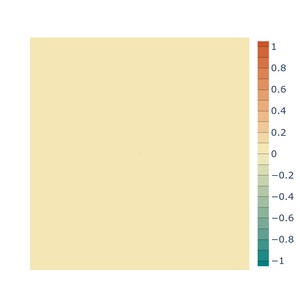} }
         &
         \raisebox{-.5\height}{\includegraphics[width=0.2\linewidth, trim=30pt 30pt 7pt 37pt, clip]{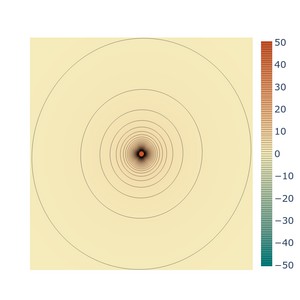}}          \\
         \rule{0pt}{0.05ex} &&& \\

         \rotatebox[origin=c]{90}{MFGI}
         &
         \raisebox{-.5\height}{\includegraphics[width=0.2\linewidth, trim=30pt 30pt 7pt 37pt, clip]{assets/figures/initialization/sdf_128_mfgi_init.jpg} }
         &
         \raisebox{-.5\height}{\includegraphics[width=0.2\linewidth, trim=30pt 30pt 7pt 37pt, clip]{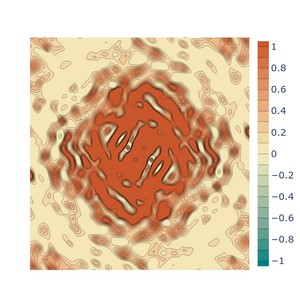} }
         &
         \raisebox{-.5\height}{\includegraphics[width=0.2\linewidth, trim=30pt 30pt 7pt 37pt, clip]{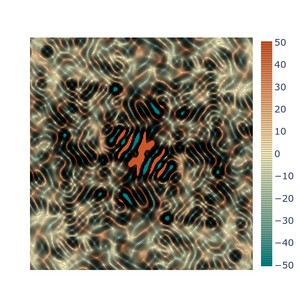}} \\

         &SDF &  Eikonal & divergence \\
     \end{tabular}
        \caption{Visualization of the proposed geometric initialization and multi-frequency geometric initialization for sinusoidal representation networks in 2D compared to Sitzmann et. al. \cite{sitzmann2020siren}. Depicting the  sign distance function (left), eikonal (middle) and, divergence (right).}
        \label{appx:geometric_init}
\end{figure}
\begin{figure}[tb]
    \centering
    \includegraphics[width=0.98\linewidth]{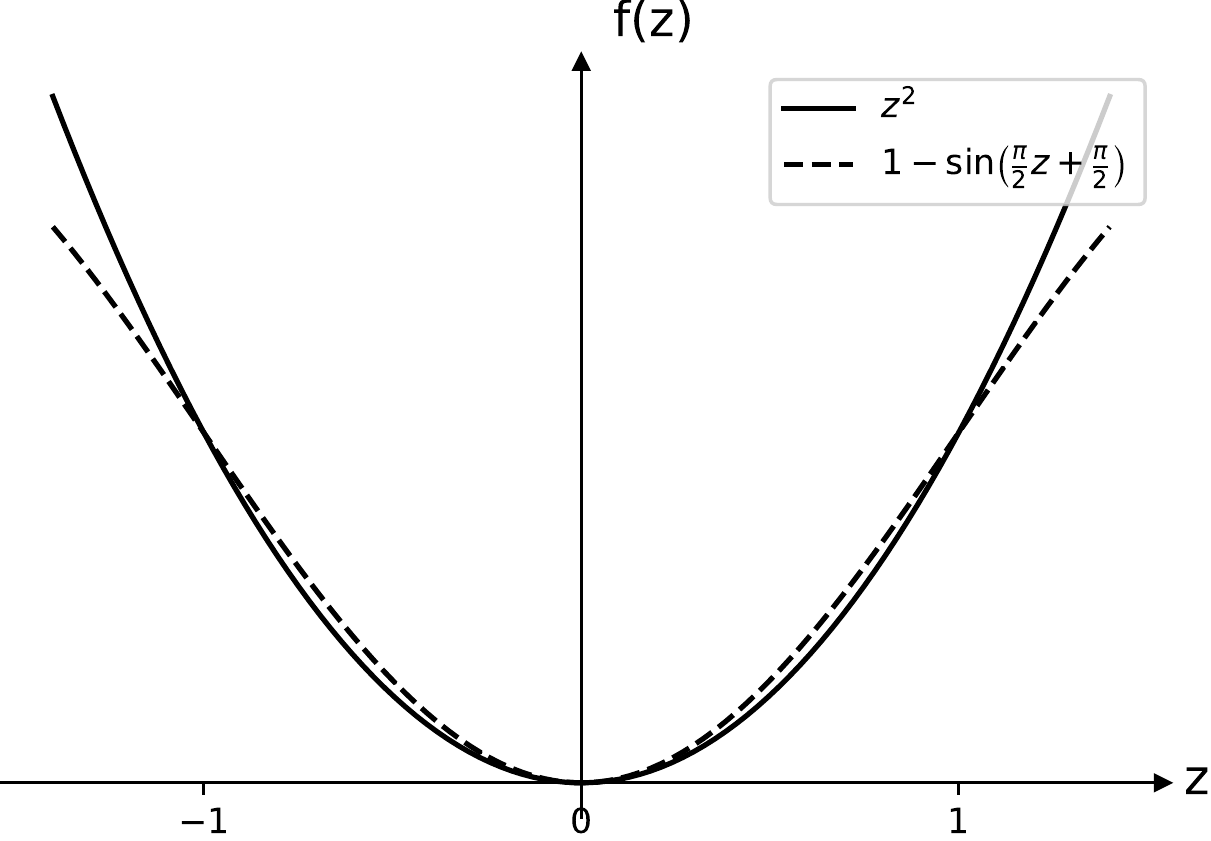}

    \caption{Approximating $z^2$ using $1-\sin\left(\frac{\pi}{2}z+\frac{\pi}{2}\right)$.}
    \label{fig:l1_approx}
\end{figure}

\begin{lemma}\label{lemma:helper1}
    Let $X,Y$ be two random $d$-dimensional vectors with elements $X_i$ and $Y_i$ sampled i.i.d. from $\mathcal{U}\left(-\sqrt{\frac{3}{d}},\sqrt{\frac{3}{d}}\right)$. Then $\expectation{\|X\|} = \expectation{\|Y\|} = 1$ and $\expectation{|\langle X,Y \rangle|} = 0$.
\end{lemma}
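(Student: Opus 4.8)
The plan is to reduce everything to elementary second-moment computations for the uniform distribution and then use Jensen's inequality together with a variance bound to upgrade those second-moment identities to the (approximate) first-moment statements claimed. The whole point of the scale $\sqrt{3/d}$ is that it makes $\mathbb{E}[\|X\|^2]$ exactly $1$, and concentration of $\|X\|^2$ and of $\langle X,Y\rangle$ as $d$ grows is what turns this into $\mathbb{E}[\|X\|]\approx 1$ and $\mathbb{E}[|\langle X,Y\rangle|]\approx 0$.

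First I would record the moments of a single coordinate. If $U\sim\mathcal{U}(-\sqrt{3/d},\sqrt{3/d})$ then $\mathbb{E}[U]=0$ by symmetry and $\mathbb{E}[U^2]=\mathrm{Var}(U)=\tfrac{1}{12}\left(2\sqrt{3/d}\right)^2=\tfrac{1}{d}$. Hence, by linearity, $\mathbb{E}[\|X\|^2]=\sum_{i=1}^d\mathbb{E}[X_i^2]=d\cdot\tfrac1d=1$, and identically $\mathbb{E}[\|Y\|^2]=1$. By Jensen's inequality $\mathbb{E}[\|X\|]\le\sqrt{\mathbb{E}[\|X\|^2]}=1$, and to see the gap is negligible I would bound $\mathrm{Var}(\|X\|^2)=\sum_i\mathrm{Var}(X_i^2)$; since $X_i^2\le 3/d$ this is $O(1/d)$, so $\|X\|^2$ concentrates at $1$ and therefore $\mathbb{E}[\|X\|]=1-O(1/d)$. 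In the regime of interest $d=M_{i+1}$ is the (large) layer width, which is the sense in which $\mathbb{E}[\|X\|]=1$ should be read.

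For the inner product, independence of the coordinates and of $X$ from $Y$ gives $\mathbb{E}[\langle X,Y\rangle]=\sum_{i=1}^d\mathbb{E}[X_i]\,\mathbb{E}[Y_i]=0$. To control $\mathbb{E}[|\langle X,Y\rangle|]$ I would again pass through the second moment: the products $X_iY_i$ are mean-zero and pairwise uncorrelated, so $\mathbb{E}[\langle X,Y\rangle^2]=\sum_{i=1}^d\mathbb{E}[X_i^2]\,\mathbb{E}[Y_i^2]=d\cdot\tfrac{1}{d^2}=\tfrac1d$. Jensen's inequality then yields $\mathbb{E}[|\langle X,Y\rangle|]\le\sqrt{\mathbb{E}[\langle X,Y\rangle^2]}=1/\sqrt d$, which tends to $0$ as the width grows; this is exactly the statement needed downstream to conclude that distinct rows of $\mathbf{W}_i$ are approximately orthogonal (and each of unit norm), so that $\phi_i$ approximately preserves $\|\cdot\|_2$ and \propref{prop:multilayer_init} goes through.

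The computation itself is routine, so there is no serious obstacle; the only point requiring care is honesty about the approximations — the identities $\mathbb{E}[\|X\|]=1$ and $\mathbb{E}[|\langle X,Y\rangle|]=0$ hold only in the large-width limit, with errors $O(1/d)$ and $O(1/\sqrt d)$ respectively, consistent with the $\approx$ conventions used throughout the paper. If one wanted the statements to be exact rather than asymptotic, the clean fix is to restate the lemma as $\mathbb{E}[\|X\|^2]=1$ and $\mathbb{E}[\langle X,Y\rangle]=0$, which the same computation establishes with equality.
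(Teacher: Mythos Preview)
Your proposal is correct and follows essentially the same approach as the paper: compute $\mathbb{E}[X_i]=0$, $\mathbb{E}[X_i^2]=1/d$, deduce $\mathbb{E}[\|X\|^2]=1$, and then invoke concentration (the paper uses the weak law of large numbers / Chebyshev where you use Jensen plus a variance bound) to pass to $\mathbb{E}[\|X\|]\approx 1$. Your treatment of the inner-product claim is actually more careful than the paper's --- the paper only shows $\mathbb{E}[\langle X,Y\rangle]=0$ and stops, whereas you correctly address $\mathbb{E}[|\langle X,Y\rangle|]$ via the second moment and Jensen, and you are explicit that both identities hold only approximately with errors $O(1/d)$ and $O(1/\sqrt d)$.
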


In words, random vectors generated i.i.d. from the above distribution are on average unit norm and orthogonal.

\begin{proof}
    We have that $\expectation{X_i}=\expectation{Y_i}=0$ and $\textbf{var}(X_i)=\textbf{var}(Y_i)=\frac{1}{12}\left(2\sqrt{\frac{3}{d}}\right)^2=\frac{1}{d}$. Thus note that $\expectation{X_i^2}=\expectation{Y_i^2}=\expectation{(X_i-\expectation{X_i})^2}=\textbf{var}(X_i)=\frac{1}{d}.$
    By the weak law of large numbers we have that for any $\varepsilon>0$
    \begin{align}
        \P\left(\left|\frac{X_1^2+...+X_d^2}{d} - \expectation{X_i^2} \right| > \varepsilon\right) \leq \frac{\textbf{var}(X_i^2)}{d\varepsilon^2}\\
        \therefore
        \Pr\left(\left|\frac{\|X\|_2^2}{d} - \frac{1}{d} \right|> \varepsilon\right) \leq \frac{\textbf{var}(X_i^2)}{d\varepsilon^2}\\
        \therefore
        \Pr\left(\left|\frac{\|X\|_2^2}{d} - \frac{1}{d} \right|\leq \varepsilon\right) \geq 1- \frac{\textbf{var}(X_i^2)}{d\varepsilon^2}
    \end{align}
    so $\|X\|_2^2 \approx 1$. It follows that $\expectation{\|X\|_2}=\expectation{\|Y\|_2}=1$.
    
    Furthermore, we have that 
    \begin{align}
        \expectation{\langle X, Y \rangle} 
            &= \expectation{\sum_{i=1}^d X_i Y_i}\\
            &= d\,\expectation{X_iY_i}\\
            \label{eq:appx:independent}
            &= d\,\expectation{X_i}\expectation{Y_i}\\
            &= 0. 
    \end{align}
    where \eqnref{eq:appx:independent} follows from them being independent.
\end{proof}

\begin{lemma}\label{lemma:helper2}
    Consider $A\in\reals^{m\times p}$, $p\leq m$, such that $A_{ij}\sim \mathcal{U}\left(-\sqrt{\frac{3}{m}},\sqrt{\frac{3}{m}}\right)$. Then for any $x\in \reals^{p}$ we have that $\|Ax\|_2\approx \|x\|_2$.
\end{lemma}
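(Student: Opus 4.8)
The plan is to expand $\|Ax\|_2^2$ over the columns of $A$ and then apply Lemma~\ref{lemma:helper1} term by term. Write $a_1,\dots,a_p\in\reals^m$ for the columns of $A$; each $a_j$ has entries drawn i.i.d.\ from $\mathcal{U}\bigl(-\sqrt{3/m},\sqrt{3/m}\bigr)$, so Lemma~\ref{lemma:helper1} applies to each of them with $d=m$, and the hypothesis $p\le m$ is what makes the picture of ``$p$ nearly orthonormal vectors in $\reals^m$'' meaningful. Then
\begin{equation}
    \|Ax\|_2^2 \;=\; \Bigl\|\sum_{j=1}^p x_j a_j\Bigr\|_2^2 \;=\; \sum_{j=1}^p x_j^2\,\|a_j\|_2^2 \;+\; \sum_{j\neq k} x_j x_k\,\langle a_j,a_k\rangle .
\end{equation}

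First I would handle the diagonal sum: by Lemma~\ref{lemma:helper1} (and the concentration of $\|a_j\|_2^2/m$ about $1/m$ that its proof establishes via the weak law of large numbers) we have $\|a_j\|_2^2\approx 1$ for each $j$, hence $\sum_j x_j^2\|a_j\|_2^2\approx\sum_j x_j^2=\|x\|_2^2$. Next I would argue the off-diagonal sum is negligible. Each $\langle a_j,a_k\rangle$ with $j\ne k$ has mean $0$ by Lemma~\ref{lemma:helper1}, and since the entries are independent with $\expectation{A_{ij}^2}=1/m$ it has variance $\sum_{i=1}^m\expectation{A_{ij}^2}\expectation{A_{ik}^2}=1/m$. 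Moreover the inner products $\langle a_j,a_k\rangle$ over distinct unordered pairs $\{j,k\}$ are pairwise uncorrelated: two such pairs are either disjoint, hence independent, or share exactly one column, in which case entries of the two non-shared columns enter the product to the first power, forcing its expectation to vanish. Consequently $\textbf{var}\bigl(\sum_{j\neq k}x_jx_k\langle a_j,a_k\rangle\bigr)=O\!\bigl(\|x\|_2^4/m\bigr)$, so Chebyshev's inequality bounds the off-diagonal sum by $O(\|x\|_2^2/\sqrt m)$, i.e.\ a relative error of order $1/\sqrt m$, uniformly in $x$.

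Combining the two contributions gives $\|Ax\|_2^2\approx\|x\|_2^2$, and taking square roots (legitimate since $\|x\|_2^2>0$ for $x\neq 0$ and the relative error is small) yields $\|Ax\|_2\approx\|x\|_2$. I expect the variance bookkeeping for the off-diagonal term to be the main obstacle: the estimate works only because the cross terms are (at least pairwise) uncorrelated, so that their \emph{variances} add rather than their magnitudes; a cruder bound that sums $|x_jx_k|\,|\langle a_j,a_k\rangle|$ would give an error of order $\|x\|_1^2/\sqrt m$, which need not be small relative to $\|x\|_2^2$. Everything else follows mechanically from applying Lemma~\ref{lemma:helper1} to the columns of $A$.
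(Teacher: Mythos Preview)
Your proof is correct and follows essentially the same route as the paper: both use Lemma~\ref{lemma:helper1} to conclude that the columns of $A$ are approximately orthonormal, which the paper summarizes as $A^TA\approx I_p$ and then computes $\|Ax\|_2^2=x^TA^TAx\approx x^TIx=\|x\|_2^2$. Your expansion of $x^TA^TAx$ into diagonal and off-diagonal contributions and your explicit variance bookkeeping for the cross terms simply spell out what the paper compresses into the single approximation $A^TA\approx I_p$.
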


\begin{proof}
By \lemref{lemma:helper1} the $p$ columns of $A$, which are of dimension $m$, are on average of unit norm and orthogonal to each other (note $p\leq m$). As a result $A^TA\approx I_p$, so
\begin{align*}
    \|Ax\|_2^2
        = x^T A^T A x
        \approx x^T Ix
        = \|x\|_2^2
\end{align*}
implying that $\|Ax\|_2\approx \|x\|_2$.
\end{proof}

\begin{lemma}\label{lemma:helper3}
    Consider $A\in\reals^{m\times p}$, $m>>10$, such that $A_{ij}\sim \mathcal{U}\left(-\sqrt{\frac{3}{m}},\sqrt{\frac{3}{m}}\right)$. Then for $x\in \reals^{p}$ s.t. $\|x\|_2\leq 1$ we have that $\sin(Ax)\approx Ax$.
\end{lemma}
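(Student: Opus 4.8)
The plan is to show that every coordinate of $Ax$ is of order $m^{-1/2}$, hence small, so that $Ax$ lies in the regime where $\sin$ agrees with the identity to high accuracy; combined with $\sin t = t + O(t^3)$ near the origin this yields $\sin(Ax) \approx Ax$ entrywise.

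First I would record the first two moments of a single entry, exactly as in the proof of \lemref{lemma:helper1}: $\expectation{A_{ij}} = 0$ and $\textbf{var}(A_{ij}) = \frac{1}{12}\bigl(2\sqrt{3/m}\bigr)^2 = \frac{1}{m}$. Since the entries of a fixed row are independent, the $i$-th coordinate $(Ax)_i = \sum_{j=1}^p A_{ij}x_j$ has mean $0$ and variance $\sum_{j=1}^p x_j^2\,\textbf{var}(A_{ij}) = \frac{\|x\|_2^2}{m} \le \frac{1}{m}$, the last step using $\|x\|_2 \le 1$. By Chebyshev's inequality (the tool already used in \lemref{lemma:helper1}),
\[
\Pr\bigl(|(Ax)_i| > \varepsilon\bigr) \;\le\; \frac{\|x\|_2^2}{m\varepsilon^2} \;\le\; \frac{1}{m\varepsilon^2},
\]
so, because $m \gg 10$, each coordinate of $Ax$ concentrates at $0$ with standard deviation at most $m^{-1/2}$; equivalently, with probability at least $1-\delta$ one has $|(Ax)_i| \le 1/\sqrt{m\delta}$, which is small for large $m$.

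Finally I would apply the elementary Taylor bound $|\sin t - t| \le |t|^3/6$: on the event above, $|\sin((Ax)_i) - (Ax)_i| \le \frac{1}{6}|(Ax)_i|^3 = O(m^{-3/2})$ for every $i$, i.e.\ $\sin(Ax) \approx Ax$ coordinatewise. This is precisely the fact needed to conclude that a hidden layer $\phi_i(x) = \sin(\mathbf{W}_i x)$ behaves like the linear map $\mathbf{W}_i x$, which by \lemref{lemma:helper2} approximately preserves $\|x\|_2$.

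The main obstacle is one of formalization rather than of content: the statement uses the informal symbol $\approx$, so one must fix a precise meaning — here, entrywise error $O(m^{-3/2})$ with probability tending to $1$ as $m\to\infty$ — and check that the coordinate bound is \emph{uniform} over the unit ball, which is exactly what the inequality $\|x\|_2 \le 1$ in the variance computation provides. Two subtleties are worth noting: a purely deterministic Cauchy--Schwarz bound $|(Ax)_i| \le \|a_i\|_2\|x\|_2$ does not suffice, since $\expectation{\|a_i\|_2^2} = p/m$ need not be small; and if one wants to control all $m$ coordinates simultaneously, Chebyshev plus a union bound is too lossy, so one should instead invoke a sub-Gaussian tail (e.g.\ Hoeffding, valid because the $A_{ij}$ are bounded), which gives $\|Ax\|_\infty = O(\sqrt{(\log m)/m})$ with high probability.
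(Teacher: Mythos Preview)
Your argument is correct and reaches the same conclusion as the paper, but by a cleaner route. The paper first invokes \lemref{lemma:helper2} to get $\|Ax\|_2\approx\|x\|_2\le 1$, then argues heuristically that because the $m$ coordinates of $Ax$ are ``randomly distributed'' and their squares must sum to at most about $1$ with $m\gg 10$, each coordinate satisfies $|(Ax)_i|<0.2$ with high probability, placing it in the linear regime of $\sin$. You bypass the norm-preservation lemma entirely and compute the coordinate variance $\textbf{var}((Ax)_i)=\|x\|_2^2/m\le 1/m$ directly, then apply Chebyshev and the Taylor bound $|\sin t-t|\le|t|^3/6$ to get an explicit $O(m^{-3/2})$ entrywise error. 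This is more quantitative and self-contained; the paper's version is terser but relies on an informal ``evenly spread'' intuition rather than a moment computation. Your closing remarks on why deterministic Cauchy--Schwarz fails and why Hoeffding is needed for a uniform-in-$i$ statement are valid refinements that the paper does not address.
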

\begin{proof}
By \lemref{lemma:helper2} we have that $\|Ax\|_2\approx \|x\|_2$. Furthermore since $A$ is generated uniform randomly, the values of $Ax\in\reals^m$ should be randomly distributed, therefore as $\|Ax\|_2\approx \|x\|_2\leq 1$ and $m \gg 10$, with high probability $|(Ax)_i|< 0.2$. Thus $\sin\left((Ax)_i\right)\approx (Ax)_i$ (as it is within the linear region of sine), so $\sin(Ax)\approx Ax$.
\end{proof}

\begin{customprop}{4.2}
\label{prop:appx_multi_layer}
Let $\Phi$ be a $n$-hidden layer SIREN (\eqnref{Eq:SIREN}) 
that maps from $\mathbb{R}^{M_0}\to\mathbb{R}$ and $\|x\|_2\leq 1$. Set $\mathbf{W}_i\sim\mathcal{U}\left(-c^i_{wr},c^i_{wr}\right)$, 
$c^i_{wr}=\sqrt{\frac{3}{M_{i+1}}}$,
$\mathbf{b}_i=\mathbf{0}$ for $0\leq i \leq n-2$ and $\mathbf{W}_{n-1}=\frac{\pi}{2}I$, $\mathbf{b}_{n-1}=\frac{\pi}{2} \mathbf{1}$, $\mathbf{w}_n=-\mathbf{1}$ and $b_n=M_{n}$. Then $\nu(\Phi(x))\approx \|x\|_2$.
\end{customprop}

\begin{proof}
We first prove that the input to the last hidden layer, $x_{n-1}$, has the property that $\|x_{n-1}\|_2\approx\|x\|_2$. \propref{prop:appx_single_layer} then implies that $\nu(\Phi(x))\approx \|x_{n-1}\|_2\approx \|x\|_2$, as $x_{n-1}$ is essentially the input to a one hidden layer network satisfying \propref{prop:appx_single_layer}.

Now for for $0\leq i\leq n-2$, if $\|x_i\|_2\leq 1$,
\begin{align}
    \|x_i\|_2
        &\approx \|\mathbf{W}_ix_i\|_2 \tag{$\ast$}\\
        &\approx \|\sin(\mathbf{W}_ix_i+\mathbf{b}_i)\|_2 \tag{$\dagger$}\\
        &= \|x_{i+1}\|_2
\end{align}
where $\ast$ holds from \lemref{lemma:helper2} and $\dagger$ holds due to $\mathbf{b}_i=\mathbf{0}$ and  \lemref{lemma:helper3}. 

Thus as $\|x_0\|_2=\|x\|_2\leq 1$, by induction $\|x\|\approx \|x_{n-1}\|_2$.
\end{proof}

\begin{figure*}
     \centering
     \begin{tabular}{m{1em} c c c c c c}
     \setlength\tabcolsep{0pt}
     \rotatebox[origin=c]{90}{\parbox{1.5cm}{\centering\scriptsize \textbf{Our DiGS}\\L}} &
     \raisebox{-0.5\height}{\includegraphics[width=0.12\textwidth,trim=30pt 30pt 7pt 37pt,clip]{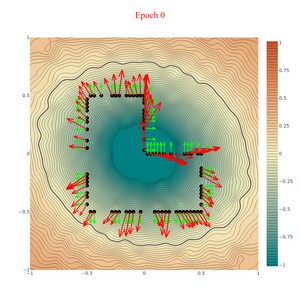}} &
     \raisebox{-0.5\height}{\includegraphics[width=0.12\textwidth,trim=30pt 30pt 7pt 37pt,clip]{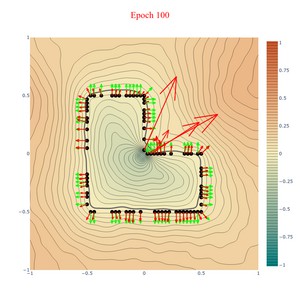}} &
     \raisebox{-0.5\height}{\includegraphics[width=0.12\textwidth,trim=30pt 30pt 7pt 37pt,clip]{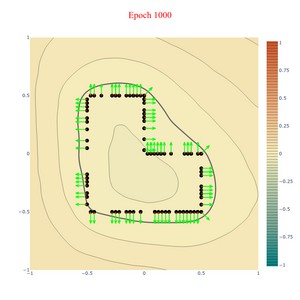}} &
     \raisebox{-0.5\height}{\includegraphics[width=0.12\textwidth,trim=30pt 30pt 7pt 37pt,clip]{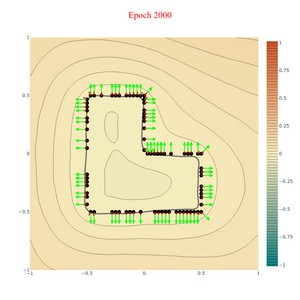}} &
     \raisebox{-0.5\height}{\includegraphics[width=0.12\textwidth,trim=30pt 30pt 7pt 37pt,clip]{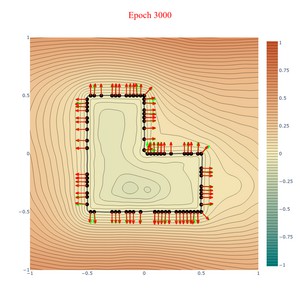}} &
     \raisebox{-0.5\height}{\includegraphics[width=0.12\textwidth,trim=30pt 30pt 7pt 37pt,clip]{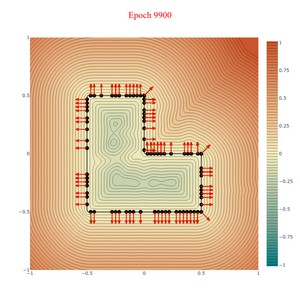}}
     \\
     \rotatebox[origin=c]{90}{\parbox{1.5cm}{\centering\scriptsize SIREN wo n\\ L}} &
     \raisebox{-0.5\height}{\includegraphics[width=0.12\textwidth,trim=30pt 30pt 7pt 37pt,clip]{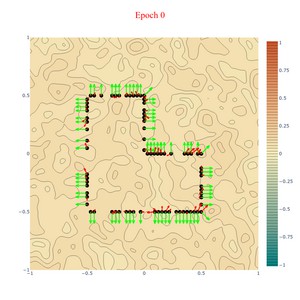}} &
     \raisebox{-0.5\height}{\includegraphics[width=0.12\textwidth,trim=30pt 30pt 7pt 37pt,clip]{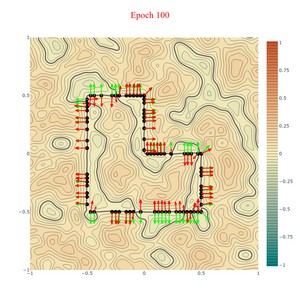}} &
     \raisebox{-0.5\height}{\includegraphics[width=0.12\textwidth,trim=30pt 30pt 7pt 37pt,clip]{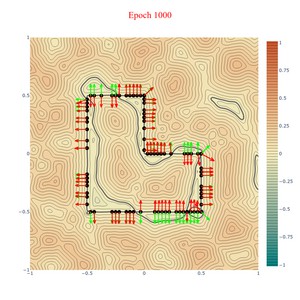}} &
     \raisebox{-0.5\height}{\includegraphics[width=0.12\textwidth,trim=30pt 30pt 7pt 37pt,clip]{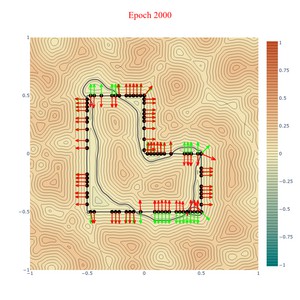}} &
     \raisebox{-0.5\height}{\includegraphics[width=0.12\textwidth,trim=30pt 30pt 7pt 37pt,clip]{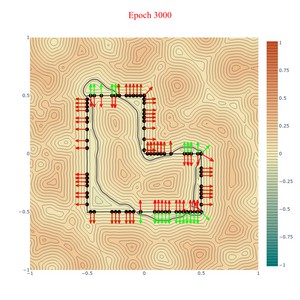}} &
     \raisebox{-0.5\height}{\includegraphics[width=0.12\textwidth,trim=30pt 30pt 7pt 37pt,clip]{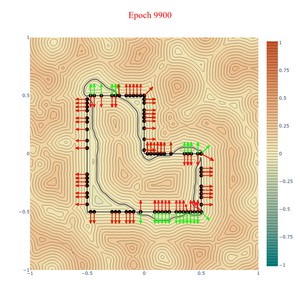}}
     \\
     \rotatebox[origin=c]{90}{\parbox{1.5cm}{\centering\scriptsize \textbf{Our DiGS}\\ Snowflake}} &
     \raisebox{-0.5\height}{\includegraphics[width=0.12\textwidth,trim=30pt 30pt 7pt 37pt,clip]{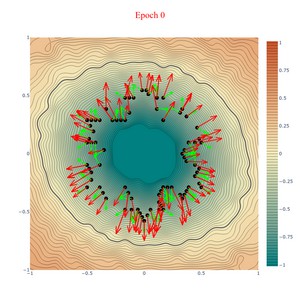}} &
     \raisebox{-0.5\height}{\includegraphics[width=0.12\textwidth,trim=30pt 30pt 7pt 37pt,clip]{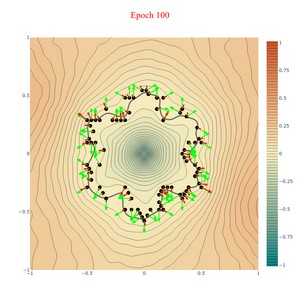}} &
     \raisebox{-0.5\height}{\includegraphics[width=0.12\textwidth,trim=30pt 30pt 7pt 37pt,clip]{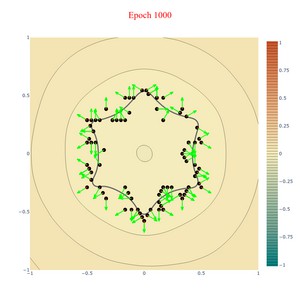}} &
     \raisebox{-0.5\height}{\includegraphics[width=0.12\textwidth,trim=30pt 30pt 7pt 37pt,clip]{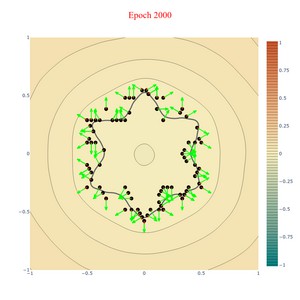}} &
     \raisebox{-0.5\height}{\includegraphics[width=0.12\textwidth,trim=30pt 30pt 7pt 37pt,clip]{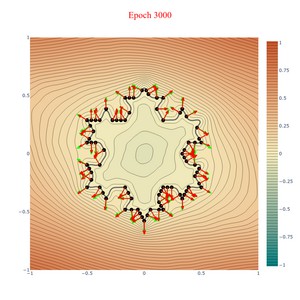}} &
     \raisebox{-0.5\height}{\includegraphics[width=0.12\textwidth,trim=30pt 30pt 7pt 37pt,clip]{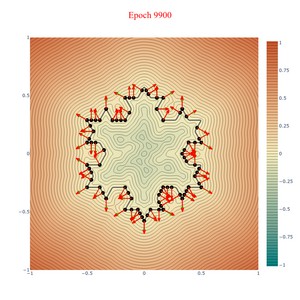}}
     \\
     \rotatebox[origin=c]{90}{\parbox{1.5cm}{\centering\scriptsize SIREN wo n\\ Snowflake}} &
     \raisebox{-0.5\height}{\includegraphics[width=0.12\textwidth,trim=30pt 30pt 7pt 37pt,clip]{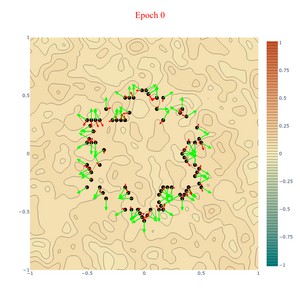}} &
     \raisebox{-0.5\height}{\includegraphics[width=0.12\textwidth,trim=30pt 30pt 7pt 37pt,clip]{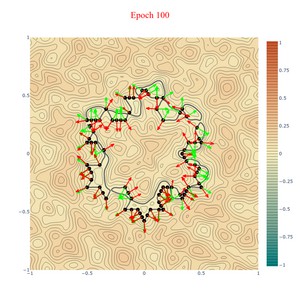}} &
     \raisebox{-0.5\height}{\includegraphics[width=0.12\textwidth,trim=30pt 30pt 7pt 37pt,clip]{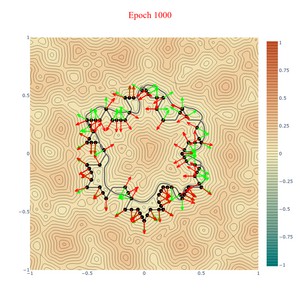}} &
     \raisebox{-0.5\height}{\includegraphics[width=0.12\textwidth,trim=30pt 30pt 7pt 37pt,clip]{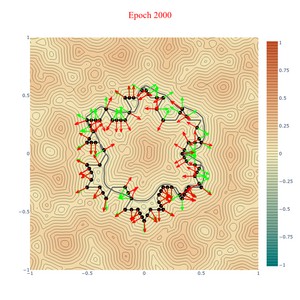}} &
     \raisebox{-0.5\height}{\includegraphics[width=0.12\textwidth,trim=30pt 30pt 7pt 37pt,clip]{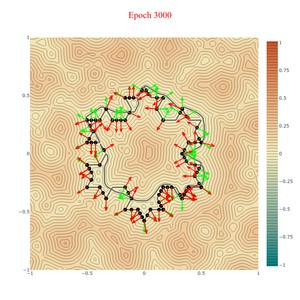}} &
     \raisebox{-0.5\height}{\includegraphics[width=0.12\textwidth,trim=30pt 30pt 7pt 37pt,clip]{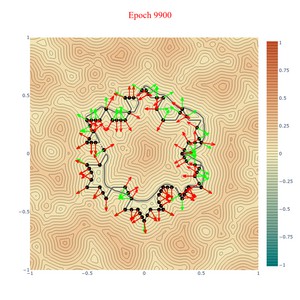}}
     \\
     \rotatebox[origin=c]{90}{\parbox{1.5cm}{\centering\scriptsize DiGS\\ Phase}}
     &
     Init & $\substack{\text{Middle of}\\\text{High Phase}}$ & $\substack{\text{Annealing}\\\text{Phase Starts}}$ & $\substack{\text{Middle of}\\\text{Annealing Phase}}$ & $\substack{\text{Low Phase}\\\text{Starts}}$ & Final Shape
     \end{tabular}
    \caption{Visualisation of DiGS and SIREN wo n at six points during training for 2D shapes. The 6 points are labelled according to their position relative to the four phases of DiGS' training. A contour plot of the learned function is shown: the black dots are surface points, the green arrows are ground truth normals and the red arrows are the current normals at those points.}
    \label{fig:appx:training-procedure-2D}
\end{figure*}
     
\begin{figure*}
     \centering
     \begin{tabular}{m{1em} c c c c c c}
     \setlength\tabcolsep{0pt}
     \rotatebox[origin=c]{90}{\parbox{1.5cm}{\centering\scriptsize \textbf{Our DiGS}\\ Gargoyle}} &
     \raisebox{-0.5\height}{\includegraphics[width=0.12\textwidth]{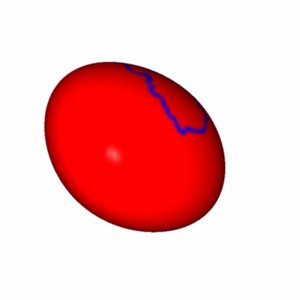}} &
     \raisebox{-0.5\height}{\includegraphics[width=0.12\textwidth]{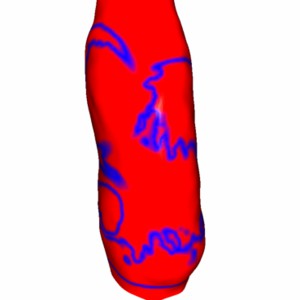}} &
     \raisebox{-0.5\height}{\includegraphics[width=0.12\textwidth]{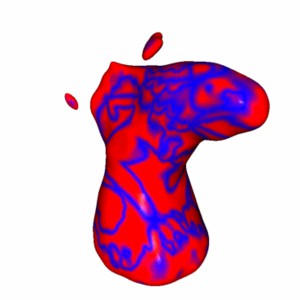}} &
     \raisebox{-0.5\height}{\includegraphics[width=0.12\textwidth]{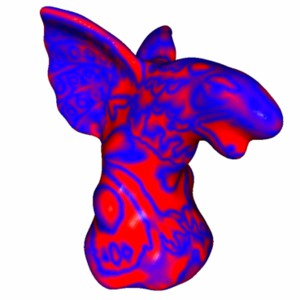}} &
     \raisebox{-0.5\height}{\includegraphics[width=0.12\textwidth]{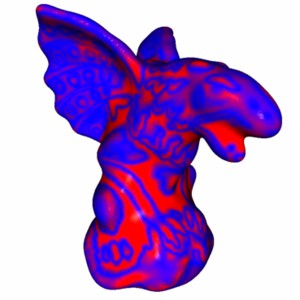}} &
     \raisebox{-0.5\height}{\includegraphics[width=0.12\textwidth]{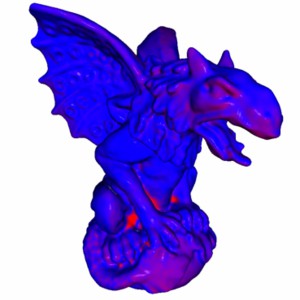}}
     \\
     \rotatebox[origin=c]{90}{\parbox{1.5cm}{\centering\scriptsize SIREN wo n\\ Gargoyle}} &
      &
     \raisebox{-0.5\height}{\includegraphics[width=0.12\textwidth]{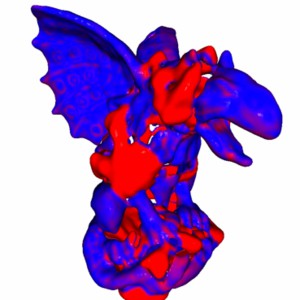}} &
     \raisebox{-0.5\height}{\includegraphics[width=0.12\textwidth]{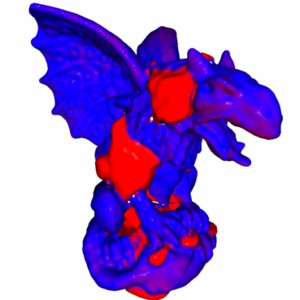}} &
     \raisebox{-0.5\height}{\includegraphics[width=0.12\textwidth]{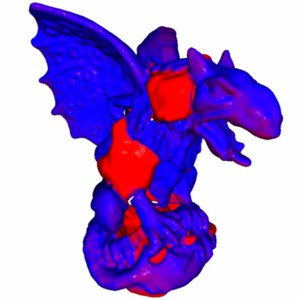}} &
     \raisebox{-0.5\height}{\includegraphics[width=0.12\textwidth]{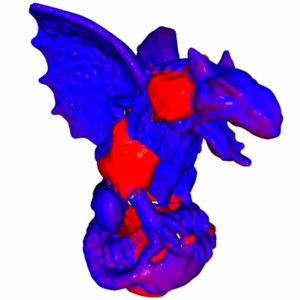}} &
     \raisebox{-0.5\height}{\includegraphics[width=0.12\textwidth]{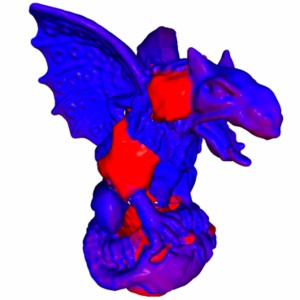}}
     \\
     \rotatebox[origin=c]{90}{\parbox{1.5cm}{\centering\scriptsize \textbf{Our DiGS}\\ DC}} &
     \raisebox{-0.5\height}{\includegraphics[width=0.12\textwidth]{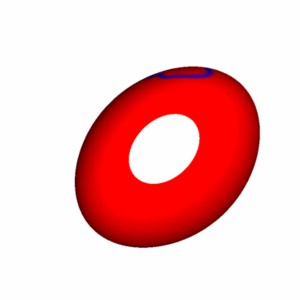}} &
     \raisebox{-0.5\height}{\includegraphics[width=0.12\textwidth]{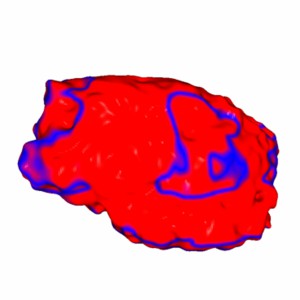}} &
     \raisebox{-0.5\height}{\includegraphics[width=0.12\textwidth]{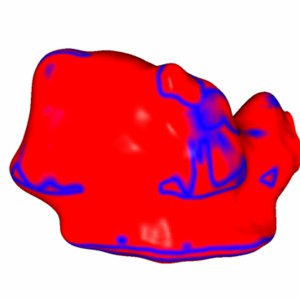}} &
     \raisebox{-0.5\height}{\includegraphics[width=0.12\textwidth]{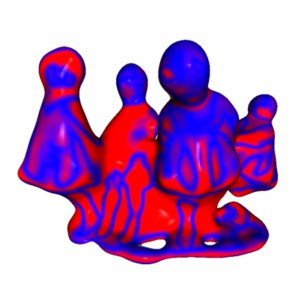}} &
     \raisebox{-0.5\height}{\includegraphics[width=0.12\textwidth]{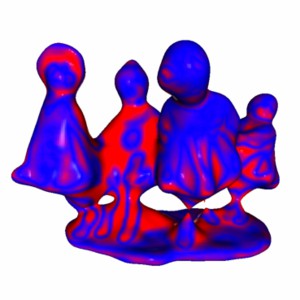}} &
     \raisebox{-0.5\height}{\includegraphics[width=0.12\textwidth]{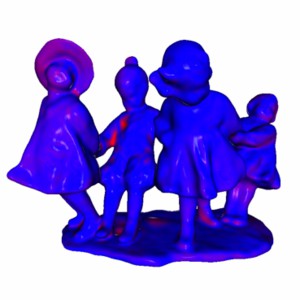}}
     \\
     \rotatebox[origin=c]{90}{\parbox{1.5cm}{\centering\scriptsize SIREN wo n\\ DC}} &
      &
     \raisebox{-0.5\height}{\includegraphics[width=0.12\textwidth]{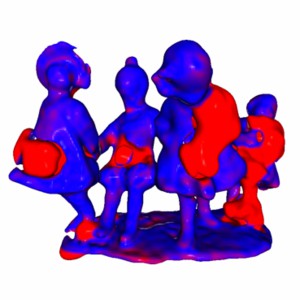}} &
     \raisebox{-0.5\height}{\includegraphics[width=0.12\textwidth]{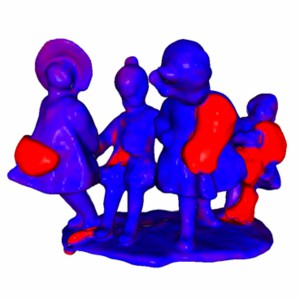}} &
     \raisebox{-0.5\height}{\includegraphics[width=0.12\textwidth]{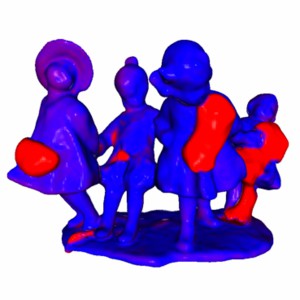}} &
     \raisebox{-0.5\height}{\includegraphics[width=0.12\textwidth]{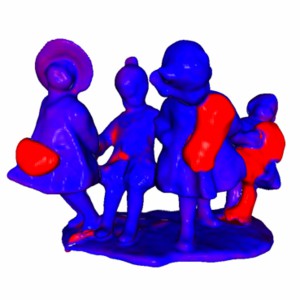}} &
     \raisebox{-0.5\height}{\includegraphics[width=0.12\textwidth]{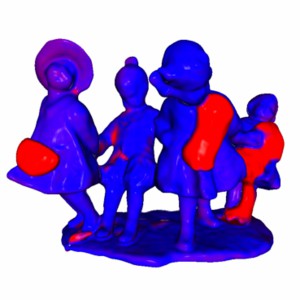}}
     \\
     \rotatebox[origin=c]{90}{\parbox{1.5cm}{\centering\scriptsize DiGS\\ Phase}}
     &
     Init & $\substack{\text{Middle of}\\\text{High Phase}}$ & $\substack{\text{Annealing}\\\text{Phase Starts}}$ & $\substack{\text{Middle of}\\\text{Annealing Phase}}$ & $\substack{\text{Low Phase}\\\text{Starts}}$ & Final Shape
     \end{tabular}
    \caption{Visualisation of DiGS and SIREN wo n at six points during training on 3D shapes from SRB~\cite{berger2013benchmark}. The 6 points are labelled according to their position relative to the four phases of DiGS' training. The current zero level set at the point is shown and colored by the distance to the closest ground truth point (thresholded at one). As SIREN wo n does not have a zero level set at initialization (see the contour plot for the 2D examples), there is no visualization for its initialization in 3D. }
    \label{fig:appx:training-procedure-3D}
\end{figure*}
\subsubsection{DiGS training procedure comparison}
Following \secref{sec:overall-method}, we provide further visualizations in \figref{fig:appx:training-procedure-2D} and \figref{fig:appx:training-procedure-3D} to compare the difference between the training of DiGS and SIREN wo n, highlighting the four phases of DiGS. SIREN wo n fits to the shape very quickly, but does not do so in a consistent manner and thus has multiple surfaces interpolating the surface points that have differing orientations for what is inside and out (best seen with the 2D contours). It then tries to refine and improve upon this, but is stuck with the ghost geometry from its early fitting. DiGS on the other hand has a structured training procedure that prevents this: it slowly changes the SDF for the noisy sphere, allowing more and more details, which reduces incorrect orientation and thus ghost geometry from occurring.  As SIREN wo n does not have a zero level set at initialization (see the contour plot for the 2D examples), there is no visualization for its initialization in 3D. Note that the initialization for DC is the same noisy sphere as the initialization for the gargoyle shape, however it may appear slightly different because it is rotated and cut due to the bounding box for the DC shape.

\clearpage

\subsection{Additional experiments and detail}
\label{sec:appx:additoinal_results}

Code and model parameters available at our project page \url{https://chumbyte.github.io/DiGS-Site/}.

\subsubsection{Evaluation metrics}
To compare between two point sets $\mathcal{X}_1, \mathcal{X}_2 \subset \mathbb{R}^3 $, we use the Chamfer ($d_C$) and Hausdorff ($d_H$) distances from Williams~et~al.~\cite{williams2019DGP}. For the ShapeNet dataset, instead of this Chamfer distance we report the squared Chamfer to be consistent with previous works \cite{park2019deepsdf,williams2020neural_splines}. Note that these metrics compares the accuracy of the predicted surface of the shape.
To compare between the implicit function learnt and the ground truth mesh, we use the volumetric Intersection over Union (IoU) of the interior of the shapes as per Mescheder~et~al.~\cite{mescheder2019occupancy}. Note that this metric compares the underlying occupancy (interior) predicted by the method.

We define the 
\begin{align}
    \label{eq:appx:chamfer}
    d_C(\mathcal{X}_1, \mathcal{X}_2) &= \frac{1}{2}(d_{\vec{C}}(\mathcal{X}_1, \mathcal{X}_2) + d_{\vec{C}}(\mathcal{X}_2, \mathcal{X}_1))\\
    \label{eq:appx:hausdorff}
    d_H(\mathcal{X}_1, \mathcal{X}_2) &= \max(d_{\vec{H}}(\mathcal{X}_1, \mathcal{X}_2),  d_{\vec{H}}(\mathcal{X}_2, \mathcal{X}_1))\\
\end{align}
where
\begin{align}
    d_{\vec{C}}(\mathcal{X}_1, \mathcal{X}_2) &= \frac{1}{\abs{\mathcal{X}_1}} \sum_{x_1 \in \mathcal{X}_1 } \min_{x_2 \in \mathcal{X}_2} \norm{x_1 - x_2}{2}\\
    d_{\vec{H}}(\mathcal{X}_1, \mathcal{X}_2) &= \max_{x_1 \in \mathcal{X}_1 } \min_{x_2 \in \mathcal{X}_2} \norm{x_1 - x_2}{2}
\end{align}
are the one directional Chamfer distance and one directional Hausdorff distance respectively. 

We define the Chamfer distance used in Park~et~al.~\cite{park2019deepsdf} as the squared Chamfer distance, given by

\begin{align}
    \label{eq:appx:squared-chamfer}
    d_C^{sq}(\mathcal{X}_1, \mathcal{X}_2) &= d_{\vec{C}}^{sq}(\mathcal{X}_1, \mathcal{X}_2) + d_{\vec{C}}^{sq}(\mathcal{X}_2, \mathcal{X}_1)\\
    d_{\vec{C}}^{sq}(\mathcal{X}_1, \mathcal{X}_2) &= \frac{1}{\abs{\mathcal{X}_1}} \sum_{x_1 \in \mathcal{X}_1 } \min_{x_2 \in \mathcal{X}_2} \norm{x_1 - x_2}{2}^2.\\
\end{align}

For the volumetric IoU, following Mescheder~et~al.~\cite{mescheder2019occupancy} we obtain unbiased estimates of the occupied (interior) volume of the shapes by randomly sampling 100k points $\mathcal{X}$ in the space. Thus given the known occupancy of the ground truth mesh $O_{GT}(x)\in\{0,1\}$ and the predicted SDF $\Phi(x)\in \reals$ for a points $x\in\mathcal{X}$, the occupancy of the SDF is given by
\begin{align}
    O_{\Phi}(x) = 
        \begin{cases}
            1 & \Phi(x)< 0\\
            0 & \text{otherwise}
        \end{cases}
\end{align}
and the IoU is given by
\begin{align}
    \label{eq:appx:IoU}
    IoU_{\mathcal{X}}(O_{GT},O_{\Phi}) &= \frac{\sum_{x\in\mathcal{X}} O_{GT}(x) \text{ and } O_{\Phi}(x)}{\sum_{x\in\mathcal{X}}O_{GT}(x) \text{ or } O_{\Phi}(x)}.
\end{align}

\begin{table}
    \centering
     \setlength\tabcolsep{3.0pt} 
    \begin{tabular}{c c c c}
        \toprule
         method & SIREN & IGR & DiGS  \\
         \hline
         time [ms] & 5.2 & 17.5 & 12.0\\ 
         \# parameters & ~66.5K & ~2.1M & ~66.5K\\
         \bottomrule
    \end{tabular}
    \caption{Time performance results per iteration. Comparing standard SIREN (excluding the normal estimation stage time) to DiGS.
    Results are reported in milliseconds (ms).}
    \label{appx:timing_results}
\end{table}

\subsubsection{Experimental setup and timing performance}
The surface reconstruction experiments on the Surface Reconstruction Benchmark dataset \cite{berger2013benchmark} were implemented in PyTorch and trained on a single Nvidia RTX 2080 GPU. 
The architecture for 2D experiments our method is a 4 layer MLP with sinusoidal activation (SIREN) with 128 nodes in each layer.
The architecture for 3D reconstruction experiments our method is a 4 layer MLP with sinusoidal activation (SIREN) with 256 nodes in each layer, and for scene reconstruction we increase that to 8 layers and 512 nodes. For shapespace we used an 8 layer MLP with sinusoidal activation (SIREN) with 512 nodes in each layer. 
Our method does not add parameters to the standard SIREN approach. \\
Note that due computing the divergence term, there is an increase in computation time which is mainly attributed to computing (and back propagating) the gradient of the gradient. At inference time SIREN and DiGS have the same time and computation performance.  Note that to train the standard SIREN, a preprocessing normal estimation stage is required. The number of parameters and timing performance are reported in \tabref{appx:timing_results} for the 2D reconstruction networks. It shows the increase in training time (per iteration) for DiGS compared to SIREN, however DiGS is still faster and has fewer parameters compared to IGR.


\begin{table*}[]\scriptsize
    \renewcommand{\arraystretch}{1.2}
    \centering
    \resizebox{\textwidth}{!}{
    \begin{tabular}{l c c | c c c c | c c c c | c c c c | c c c c | c c c c}
         \toprule
         & \multicolumn{2}{c}{Mean}
         & \multicolumn{4}{c}{Anchor} & \multicolumn{4}{c}{Daratech} & \multicolumn{4}{c}{DC} 
         & \multicolumn{4}{c}{Gargoyle} & \multicolumn{4}{c}{Lord Quas}\\ 
                    & \multicolumn{2}{c}{GT}
                    & \multicolumn{2}{c}{GT} & \multicolumn{2}{c}{Scans}
                    & \multicolumn{2}{c}{GT} & \multicolumn{2}{c}{Scans}
                    & \multicolumn{2}{c}{GT} & \multicolumn{2}{c}{Scans}
                    & \multicolumn{2}{c}{GT} & \multicolumn{2}{c}{Scans}
                    & \multicolumn{2}{c}{GT} & \multicolumn{2}{c}{Scans} \\
          Method    & $d_{C}$ & $d_H$ 
                    & $d_{C}$ & $d_H$ & $d_{\vec{C}}$ & $d_{\vec{H}}$
                    & $d_{C}$ & $d_H$ & $d_{\vec{C}}$ & $d_{\vec{H}}$
                    & $d_{C}$ & $d_H$ & $d_{\vec{C}}$ & $d_{\vec{H}}$
                    & $d_{C}$ & $d_H$ & $d_{\vec{C}}$ & $d_{\vec{H}}$
                    & $d_{C}$ & $d_H$ & $d_{\vec{C}}$ & $d_{\vec{H}}$ \\
            \hline
            
            DGP     
                    & 0.21 & 5.18
                    & 0.33 & 8.82 & \textbf{0.08} & 2.79 
                    & 0.20 & 3.14 & \textbf{0.04} & 1.89
                    & 0.18 & 4.31 & \textbf{0.04} & \textbf{2.53}
                    & 0.21 & 5.98 & \textbf{0.06} & 3.41 
                    & 0.14 & 3.67 & \textbf{0.04} & 2.03\\
            IGR     
                    & 0.19 & 2.99
                    & 0.23 & 4.71 & 0.12 & 1.32
                    & 0.25 & 4.01 & 0.14 & 1.59
                    & 0.17 & 2.22 & 0.09 & 2.61
                    & 0.18 & \textbf{2.85} & 0.1 & 1.29
                    & 0.12 & 1.17 & 0.07 & 0.98 \\ 
            SIREN   
                    & 0.19 & 3.86
                    & 0.31 & 7.32 & 0.11 & 1.23
                    & 0.21 & 4.74 & 0.09 & 1.85 
                    & 0.15 & 2.37 & 0.07 & 2.71 
                    & 0.17 & 4.26 & 0.09 & \textbf{0.82} 
                    & 0.12 & \textbf{0.62} & 0.08 & 0.81 \\
            NSP
                    & 0.17 & 2.85
                    & 0.22 & 4.65 & 0.11 & \textbf{1.11}
                    & 0.21 & 4.35 & 0.08 & \textbf{1.14}
                    & \textbf{0.14} & \textbf{1.35} & 0.06 & 2.75
                    & \textbf{0.16} & 3.20 & 0.08 & 2.75
                    & 0.12 & 0.69 & 0.05 & \textbf{0.62}\\
            PHASE
                    & \textbf{0.16} & \textbf{2.77}
                    & \textbf{0.21} & \textbf{4.29} & 0.09 & 1.23
                    & \textbf{0.18} & \textbf{2.92} & 0.08 & 1.80
                    & 0.15 & 2.52 & 0.05 & 2.78
                    & \textbf{0.16} & 3.14 & 0.07 & 1.09
                    & \textbf{0.11} & 0.96 & 0.04 & 0.96\\
            DiGS + n
                    & 0.18 & 3.55
                    & 0.28 & 5.71 & 0.11 & 1.14 
                    & 0.21 & 5.02 & 0.09 & 1.75
                    & 0.15 & 2.13 & 0.06 & 2.74 
                    & \textbf{0.16} & 3.81 & 0.09 & 0.90 
                    & 0.12 & 1.1 & 0.06 & 0.77 \\

            \hline

            IGR wo n
                    & 1.38 & 16.33
                    & 0.45 & 7.45 & 0.17 & 4.55 
                    & 4.9 & 42.15 & 0.7 & 3.68
                    & 0.63 & 10.35 & 0.14 & 3.44
                    & 0.77 & 17.46 & 0.18 & 2.04 
                    & 0.16 & 4.22 & 0.08 & 1.14 \\
            SIREN wo n
                    & 0.42 & 7.67
                    & 0.72 & 10.98 & 0.11 & 1.27
                    & 0.21 & 4.37 & 0.09 & 1.78 
                    & 0.34 & 6.27 & 0.06 & \textbf{2.71} 
                    & 0.46 & 7.76 & 0.08 & \textbf{0.68}
                    & 0.35 & 8.96 & 0.06 & \textbf{0.65} \\
            SAL
                    & 0.36 & 7.47
                    & 0.42 & 7.21 & 0.17 & 4.67
                    & 0.62 & 13.21 & 0.11 & 2.15
                    & 0.18 & 3.06 & 0.08 & 2.82
                    & 0.45 & 9.74 & 0.21 & 3.84
                    & 0.13 & 4.14 & 0.07 & 4.04 \\
            IGR+FF
                    & 0.96 & 11.06
                    & 0.72 & 9.48 & 0.24 & 8.89
                    & 2.48 & 19.6 & 0.74 & 4.23
                    & 0.86 & 10.3 & 0.28 & 3.98
                    & 0.26 & 5.24 & 0.18 & 2.93
                    & 0.49 & 10.7 & 0.14 & 3.71\\
            PHASE+FF
                    & 0.22 & 4.96
                    & \textbf{0.29} & 7.43 & \textbf{0.09} & 1.49
                    & 0.35 & 7.24 & \textbf{0.08} & \textbf{1.21}
                    & 0.19 & 4.65 & \textbf{0.05} & 2.78
                    & \textbf{0.17} & 4.79 & \textbf{0.07} & 1.58
                    & \textbf{0.11} & \textbf{0.71} & \textbf{0.05} & 0.74\\
            Our DiGS
                    & \textbf{0.19} & \textbf{3.52}
                    & \textbf{0.29} & \textbf{7.19} & 0.11 & \textbf{1.17}
                    & \textbf{0.20} & \textbf{3.72} & 0.09 & 1.80
                    & \textbf{0.15} & \textbf{1.70} & 0.07 & 2.75
                    & \textbf{0.17} & \textbf{4.10} & 0.09 & 0.92
                    & 0.12 & 0.91 & 0.06 & 0.70\\

         \bottomrule
    \end{tabular}
    }
    \caption{Results on the Surface Reconstruction Benchmark using Chamfer $d_{C}$, Hausdorff distance $d_H$. We compare methods with normal supervision above the line and without normal supervision below the line.  The \textit{scans} column reports the one sided distances ($d_{\vec{C}}, d_{\vec{H}}$) between the reconstruction and the simulated scans which give a measure of the reconstruction's overfit to the noisy input. }
    \label{tab:results:full_srb}
\end{table*}

\begin{figure*}
     \centering
     \begin{tabular}{c c c | c c c c}
         
         \includegraphics[width=\recviscol\linewidth]{assets/figures/surface_recon/DiGS_MFGI/anchor.ply.jpg}
         &
         \includegraphics[width=\recviscol\linewidth]{assets/figures/surface_recon/IGR_wo_n/anchor.ply.jpg}
         &
         \includegraphics[width=\recviscol\linewidth]{assets/figures/surface_recon/SIREN_wo_n/anchor.ply.jpg}
         & 
         \includegraphics[width=\recviscol\linewidth]{assets/figures/surface_recon/DiGS_MFGI_w_n/anchor.ply.jpg}
         & 
         \includegraphics[width=\recviscol\linewidth]{assets/figures/surface_recon/SIREN/anchor.ply.jpg}
         & 
         \includegraphics[width=\recviscol\linewidth]{assets/figures/surface_recon/IGR/anchor.ply.jpg} \\

         \includegraphics[width=\recviscol\linewidth]{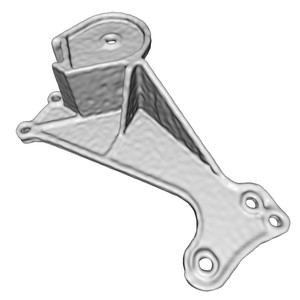}
         &
         \includegraphics[width=\recviscol\linewidth]{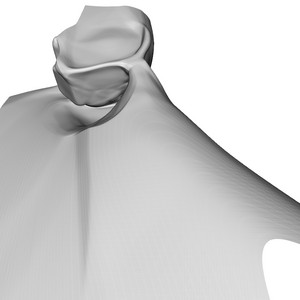}
         &
         \includegraphics[width=\recviscol\linewidth]{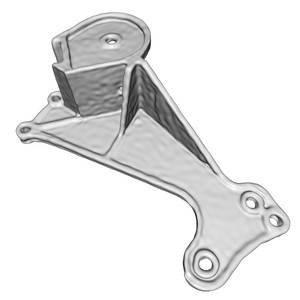}
         & 
         \includegraphics[width=\recviscol\linewidth]{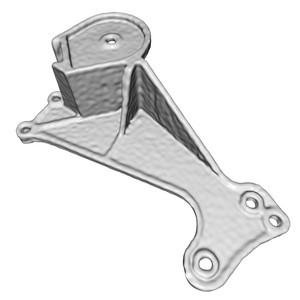}
         & 
         \includegraphics[width=\recviscol\linewidth]{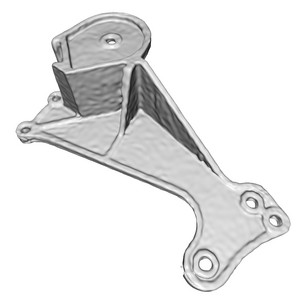}
         & 
         \includegraphics[width=\recviscol\linewidth]{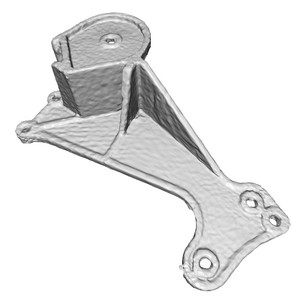} \\
         
         \includegraphics[width=\recviscol\linewidth]{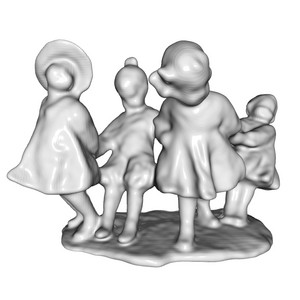} 
         &
         \includegraphics[width=\recviscol\linewidth]{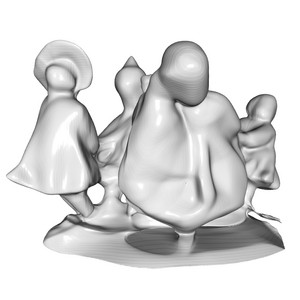} 
         &
         \includegraphics[width=\recviscol\linewidth]{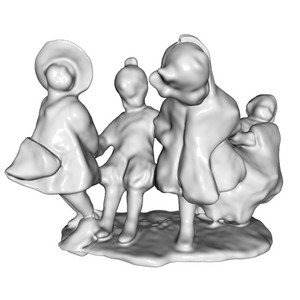}
         & 
         \includegraphics[width=\recviscol\linewidth]{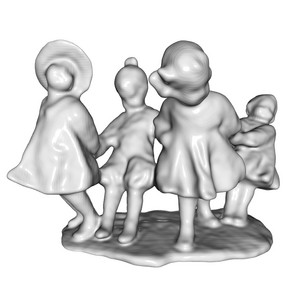}
         & 
         \includegraphics[width=\recviscol\linewidth]{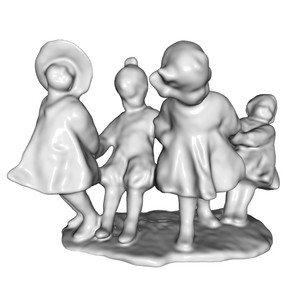}
         & 
         \includegraphics[width=\recviscol\linewidth]{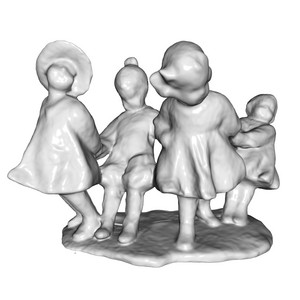} \\

         \includegraphics[width=\recviscol\linewidth]{assets/figures/surface_recon/DiGS_MFGI/gargoyle.ply.jpg} 
         &
         \includegraphics[width=\recviscol\linewidth]{assets/figures/surface_recon/IGR_wo_n/gargoyle.ply.jpg} 
         &
         \includegraphics[width=\recviscol\linewidth]{assets/figures/surface_recon/SIREN_wo_n/gargoyle.ply.jpg} 
         & 
         \includegraphics[width=\recviscol\linewidth]{assets/figures/surface_recon/DiGS_MFGI_w_n/gargoyle.ply.jpg}
         & 
         \includegraphics[width=\recviscol\linewidth]{assets/figures/surface_recon/SIREN/gargoyle.ply.jpg} 
         & 
         \includegraphics[width=\recviscol\linewidth]{assets/figures/surface_recon/IGR/gargoyle.ply.jpg} \\

         \includegraphics[width=\recviscol\linewidth]{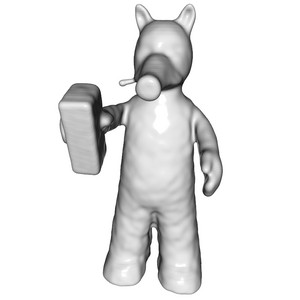} 
         &
        \includegraphics[width=\recviscol\linewidth]{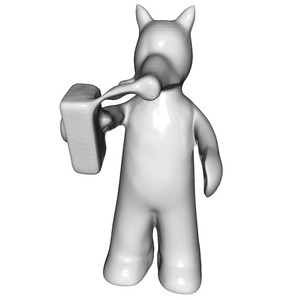} 
         &
         \includegraphics[width=\recviscol\linewidth]{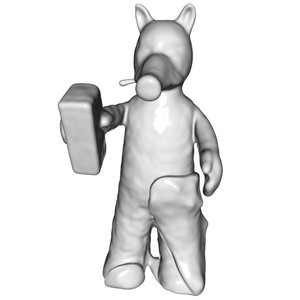} 
         & 
         \includegraphics[width=\recviscol\linewidth]{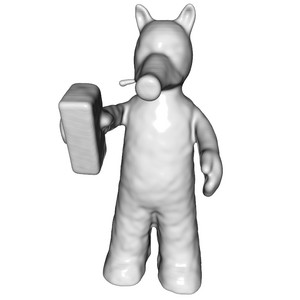}
         & 
         \includegraphics[width=\recviscol\linewidth]{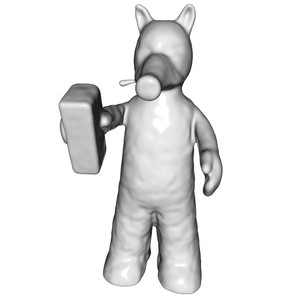} 
         & 
         \includegraphics[width=\recviscol\linewidth]{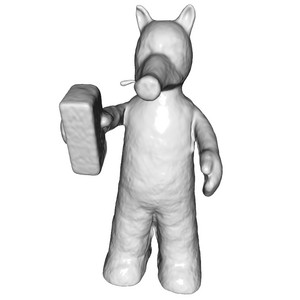} \\         
         
         \textbf{Our DiGs} & IGR wo n & SIREN wo n  & \textbf{Our DiGS} + n & SIREN & IGR \\
         \multicolumn{3}{c}{without normals} & \multicolumn{2}{c}{ with normals}

     \end{tabular}
        \caption{Qualitative results of surface reconstruction on the anchor and gargoyle shapes from the Surface Reconstruction Benchmark \cite{berger2013benchmark} compared to state of the art approaches (IGR, SIREN) that use normal vectors as ground truth. ~\\~\\}
        \label{fig:appx:surface_recon_qual_results}
\end{figure*}

\subsubsection{Surface reconstruction on SRB}\label{sec:appx:srb-details}
\textbf{Implementation details.}
The input point cloud is first centered to zero and scaled to have maximum norm of one. Then a bounding box that is 1.1 times the size of the shape is selected. Each iteration we sample 15,000 points from the original point cloud and sample 15,000 points uniformly randomly in a bounding box. We train for 10,000 iterations with a learning rate of 5e-5.

We report the values for DGP~\cite{williams2019DGP}, NSP~\cite{williams2020neural_splines} and PHASE/PHASE+FF~\cite{lipman2021phase} from their respective works, and report the results for FFN \cite{tancik2020fourfeat} from Williams~\etal~\cite{williams2020neural_splines} and IGR+FF from Lipman~et~al.~\cite{lipman2021phase}. We report the results for SAL~\cite{atzmon2020sal}, IGR/IGR~wo~n~\cite{gropp2020igr} and SIREN/SIREN~wo~n~\cite{sitzmann2020siren} using their code.

\textbf{Additional quantitative results. }
We provide additional quantitative results for surface reconstruction on the Surface Reconstruction Benchmark \cite{berger2013benchmark}. \tabref{tab:results:full_srb} is an extended version of \tabref{tab:results:surf-recon-summary} from the main paper that includes comparison to additional methods with normal vector supervision. The results show that we are able to achieve the best reconstruction compared to other methods without normal supervision. Additionally,we achieve improved performance when using normal vector supervision compared to a vanilla SIREN and show comparable results to other methods.

\textbf{Additional qualitative results. }
We provide additional qualitative results for surface reconstruction on the Surface Reconstruction Benchmark \cite{berger2013benchmark}. \figref{fig:appx:surface_recon_qual_results} shows visualizations of the output reconstruction of different methods that use normal vector as ground truth as well as methods that do not. It is clear that whenever ground truth normal vectors are not available, DiGS presents a significant improvement and yields comparable results to normal based methods. 

\vspace{0.05in}

\begin{table}[!t]\scriptsize
    \centering
    \setlength\tabcolsep{4.5pt} 
    \begin{tabular}{l c c c c}
         \toprule
           & \multicolumn{2}{c}{GT} & \multicolumn{2}{c}{Scans}\\
           Method & $d_{C}$ & $d_H$ & $d_{\vec{C}}$ & $d_{\vec{H}}$\\
            \hline
              SIREN wo n & 0.42 & 7.67  & 0.08 & \textbf{1.42} \\
              DiGS $L_1$ & 0.20 & 4.47 & \textbf{0.07} & 1.77\\ 
              DiGS $L_2$ & 0.25 & 5.18 & \textbf{0.07} & 1.67\\ 
              DiGS no-decay & 0.21 & 4.45 & 0.09 & 2.58\\ 
              DiGS lin-decay & 0.20 & 4.41 & \textbf{0.07} & 1.64\\ 
              DiGS curv & 0.30 & 4.83 & 0.10 & 1.57\\ 
              DiGS $L_1$ MFGI & \textbf{0.19} & \textbf{3.52} & 0.08 & 1.47\\ 

         \bottomrule
    \end{tabular}
    \caption{DiGS ablation study on the Surface Reconstruction Benchmark \cite{berger2013benchmark}. We compare design choices such as annealing method (linear, step and no annealing), divergence term penalties ($L_1$ vs $L_2$), initialization method (MFGI vs SIREN) and curvature ground truth. We use the Chamfer $d_{C}$ and Hausdorff distance $d_H$ metrics.}
    \label{tab:results:surface reconstructon ablations_avg}
\end{table}



\begin{table}[]\scriptsize
    \centering
     \setlength\tabcolsep{3.0pt} 
    \begin{tabular}{l c c c c c}
         \toprule
           & & \multicolumn{2}{c}{GT} & \multicolumn{2}{c}{Scans}\\
          & Method & $d_{C}$ & $d_H$ & $d_{\vec{C}}$ & $d_{\vec{H}}$\\
            \hline
            \multirow{5}{*}{anchor}  
                                     & SIREN wo n & 0.72 & 10.98 & 0.11 & 1.27\\
                                    & DiGS l1 & 0.33 & 8.71 & 0.11 & 2.52\\ 
                                     & DiGS l2 & 0.43 & 8.24 & \textbf{0.10} & 2.23\\ 
                                     & DiGS no-decay & 0.35 & 8.84 & 0.12 & 4.38\\ 
                                     & DiGS lin-decay & 0.33 & 8.71 & \textbf{0.10} & 2.05\\ 
                                     & DiGS curv & 0.78 & 11.63 & 0.13 & 1.21\\ 
                                     & DiGS l1 MFGI & \textbf{0.29} & \textbf{7.19} & 0.11 & \textbf{1.17}\\ 
            \hline
            \multirow{5}{*}{daratech}     
                                         & SIREN wo n & 0.21 & 4.37 & 0.09 & 1.78 \\
                                        & DiGS l1 & 0.21 & 3.50 & \textbf{0.07} & 1.81\\ 
                                         & DiGS l2 & 0.20 & 3.47 & \textbf{0.07} & 1.78\\ 
                                         & DiGS no-decay & 0.20 & 3.35 & 0.09 & 1.79\\ 
                                         & DiGS lin-decay & \textbf{0.19} & \textbf{2.97} & \textbf{0.07} & 1.77\\  
                                         & DiGS curv & 0.22 & 4.27 & 0.12 & \textbf{1.76}\\ 
                                         & DiGS l1 MFGI & 0.20 & 3.72 & 0.09 & 1.80\\ 
            \hline
            \multirow{5}{*}{dc}   
                                 & SIREN wo n & 0.34 & 6.27 & \textbf{0.06} & \textbf{2.71} \\
                                 & DiGS l1 & \textbf{0.15} & 2.25 & \textbf{0.06} & 2.76\\ 
                                 & DiGS l2 & 0.19 & 4.04 & \textbf{0.06} & 2.76\\ 
                                 & DiGS no-decay & 0.16 & 2.58 & 0.07 & 2.78\\ 
                                 & DiGS lin-decay & 0.16 & 2.72 & \textbf{0.06} & \textbf{2.71}\\ 
                                  & DiGS curv & 0.16 & 1.76 & 0.08 & 2.82\\ 
                                 & DiGS l1 MFGI & \textbf{0.15} & \textbf{1.70} & 0.07 & 2.75\\ 
            \hline
            \multirow{5}{*}{gargoyle}    
                                    & SIREN wo n & 0.46 & 7.76 & \textbf{0.08} & \textbf{0.68} \\
                                    & DiGS l1 & \textbf{0.17} & 5.12 & \textbf{0.08} & 0.81\\ 
                                         & DiGS l2 & \textbf{0.17} & 5.02 & \textbf{0.08} & 0.75\\ 
                                         & DiGS no-decay & 0.18 & 5.18 & 0.09 & 3.10\\ 
                                         & DiGS lin-decay & \textbf{0.17} & 5.09 & \textbf{0.08} & 0.81\\ 
                                          & DiGS curv & 0.19 & \textbf{3.90} & 0.12 & 1.30\\ 
                                         & DiGS l1 MFGI & \textbf{0.17} & 4.10 & 0.09 & 0.92\\ 
            \hline
            \multirow{5}{*}{lord quas}    
                                         & SIREN wo n & 0.35 & 8.96 & \textbf{0.06} & \textbf{0.65} \\
                                        & DiGS l1 & \textbf{0.12} & 2.77 & \textbf{0.06} & 0.94\\ 
                                         & DiGS l2 & 0.25 & 5.12 & \textbf{0.06} & 0.85\\ 
                                         & DiGS no-decay & 0.13 & 2.30 & \textbf{0.06} & 0.85\\ 
                                         & DiGS lin-decay & 0.13 & 2.59 & \textbf{0.06} & 0.88\\ 
                                          & DiGS curv & 0.14 & 2.61 & 0.07 & 0.75\\ 
                                         & DiGS l1 MFGI & \textbf{0.12} & \textbf{0.91} & \textbf{0.06} & 0.70\\

         \bottomrule
    \end{tabular}
    \caption{DiGS ablation study on the Surface Reconstruction Benchmark. We compare design choices such as annealing method (linear, step and no annealing), divergence term penalties ($L_1$ vs $L_2$), initialization method (MFGI vs SIREN) and curvature ground truth. We use the Chamfer $d_{C}$ and Hausdorff distance $d_H$ metrics.}
    \label{tab:results:surface reconstructon ablations}
\end{table}

\textbf{Ablation study. }
We investigate the effects of several design choices made for DiGS and report the averages over all shapes in the dataset in \tabref{tab:results:surface reconstructon ablations_avg} and individual shapes in \tabref{tab:results:surface reconstructon ablations}.
First we investigate the influence of the annealing function $\tau$. We compare between the case of 
\begin{enumerate}
    \item No annealing:
    \begin{equation}
        \tau = 1.
    \end{equation}
    \item Linear annealing: 
    \begin{equation}
        \tau_{Lin}(t_0, t_1, \tau_1) = \begin{cases} 
                1 &  t < t_0  \\
                1 + \left(\tau_1 - 1\right)\frac{(t - t_0)}{t_1 - t_0} & t_0 \leq t \leq t_1 \\
                \tau_1 &  t > t_1\\
            \end{cases}
    \end{equation}
    \item Step annealing: 
        \begin{equation}
        \tau_{Step}(t_0, \tau_1)= \begin{cases} 
                1 &  t < t_0  \\
                \tau_1 & t \geq t_0 \\
            \end{cases}
    \end{equation}

\end{enumerate}
Here $t_0$ and $t_1$ are expressed as a fraction of training iterations. In our experiments we used the following values: $(t_0, t_1, \tau_1)=(0.5, 0.75, 0)$.
Results show that annealing improves performance with little difference between linear and step annealing. 

We also investigate the choice of penalty function over the divergence term (with step decay) and compare between $L_1$ and $L_2$. Here, $L_1$ allows for sparse spatial locations to have high divergence (which is desired because there may be some source or sink point in the gradient vector field) while $L_2$ provides an average low value over the volume. As expected, the results show an advantage for using $L_1$ penalty on the divergence. 

Furthermore, we evaluate the second order supervision approach presented in \secref{appx:sec:second_order_supervision}. For that, we estimate the mean curvature using  DeepFit~\cite{ben2020deepfit} (a recent state-of-the-art method for estimating normals and curvatures) and introduce the mean curvature as ground truth information during training.  The results show that, surprisingly, curvature supervision does not provide any benefit. This is due to the noisiness and local support dependency of curvature estimation which may have over smoothed or jittery values that make training inconsistent.

Finally, we investigate the effect of the multi-frequency geometric initialization (with step decay and $L_1$ penalty) and show that it produces consistently better performance than all other ablations. This variant is denoted throughout the paper as DiGS, \ie, $L_1$ penalty on the divergence term with step decay and MFGI (\secref{Sec:approach_init}). Note that all ablations were trained without normal vector information. 

\textbf{Use of DiGS Loss with other activation functions. }
Our divergence loss is general and can be applied to any network that has second order derivatives defined everywhere. Note that this means it cannot be used with ReLU activations, though we can use smooth approximations such as the SoftPlus activation function (which IGR \cite{gropp2020igr} uses). \\
However, our loss is targeted at high frequency architectures such as SIRENs. A drawback of such architectures is the trade off between high fidelity detail and ghost geometries, especially without normal vector supervision.  Our loss is particularly beneficial to alleviate this trade-off.\\
We experimented with using our loss paired with SoftPlus and found that it does not provide a boost in performance since the network is already biased towards low frequency solutions (performance reduction of 0.99 $d_C$, and 6.34 $d_H$ on SRB) which \textit{SIREN wo n} outperforms even without the divergence loss performance boost.

\subsubsection{Surface reconstruction on ShapeNet}
\textbf{Implementation details.}
We use the preprocessing and evaluation method from Williams~et~al.~\cite{williams2020neural_splines}. They first preprocess using the method from Mescheder~et~al.~\cite{mescheder2019occupancy}, then report on the first 20 shapes of the test set for each shape class. The preprocessing extracts ground truth surface points from the shapes of ShapeNet v1 \cite{chang2015shapenet}, and extracts random samples within the space with their labelled occupancy values. The evaluation method uses the ground truth points to calculate squared Chamfer distance, and uses the labelled random samples to calculate IoU. Note that the initial ShapeNet data has inconsistent normal orientation and many non-manifold surfaces due to its nature as CAD models, and this preprocessing helps orient the normals and remove most of the non-manifold surfaces.

For our method, given the input point cloud from the preprocessing, we continue as we did for SRB (\secref{sec:appx:srb-details}). We report the values for SPSR~\cite{kazhdan2013screened_poisson}, IGR~\cite{gropp2020igr}, SIREN~\cite{sitzmann2020siren}, FFN~\cite{tancik2020fourfeat} and NSP~\cite{williams2020neural_splines} from Williams~et~al.~\cite{williams2020neural_splines}. We report the results for SIREN~wo~n~\cite{sitzmann2020siren} and SAL~\cite{atzmon2020sal} using their code. 

\textbf{Additional quantitative results. }
We give the breakdown of squared Chamfer distance and IoU per shape class in \tabref{tab:appx:shapenet-extended}. As with the summary over all shape classes, for the individual breakdowns we can see that with squared Chamfer distance we get better means and medians without normals, and with normals we get better medians but often not better means. A particular class to note is loudspeaker, it is the only class that we do not get better medians for when comparing with normals, and we can see that it does much worse with normals than without. As stated in the main paper, we find that this is due to there being significant internal ghost geometry due to trying to match the internal parts of the loudspeakers, and having normal vector supervision causes even more of such ghost geometry. For IoU we do similarly: we get better means means and medians without normals, but with normals we get better medians but sometimes not better means. 

\textbf{Additional qualitative results. }
We show additional visualisations in \figref{fig:appx:shapenet-vis} and \figref{fig:appx:shapenet-vis-pt2}. A comparison between the ground truth mesh, DiGS, SIREN wo n, SAL and NSP is done for a shape from each shape class. SIREN wo n and SAL, which do not use normal information, are able to reconstruct the shapes, but the former has a lot of ghost geometry while the latter is overly smoothened or missing thin surfaces. On the other hand, DiGS is able to perform on par with the best method with normal information, NSP, where failure cases are usually extra thin surfaces.


\subsubsection{Scene Reconstruction}

In \secref{sec:results:surface_recon} we presented qualitative results for scene reconstruction. Attached to this supplemental material, we provide a low resolution video depicting the scene from multiple angles. A high resolution video is available an external \href{https://drive.google.com/file/d/1CuP8KN93JpzWY-597g0945r5VCIEOi3D/view?usp=sharing}{URL}. 
For this task we use eight layers with 512 units and train on the scene from Sitzmann~\etal.~\cite{sitzmann2020siren} which includes 10M oriented points. We train for $100K$ iterations, sampling random $15K$ points in each iteration. 


\subsubsection{Shape Space Learning}

We use the setup and evaluation procedure of 
Gropp~et~al.~\cite{gropp2020igr}. The DFaust dataset~\cite{bogo2017dynamic} has scans of 10 humans, each of which do multiple action sequences and have a scan at different points during the sequence. The data set include both the scan done, which are high-resolution triangle soups, and their own registration for the complete mesh of the human which they gained from using extra data (e.g., colour) and the temporal information during the action sequence.

We use the random split setting of Gropp~et~al.~\cite{gropp2020igr}, a 75\%-25\% split between a significant subset of all scans (8566 scans). For the shapespace experiment, a single model is trained on all training scans. During training, a latent code is assigned to each action sequence of each human, which is allowed to be trained. During test time the latent code for the test shape is optimised using the input point cloud data, after which the optimised latent code is used to evaluate the full shape on a $512\times 512\times 512$ grid.

Following the evaluation procedure of 
Gropp~et~al.~\cite{gropp2020igr} we report the mean and median of the total one-sided Chamfer distances between the reconstruction and the input scans, and the reconstructions and the ground truth.

For our network we use a 8 layer SIREN with hidden layers of dimension 256. We also use 256 dimensional latent codes. At test time we optimise for the latent vectors for 800 iterations using Adam with a learning rate of $10^{-3}$. The latent code and the original input ($(x,y,z)$ coordinates) are concatenated together for a 259 dimensional input, and the latent codes are intialised to zero. As is standard in an autodecoder we use latent code regularization \cite{park2019deepsdf}, with a scaling weight of 1.0. 

For IGR we use their model (autodecoder with 8 layers, hidden dimension 512, latent dimension 256) and use the same latent optimisation procedure. For the version with normals we use their trained weights, for the version without normals we train their network using their provided training script.

\begin{figure*}
     \centering
     \resizebox{\textwidth}{!}{
     \begin{tabular}{m{0.2em} c c c c c}
     \setlength\tabcolsep{0pt}
        \rotatebox[origin=c]{90}{airplane} &\raisebox{-0.5\height}{
         \includegraphics[scale=0.5, trim=16pt 16pt 16pt 16pt, clip]{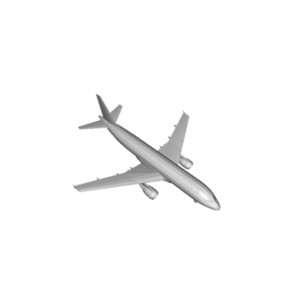}}
         &
         \raisebox{-0.5\height}{\includegraphics[scale=0.5, trim=16pt 16pt 16pt 16pt, clip]{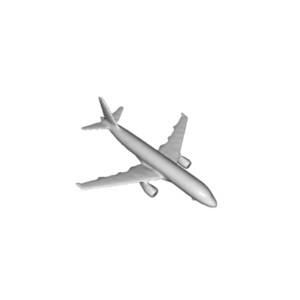}}
         &
         \raisebox{-0.5\height}{\includegraphics[scale=0.5, trim=16pt 16pt 16pt 16pt, clip]{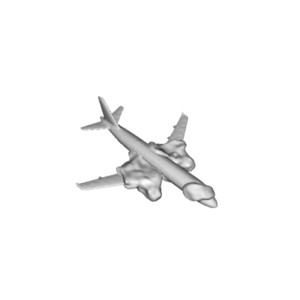}}
         &
         \raisebox{-0.5\height}{\includegraphics[scale=0.5, trim=16pt 16pt 16pt 16pt, clip]{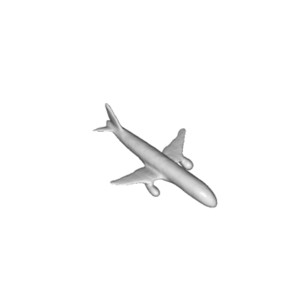}}
         &
         \raisebox{-0.5\height}{\includegraphics[scale=0.5, trim=16pt 16pt 16pt 16pt, clip]{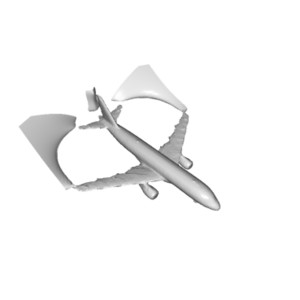}}
         \\ 
        \rotatebox[origin=c]{90}{bench} &
         \raisebox{-0.5\height}{\includegraphics[scale=0.5, trim=16pt 16pt 16pt 16pt, clip]{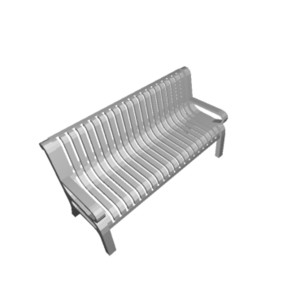}}
         &
         \raisebox{-0.5\height}{\includegraphics[scale=0.5, trim=16pt 16pt 16pt 16pt, clip]{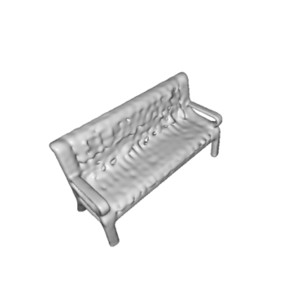}}
         &
         \raisebox{-0.5\height}{\includegraphics[scale=0.5, trim=16pt 16pt 16pt 16pt, clip]{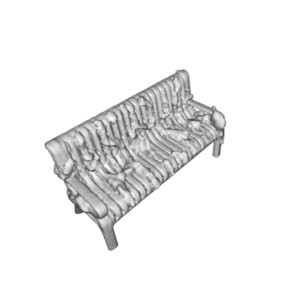}}
         &
         \raisebox{-0.5\height}{\includegraphics[scale=0.5, trim=16pt 16pt 16pt 16pt, clip]{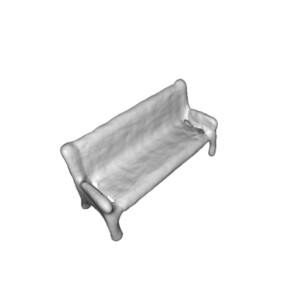}}
         &
         \raisebox{-0.5\height}{\includegraphics[scale=0.5, trim=16pt 16pt 16pt 16pt, clip]{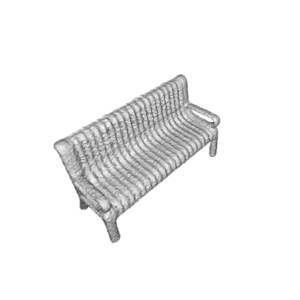}}
         \\ 
        \rotatebox[origin=c]{90}{ cablinet} &
         \raisebox{-0.5\height}{\includegraphics[scale=0.5, trim=16pt 16pt 16pt 16pt, clip]{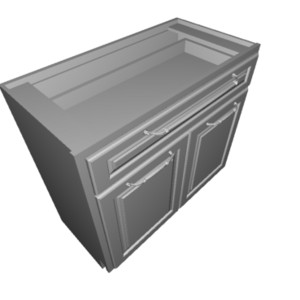}}
         &
         \raisebox{-0.5\height}{\includegraphics[scale=0.5, trim=16pt 16pt 16pt 16pt, clip]{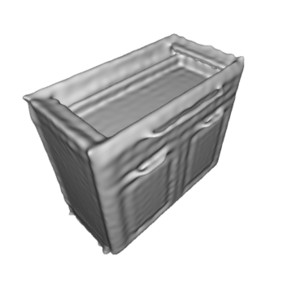}}
         &
         \raisebox{-0.5\height}{\includegraphics[scale=0.5, trim=16pt 16pt 16pt 16pt, clip]{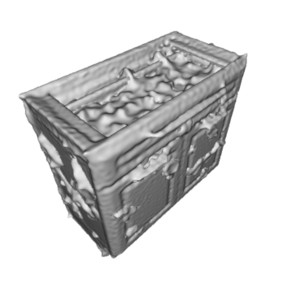}}
         &
         \raisebox{-0.5\height}{\includegraphics[scale=0.5, trim=16pt 16pt 16pt 16pt, clip]{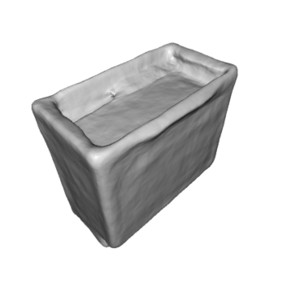}}
         &
         \raisebox{-0.5\height}{\includegraphics[scale=0.5, trim=16pt 16pt 16pt 16pt, clip]{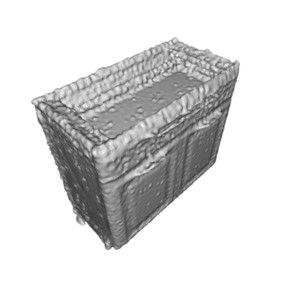}}
         \\ 
        \rotatebox[origin=c]{90}{car} &
         \raisebox{-0.5\height}{\includegraphics[scale=0.5, trim=16pt 16pt 16pt 16pt, clip]{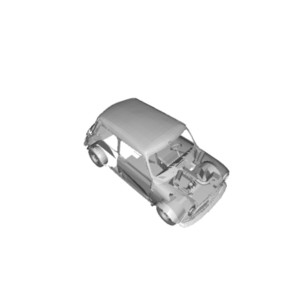}}
         &
         \raisebox{-0.5\height}{\includegraphics[scale=0.5, trim=16pt 16pt 16pt 16pt, clip]{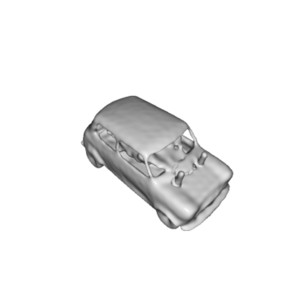}}
         &
         \raisebox{-0.5\height}{\includegraphics[scale=0.5, trim=16pt 16pt 16pt 16pt, clip]{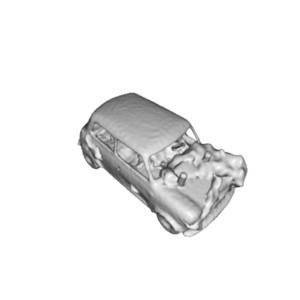}}
         &
         \raisebox{-0.5\height}{\includegraphics[scale=0.5, trim=16pt 16pt 16pt 16pt, clip]{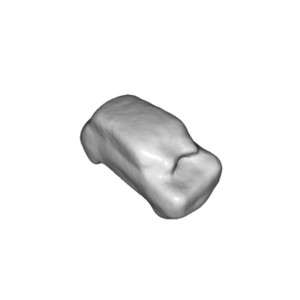}}
         &
         \raisebox{-0.5\height}{\includegraphics[scale=0.5, trim=16pt 16pt 16pt 16pt, clip]{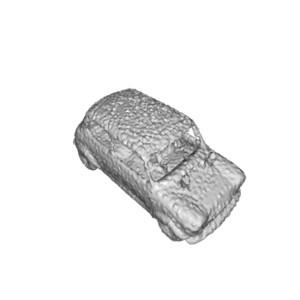}}
         \\ 
        \rotatebox[origin=c]{90}{chair} &
         \raisebox{-0.5\height}{\includegraphics[scale=0.5, trim=16pt 16pt 16pt 16pt, clip]{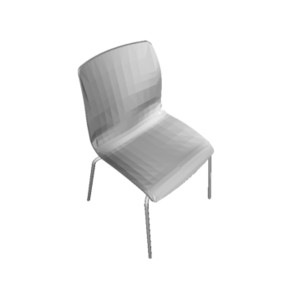}}
         &
         \raisebox{-0.5\height}{\includegraphics[scale=0.5, trim=16pt 16pt 16pt 16pt, clip]{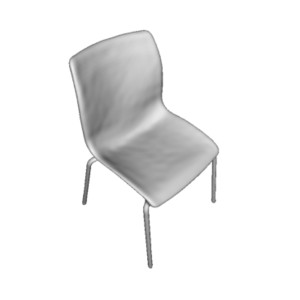}}
         &
         \raisebox{-0.5\height}{\includegraphics[scale=0.5, trim=16pt 16pt 16pt 16pt, clip]{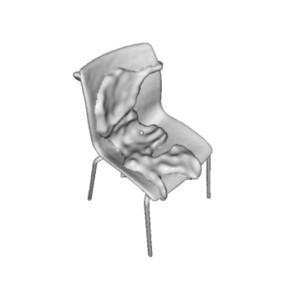}}
         &
         \raisebox{-0.5\height}{\includegraphics[scale=0.5, trim=16pt 16pt 16pt 16pt, clip]{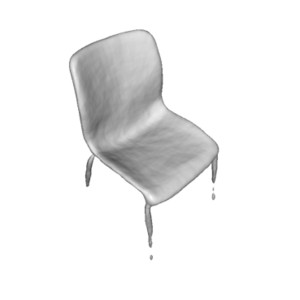}}
         &
         \raisebox{-0.5\height}{\includegraphics[scale=0.5, trim=16pt 16pt 16pt 16pt, clip]{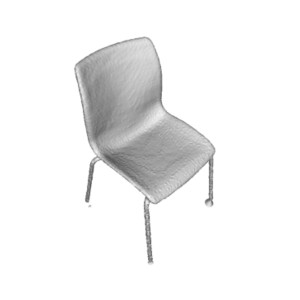}}
         \\ 
        \rotatebox[origin=c]{90}{ display} &
         \raisebox{-0.5\height}{\includegraphics[scale=0.5, trim=16pt 16pt 16pt 16pt, clip]{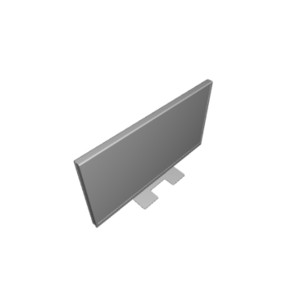}}
         &
         \raisebox{-0.5\height}{\includegraphics[scale=0.5, trim=16pt 16pt 16pt 16pt, clip]{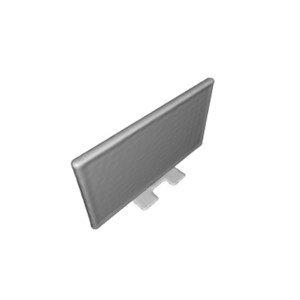}}
         &
         \raisebox{-0.5\height}{\includegraphics[scale=0.5, trim=16pt 16pt 16pt 16pt, clip]{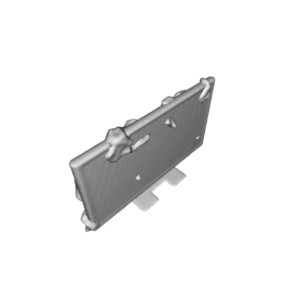}}
         &
         \raisebox{-0.5\height}{\includegraphics[scale=0.5, trim=16pt 16pt 16pt 16pt, clip]{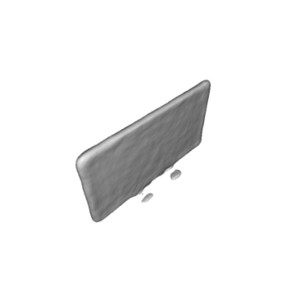}}
         &
         \raisebox{-0.5\height}{\includegraphics[scale=0.5, trim=16pt 16pt 16pt 16pt, clip]{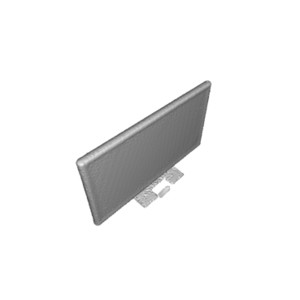}}
         \\ 
        \rotatebox[origin=c]{90}{lamp} &
         \raisebox{-0.5\height}{\includegraphics[scale=0.5, trim=16pt 16pt 16pt 16pt, clip]{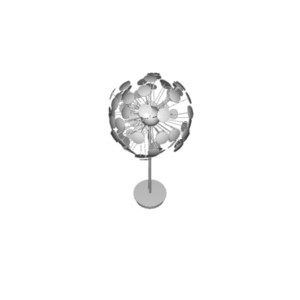}}
         &
         \raisebox{-0.5\height}{\includegraphics[scale=0.5, trim=16pt 16pt 16pt 16pt, clip]{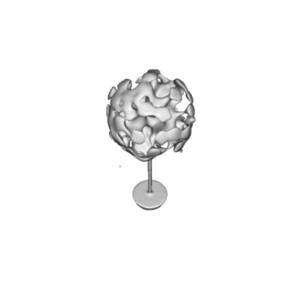}}
         &
         \raisebox{-0.5\height}{\includegraphics[scale=0.5, trim=16pt 16pt 16pt 16pt, clip]{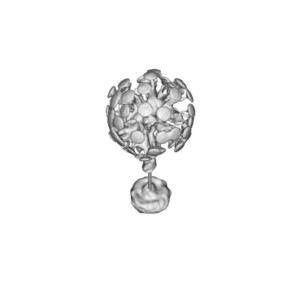}}
         &
         \raisebox{-0.5\height}{\includegraphics[scale=0.5, trim=16pt 16pt 16pt 16pt, clip]{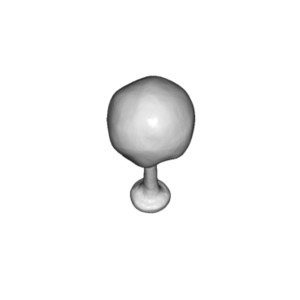}}
         &
         \raisebox{-0.5\height}{\includegraphics[scale=0.5, trim=16pt 16pt 16pt 16pt, clip]{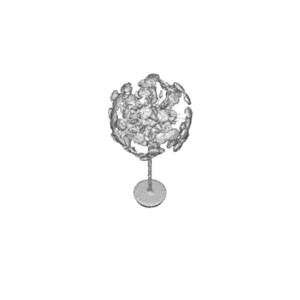}}
         \\ 
         & Ground Truth & \textbf{Our DiGS} & SIREN wo n & SAL & NSP
     \end{tabular}
     }
        \caption{Qualitative results on ShapeNet. One shape from each class is shown compared to other methods.}
        \label{fig:appx:shapenet-vis}
\end{figure*}

\begin{figure*}
     \centering
     \resizebox{\textwidth}{!}{
     \begin{tabular}{c c c c c c}
     \setlength\tabcolsep{0pt}
        \rotatebox[origin=c]{90}{loudspeaker} &
         \raisebox{-0.5\height}{\includegraphics[scale=0.5, trim=16pt 16pt 16pt 16pt, clip]{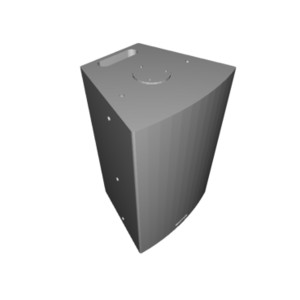}}
         &
         \raisebox{-0.5\height}{\includegraphics[scale=0.5, trim=16pt 16pt 16pt 16pt, clip]{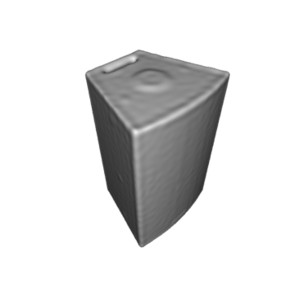}}
         &
         \raisebox{-0.5\height}{\includegraphics[scale=0.5, trim=16pt 16pt 16pt 16pt, clip]{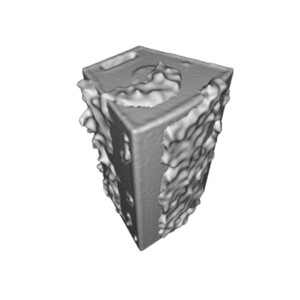}}
         &
         \raisebox{-0.5\height}{\includegraphics[scale=0.5, trim=16pt 16pt 16pt 16pt, clip]{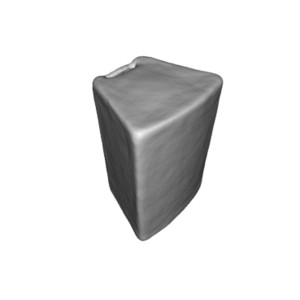}}
         &
         \raisebox{-0.5\height}{\includegraphics[scale=0.5, trim=16pt 16pt 16pt 16pt, clip]{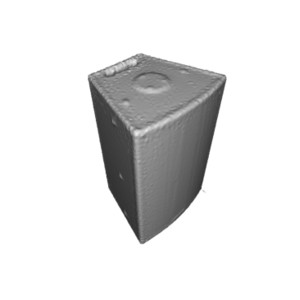}}
         \\ 
        \rotatebox[origin=c]{90}{rifle} &
         \raisebox{-0.5\height}{\includegraphics[scale=0.5, trim=16pt 16pt 16pt 16pt, clip]{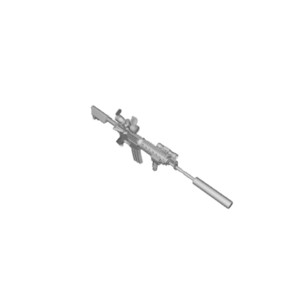}}
         &
         \raisebox{-0.5\height}{\includegraphics[scale=0.5, trim=16pt 16pt 16pt 16pt, clip]{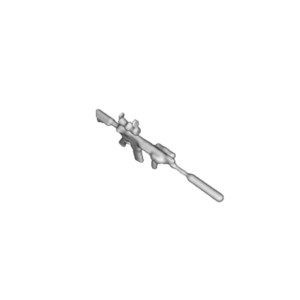}}
         &
         \raisebox{-0.5\height}{\includegraphics[scale=0.5, trim=16pt 16pt 16pt 16pt, clip]{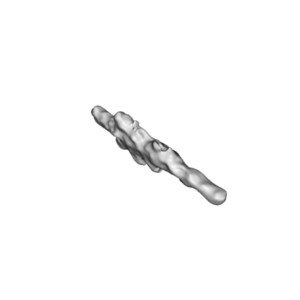}}
         &
         \raisebox{-0.5\height}{\includegraphics[scale=0.5, trim=16pt 16pt 16pt 16pt, clip]{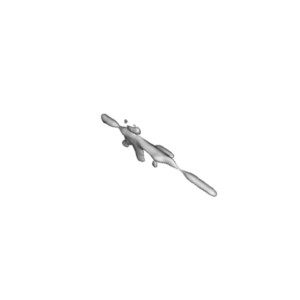}}
         &
         \raisebox{-0.5\height}{\includegraphics[scale=0.5, trim=16pt 16pt 16pt 16pt, clip]{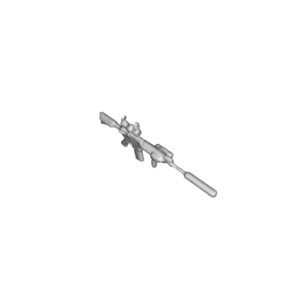}}
         \\ 
        \rotatebox[origin=c]{90}{sofa} &
         \raisebox{-0.5\height}{\includegraphics[scale=0.5, trim=16pt 16pt 16pt 16pt, clip]{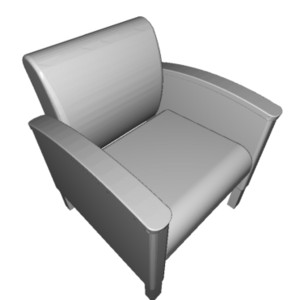}}
         &
         \raisebox{-0.5\height}{\includegraphics[scale=0.5, trim=16pt 16pt 16pt 16pt, clip]{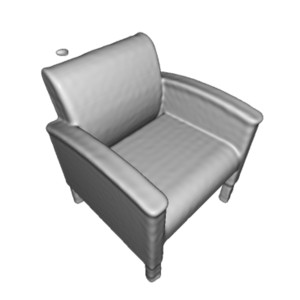}}
         &
         \raisebox{-0.5\height}{\includegraphics[scale=0.5, trim=16pt 16pt 16pt 16pt, clip]{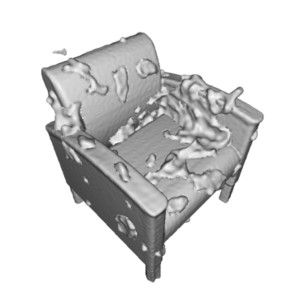}}
         &
         \raisebox{-0.5\height}{\includegraphics[scale=0.5, trim=16pt 16pt 16pt 16pt, clip]{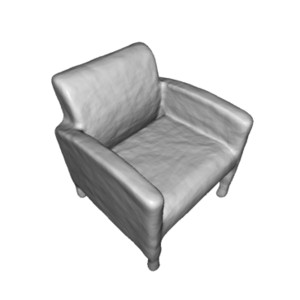}}
         &
         \raisebox{-0.5\height}{\includegraphics[scale=0.5, trim=16pt 16pt 16pt 16pt, clip]{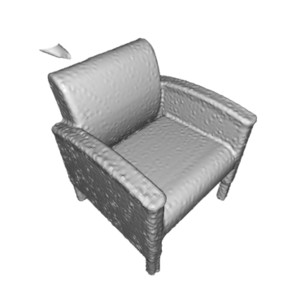}}
         \\ 
        \rotatebox[origin=c]{90}{ table} &
         \raisebox{-0.5\height}{\includegraphics[scale=0.5, trim=16pt 16pt 16pt 16pt, clip]{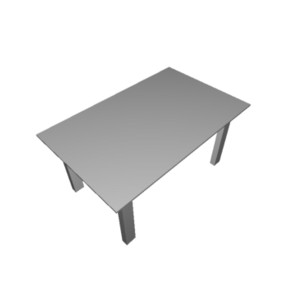}}
         &
         \raisebox{-0.5\height}{\includegraphics[scale=0.5, trim=16pt 16pt 16pt 16pt, clip]{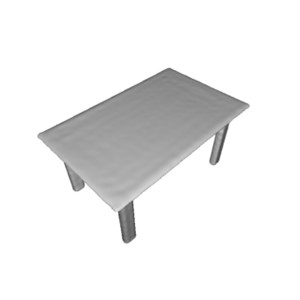}}
         &
         \raisebox{-0.5\height}{\includegraphics[scale=0.5, trim=16pt 16pt 16pt 16pt, clip]{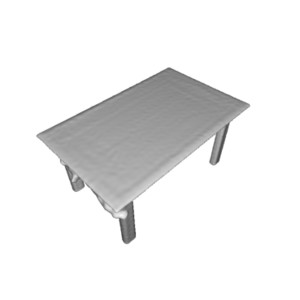}}
         &
         \raisebox{-0.5\height}{\includegraphics[scale=0.5, trim=16pt 16pt 16pt 16pt, clip]{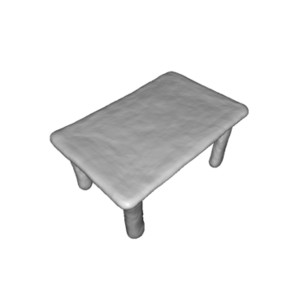}}
         &
         \raisebox{-0.5\height}{\includegraphics[scale=0.5, trim=16pt 16pt 16pt 16pt, clip]{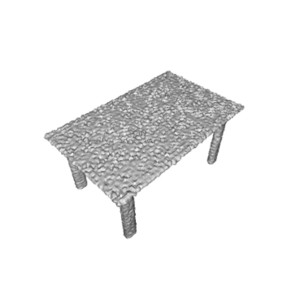}}
         \\ 
        \rotatebox[origin=c]{90}{telephone} &
         \raisebox{-0.5\height}{\includegraphics[scale=0.5, trim=16pt 16pt 16pt 16pt, clip]{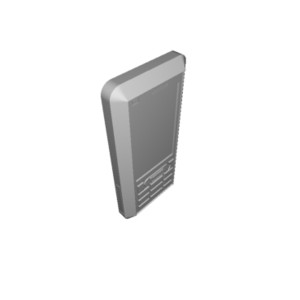}}
         &
         \raisebox{-0.5\height}{\includegraphics[scale=0.5, trim=16pt 16pt 16pt 16pt, clip]{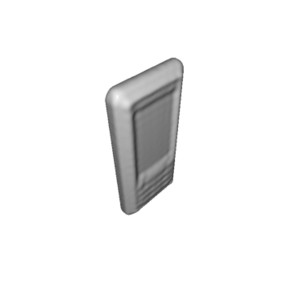}}
         &
         \raisebox{-0.5\height}{\includegraphics[scale=0.5, trim=16pt 16pt 16pt 16pt, clip]{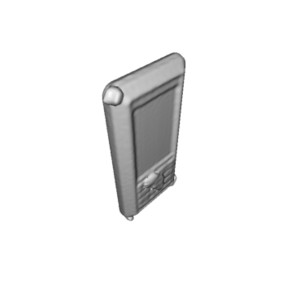}}
         &
         \raisebox{-0.5\height}{\includegraphics[scale=0.5, trim=16pt 16pt 16pt 16pt, clip]{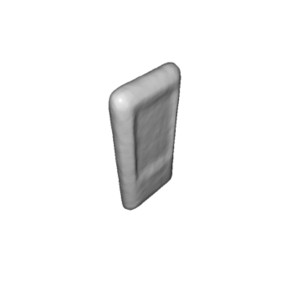}}
         &
         \raisebox{-0.5\height}{\includegraphics[scale=0.5, trim=16pt 16pt 16pt 16pt, clip]{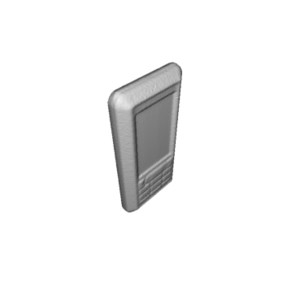}}
         \\ 
        \rotatebox[origin=c]{90}{watercraft} &
         \raisebox{-0.5\height}{\includegraphics[scale=0.5, trim=16pt 16pt 16pt 16pt, clip]{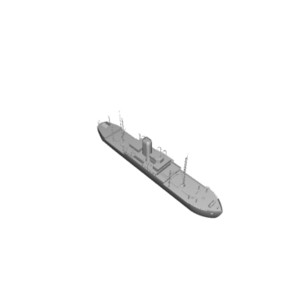}}
         &
         \raisebox{-0.5\height}{\includegraphics[scale=0.5, trim=16pt 16pt 16pt 16pt, clip]{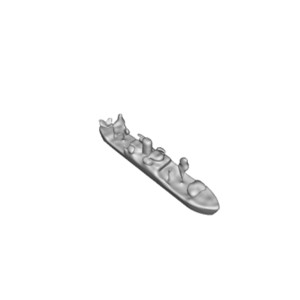}}
         &
         \raisebox{-0.5\height}{\includegraphics[scale=0.5, trim=16pt 16pt 16pt 16pt, clip]{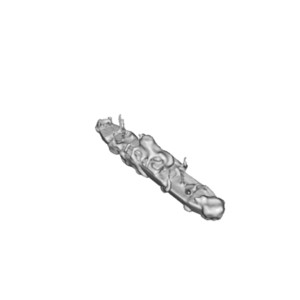}}
         &
         \raisebox{-0.5\height}{\includegraphics[scale=0.5, trim=16pt 16pt 16pt 16pt, clip]{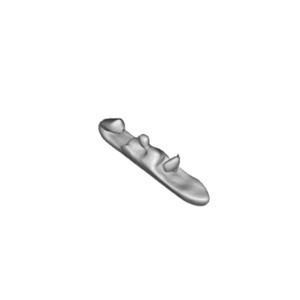}}
         &
         \raisebox{-0.5\height}{\includegraphics[scale=0.5, trim=16pt 16pt 16pt 16pt, clip]{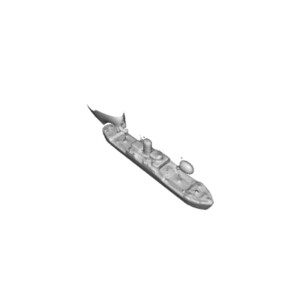}}
         \\ 
         & Ground Truth & \textbf{Our DiGS} & SIREN wo n & SAL & NSP
     \end{tabular}
     }
        \caption{Qualitative results on ShapeNet. One shape from each class is shown compared to other methods.}
        \label{fig:appx:shapenet-vis-pt2}
\end{figure*}
\begin{table*}[]\scriptsize
    \centering
    \rotatebox{0}{
    \resizebox{\textwidth}{!}{
    \begin{tabular}{c}
    Squared Chamfer \\
    \begin{tabular}{l c c c | c c c | c c c | c c c | c c c}
        \toprule
         & \multicolumn{3}{c}{All}
            & \multicolumn{3}{c}{airplane} & \multicolumn{3}{c}{bench} 
            & \multicolumn{3}{c}{cabinet} & \multicolumn{3}{c}{car}\\
        Method & Mean & Median & Std
            & Mean & Median & Std & Mean & Median & Std 
            & Mean & Median & Std & Mean & Median & Std \\
            \hline
            IGR     
                    & 6.66e-4 & 1.07e-4 & 4.69e-3
                    & 3.04e-4 & 1.74e-4 & 3.47e-4 
                    & 4.48e-4 & 2.58e-4 & 4.33e-4 
                    & 1.56e-4 & 9.39e-5 & 1.23e-4 
                    & 2.60e-4 & 2.82e-4 & 9.80e-5 
                    \\ 
            SIREN   
                    & 1.03e-4 & 5.28e-5 & 1.93e-4
                    & 4.15e-5 & 3.87e-5 & 8.57e-6
                    & 9.63e-5 & 8.12e-5 & 5.41e-5
                    & 1.51e-4 & 6.69e-5 & 1.77e-4
                    & 1.39e-4 & 9.07e-5 & 1.03e-4
                    \\
            NSP
                    & \textbf{5.36e-5} & 4.06e-5 & 3.64e-5
                    & 3.55e-5 & 3.44e-5 & 2.45e-6
                    & 5.66e-5 & 4.82e-5 & 2.09e-5
                    & \textbf{6.98e-5} & 4.69e-5 & 4.34e-5
                    & \textbf{8.21e-5} & 7.18e-5 & 3.60e-5
                    \\
            DiGS + n
                    & 2.74e-4 & \textbf{2.32e-5} & 9.90e-4
                    & \textbf{1.05e-5} & \textbf{9.29e-6} & 3.93e-6
                    & \textbf{3.11e-5} & \textbf{2.17e-5} & 3.98e-5
                    & 6.92e-4 & \textbf{4.28e-5} & 1.10e-3
                    & 3.96e-4 & \textbf{3.87e-5} & 1.52e-3
                    \\

            \hline
            SIREN wo n
                    & 3.08e-4 & 2.58e-4 & 3.26e-4
                    & 2.42e-4 & 2.50e-4 & 5.92e-5
                    & 1.93e-4 & 1.67e-4 & 9.09e-5
                    & 3.16e-4 & 2.72e-4 & 1.72e-4
                    & 2.67e-4 & 2.58e-4 & 4.78e-5
                    \\
            SAL
                    & 1.14e-3 & 2.11e-4 & 3.63e-3
                    & 5.98e-4 & 2.38e-4 & 9.22e-4
                    & 3.55e-4 & 1.71e-4 & 4.26e-4
                    & \textbf{2.81e-4} & 1.86e-4 & 1.81e-4
                    & 4.51e-4 & 2.74e-4 & 4.36e-4
                    \\
            Our DiGS
                    & \textbf{1.32e-4} & \textbf{2.55e-5} & 4.73e-4
                    & \textbf{1.32e-5} & \textbf{1.01e-5} & 7.56e-6
                    & \textbf{7.26e-5} & \textbf{2.21e-5} & 1.74e-4
                    & 4.07e-4 & \textbf{4.45e-5} & 9.25e-4
                    & \textbf{7.89e-5} & \textbf{3.97e-5} & 1.10e-4
                    \\

         \bottomrule
    \end{tabular}
    \\~\\
    \begin{tabular}{l c c c | c c c | c c c | c c c | c c c}
        \toprule
         & \multicolumn{3}{c}{chair} 
            & \multicolumn{3}{c}{display} & \multicolumn{3}{c}{lamp} 
            & \multicolumn{3}{c}{loudspeaker} & \multicolumn{3}{c}{rifle}  \\
        Method & Mean & Median & Std
            & Mean & Median & Std & Mean & Median & Std 
            & Mean & Median & Std & Mean & Median & Std \\
            \hline
            IGR     
                    & 9.25e-4 & 9.88e-5 & 3.11e-3 
                    & 9.99e-5 & 7.49e-5 & 8.44e-5 
                    & 1.72e-3 & 1.28e-4 & 6.24e-3 
                    & 3.77e-3 & 1.15e-4 & 1.49e-2 
                    & 9.62e-5 & 5.29e-5 & 1.25e-4 
                    \\ 
            SIREN   
                    & 1.05e-4 & 6.34e-5 & 1.18e-4
                    & 6.98e-5 & 5.68e-5 & 3.86e-5
                    & 6.26e-5 & 5.07e-5 & 3.35e-5
                    & 2.77e-4 & 6.88e-5 & 5.54e-4
                    & 3.62e-5 & 3.50e-5 & 4.03e-6
                    \\
            NSP
                    & \textbf{5.62e-5} & 4.21e-5 & 4.32e-5
                    & \textbf{4.36e-5} & 3.99e-5 & 1.28e-5
                    & 4.19e-5 & 3.91e-5 & 1.00e-5
                    & \textbf{8.41e-5} & \textbf{4.54e-5} & 7.54e-5
                    & 3.26e-5 & 3.15e-5 & 2.79e-6
                    \\
            DiGS + n
                    & 8.55e-5 & \textbf{2.43e-5} & 1.43e-4
                    & 8.67e-4 & \textbf{2.52e-5} & 2.45e-3
                    & \textbf{3.34e-5} & \textbf{1.70e-5} & 4.80e-5
                    & 1.05e-3 & 7.13e-4 & 1.14e-3
                    & \textbf{4.80e-6} & \textbf{4.73e-6} & 1.74e-6
                    \\

            \hline
            SIREN wo n
                    & \textbf{2.63e-4} & 2.60e-4 & 1.31e-04
                    & 2.49e-4 & 2.20e-4 & 8.45e-05
                    & 6.10e-4 & 3.49e-4 & 1.04e-03
                    & 3.29e-4 & 3.04e-4 & 1.31e-04
                    & 5.44e-4 & 5.56e-4 & 1.44e-04
                    \\
            SAL
                    & 1.28e-3 & 2.92e-4 & 2.05-3
                    & 2.56e-4 & 8.86e-5 & 4.99-4
                    & 5.86e-3 & 1.29e-3 & 9.35-3
                    & 4.04e-4 & 2.63e-4 & 4.50-4
                    & 2.18e-3 & 1.15e-4 & 5.17-3
                    \\
            Our DiGS
                    & 3.72e-4 & \textbf{2.73e-5} & 1.05e-3
                    & \textbf{3.16e-5} & \textbf{2.53e-5} & 2.32e-5
                    & \textbf{1.70e-4} & \textbf{2.18e-5} & 3.96e-4
                    & \textbf{1.18e-4} & \textbf{6.18e-5} & 2.15e-4
                    & \textbf{9.10e-6} & \textbf{5.26e-6} & 1.03e-5
                    \\

         \bottomrule
    \end{tabular}
    \\~\\
    \begin{tabular}{l c c c | c c c | c c c | c c c}
        \toprule
         & \multicolumn{3}{c}{sofa} & \multicolumn{3}{c}{table} 
            & \multicolumn{3}{c}{telephone} & \multicolumn{3}{c}{watercraft} \\
        Method
            & Mean & Median & Std & Mean & Median & Std 
            & Mean & Median & Std & Mean & Median & Std \\
            \hline
            IGR     
                    & 2.86e-4 & 1.02e-4 & 5.30e-4 
                    & 3.40e-4 & 1.95e-4 & 3.33e-4 
                    & 1.03e-4 & 4.43e-5 & 1.54e-4 
                    & 1.47e-4 & 1.12e-4 & 1.23e-4 
                    \\ 
            SIREN   
                    & 7.88e-5 & 6.99e-5 & 3.90e-5
                    & 1.92e-4 & 8.32e-5 & 2.32e-4
                    & 3.88e-5 & 3.58e-5 & 9.64e-6
                    & 5.57e-5 & 4.21e-5 & 2.95e-5
                    \\
            NSP
                    & \textbf{5.11e-5} & 4.80e-5 & 1.24e-5
                    & \textbf{6.60e-5} & 4.88e-5 & 4.17e-5
                    & \textbf{3.34e-5} & 3.19e-5 & 3.60e-6
                    & 4.41e-5 & 3.84e-5 & 1.42e-5
                    \\
            DiGS + n
                    & 6.83e-5 & \textbf{2.77e-5} & 9.39e-5
                    & 1.68e-4 & \textbf{3.26e-5} & 3.50e-4
                    & 1.15e-4 & \textbf{1.75e-5} & 3.05e-4
                    & \textbf{2.77e-5} & \textbf{1.57e-5} & 3.30e-5
                    \\

            \hline
            SIREN wo n
                    & 2.72e-4 & 2.66e-4 & 6.74e-05
                    & \textbf{2.29e-4} & 2.38e-4 & 8.40e-05
                    & 2.10e-4 & 1.86e-4 & 6.60e-05
                    & 2.97e-4 & 2.43e-4 & 1.26e-04
                    \\
            SAL
                    & 3.75e-4 & 1.93e-4 & 4.31-4
                    & 1.82e-3 & 5.10e-4 & 4.31-3
                    & 1.04e-4 & 6.81e-5 & 7.99-5
                    & 8.08e-4 & 2.06e-4 & 1.75-3
                    \\
            Our DiGS
                    & \textbf{5.76e-5} & \textbf{3.27e-5} & 5.39e-5
                    & 2.94e-4 & \textbf{2.98e-5} & 6.76e-4
                    & \textbf{1.77e-5} & \textbf{1.74e-5} & 4.49e-6
                    & \textbf{6.10e-5} & \textbf{2.43e-5} & 9.03e-5
                    \\

         \bottomrule
    \end{tabular}
    \\~\\
    IoU
    \\
    \begin{tabular}{l c c c | c c c | c c c | c c c | c c c}
        \toprule
         & \multicolumn{3}{c}{All}
            & \multicolumn{3}{c}{airplane} & \multicolumn{3}{c}{bench} 
            & \multicolumn{3}{c}{cabinet} & \multicolumn{3}{c}{car}\\
        Method & Mean & Median & Std
            & Mean & Median & Std & Mean & Median & Std 
            & Mean & Median & Std & Mean & Median & Std \\
            \hline
            IGR     
                    & 0.8102 & 0.8480 & 0.1519
                    & 0.7851 & 0.8193 & 0.0977 
                    & 0.5812 & 0.5923 & 0.2487 
                    & 0.8709 & 0.8857 & 0.0924 
                    & 0.8026 & 0.8664 & 0.1300 
                    \\ 
            SIREN   
                    & 0.8268 & 0.9097 & 0.2329
                    & 0.8045 & 0.9080 & 0.2696 
                    & 0.6109 & 0.7442 & 0.3258 
                    & 0.8706 & 0.9263 & 0.1621 
                    & 0.8036 & 0.9241 & 0.2753
                    \\
            NSP
                    & 0.8973 & 0.9230 & 0.0871
                    & 0.8165 & 0.8998 & 0.1551
                    & 0.7872 & 0.8370 & 0.1236
                    & \textbf{0.9274} & 0.9291 & 0.0422
                    & 0.8954 & 0.9288 & 0.0740
                    \\
            DiGS + n
                    & \textbf{0.9200} & \textbf{0.9774} & 0.1992
                    & \textbf{0.9693} & \textbf{0.9718} & 0.0151
                    & \textbf{0.9428} & \textbf{0.9655} & 0.0644
                    & 0.8323 & \textbf{0.9867} & 0.3076
                    & \textbf{0.9147} & \textbf{0.9754} & 0.2126
                    \\

            \hline
            SIREN wo n
                    & 0.3085 & 0.2952 & 0.2014
                    & 0.2248 & 0.1735 & 0.1103
                    & 0.4020 & 0.4231 & 0.1953
                    & 0.3014 & 0.2564 & 0.1275
                    & 0.3336 & 0.3030 & 0.0997
                    \\
            SAL
                    & 0.4030 & 0.3944 & 0.2722
                    & 0.1908 & 0.1693 & 0.0955
                    & 0.2260 & 0.2311 & 0.1401
                    & 0.6923 & 0.7224 & 0.1637
                    & 0.6261 & 0.6526 & 0.1525
                    \\
            Our DiGS
                    & \textbf{0.9390} & \textbf{0.9764} & 0.1262
                    & \textbf{0.9613} & \textbf{0.9577} & 0.0164
                    & \textbf{0.9061} & \textbf{0.9536} & 0.1413
                    & \textbf{0.9261} & \textbf{0.9853} & 0.2137
                    & \textbf{0.9455} & \textbf{0.9765} & 0.0699
                    \\

         \bottomrule
    \end{tabular}
    \\~\\
    \begin{tabular}{l c c c | c c c | c c c | c c c | c c c}
        \toprule
         & \multicolumn{3}{c}{chair} 
            & \multicolumn{3}{c}{display} & \multicolumn{3}{c}{lamp} 
            & \multicolumn{3}{c}{loudspeaker} & \multicolumn{3}{c}{rifle}  \\
        Method & Mean & Median & Std
            & Mean & Median & Std & Mean & Median & Std 
            & Mean & Median & Std & Mean & Median & Std \\
            \hline
            IGR     
                    & 0.8049 & 0.8320 & 0.1022 
                    & 0.8741 & 0.8917 & 0.0533 
                    & 0.7865 & 0.8259 & 0.1318 
                    & 0.8867 & 0.9324 & 0.1017 
                    & 0.8279 & 0.8267 & 0.0542  
                    \\ 
            SIREN   
                    & 0.8721 & 0.8807 & 0.0495 
                    & 0.9014 & 0.9146 & 0.0440 
                    & 0.8392 & 0.8995 & 0.2025 
                    & 0.8458 & 0.9618 & 0.2404 
                    & 0.7329 & 0.9132 & 0.3662 
                    \\
            NSP
                    & 0.8841 & 0.9034 & 0.0825
                    & \textbf{0.9309} & 0.9316 & 0.0251
                    & \textbf{0.9037} & 0.9178 & 0.0512
                    & \textbf{0.9323} & 0.9627 & 0.0599
                    & 0.9299 & 0.9313 & 0.0215
                    \\
            DiGS + n
                    & \textbf{0.9719} & \textbf{0.9759} & 0.0140
                    & 0.8367 & \textbf{0.9855} & 0.3485
                    & 0.9024 & \textbf{0.9637} & 0.1991
                    & 0.8798 & \textbf{0.9747} & 0.2424
                    & \textbf{0.9569} & \textbf{0.9571} & 0.0207
                    \\

            \hline
            SIREN wo n
                    & 0.4208 & 0.3748 & 0.2322
                    & 0.3566 & 0.3123 & 0.1790
                    & 0.3055 & 0.2573 & 0.2598
                    & 0.2229 & 0.1724 & 0.1575
                    & 0.0265 & 0.0092 & 0.0554
                    \\
            SAL
                    & 0.2589 & 0.1491 & 0.2213
                    & 0.5067 & 0.5801 & 0.2474
                    & 0.1689 & 0.0698 & 0.1994
                    & 0.6702 & 0.7264 & 0.1976
                    & 0.2835 & 0.2821 & 0.1530
                    \\
            Our DiGS
                    & \textbf{0.9082} & \textbf{0.9650} & 0.1523
                    & \textbf{0.9839} & \textbf{0.9886} & 0.0102
                    & \textbf{0.8776} & \textbf{0.9646} & 0.1943
                    & \textbf{0.9632} & \textbf{0.9851} & 0.0978
                    & \textbf{0.9486} & \textbf{0.9567} & 0.0281
                    \\

         \bottomrule
    \end{tabular}
    \\~\\
    \begin{tabular}{l c c c | c c c | c c c | c c c}
        \toprule
         & \multicolumn{3}{c}{sofa} & \multicolumn{3}{c}{table} 
            & \multicolumn{3}{c}{telephone} & \multicolumn{3}{c}{watercraft} \\
        Method
            & Mean & Median & Std & Mean & Median & Std 
            & Mean & Median & Std & Mean & Median & Std \\
            \hline
            IGR     
                    & 0.8891 & 0.9139 & 0.0708 
                    & 0.6852 & 0.7260 & 0.2004 
                    & 0.9148 & 0.9372 & 0.0639 
                    & 0.8146 & 0.8445 & 0.0931 
                    \\ 
            SIREN   
                    & 0.9251 & 0.9411 & 0.0390 
                    & 0.7280 & 0.8058 & 0.2089 
                    & 0.9427 & 0.9514 & 0.0310 
                    & 0.8722 & 0.9279 & 0.1990 
                    \\
            NSP
                    & 0.9387 & 0.9473 & 0.0264
                    & 0.8414 & 0.8427 & 0.0534
                    & \textbf{0.9569} & 0.9625 & 0.0260
                    & 0.9207 & 0.9231 & 0.0402
                    \\
            DiGS + n
                    & \textbf{0.9624} & \textbf{0.9859} & 0.0696
                    & \textbf{0.9284} & \textbf{0.9784} & 0.1743
                    & 0.8880 & \textbf{0.9855} & 0.2935
                    & \textbf{0.9747} & \textbf{0.9789} & 0.0168
                    \\

            \hline
            SIREN wo n
                    & 0.3397 & 0.3444 & 0.1206
                    & 0.3797 & 0.3603 & 0.1528
                    & 0.3778 & 0.3806 & 0.2590
                    & 0.3190 & 0.3007 & 0.1877
                    \\
            SAL
                    & 0.4844 & 0.4530 & 0.1404
                    & 0.0965 & 0.0320 & 0.1502
                    & 0.6025 & 0.6704 & 0.2203
                    & 0.4170 & 0.4728 & 0.2367
                    \\
            Our DiGS
                    & \textbf{0.9572} & \textbf{0.9807} & 0.0896
                    & \textbf{0.8943} & \textbf{0.9720} & 0.1996
                    & \textbf{0.9854} & \textbf{0.9876} & 0.0071
                    & \textbf{0.9522} & \textbf{0.9735} & 0.0504
                    \\

         \bottomrule
    \end{tabular}
    
    \end{tabular}
    }
    }
    \caption{Extended results for surface reconstruction on ShapeNet~\cite{chang2015shapenet}. For each shape class, and all shapes together, we report the squared Chamfer distance (first three tables) and the IoU (last three tables) to the ground truth mesh. Methods above the line use normal supervison, and methods below do not.}
    \label{tab:appx:shapenet-extended}
\end{table*}

\begin{figure*}
     \centering
     \resizebox{\textwidth}{!}{
     \begin{tabular}{c c c c c c c c}
         \rotatebox[origin=c]{90}{GT}
         &
         \raisebox{-.5\height}{\includegraphics[width=0.1\linewidth, trim=94pt 50pt 62pt 41pt, clip]{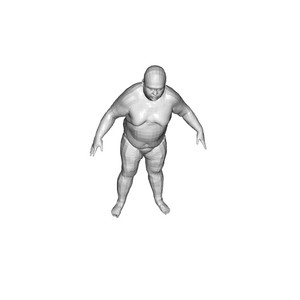} }
         &
         \raisebox{-.5\height}{\includegraphics[width=0.1\linewidth, trim=94pt 50pt 62pt 41pt, clip]{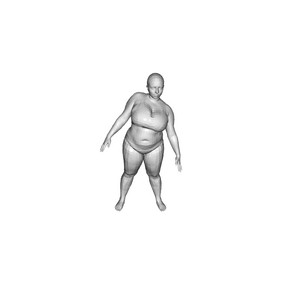} }
         &
         \raisebox{-.5\height}{\includegraphics[width=0.1\linewidth, trim=94pt 50pt 62pt 41pt, clip]{assets/figures/dfaust-shapespace/gt_3.jpg} }
         &
         \raisebox{-.5\height}{\includegraphics[width=0.1\linewidth, trim=94pt 50pt 62pt 41pt, clip]{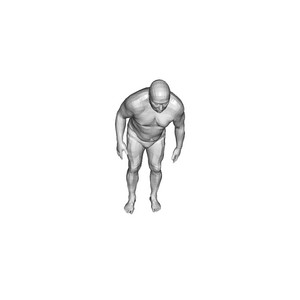} }
         &
         \raisebox{-.5\height}{\includegraphics[width=0.1\linewidth, trim=94pt 50pt 62pt 41pt, clip]{assets/figures/dfaust-shapespace/gt_5.jpg} }
         &
         \raisebox{-.5\height}{\includegraphics[width=0.1\linewidth, trim=94pt 50pt 62pt 41pt, clip]{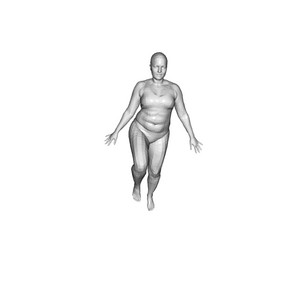} }
         &
         \raisebox{-.5\height}{\includegraphics[width=0.1\linewidth, trim=94pt 50pt 62pt 41pt, clip]{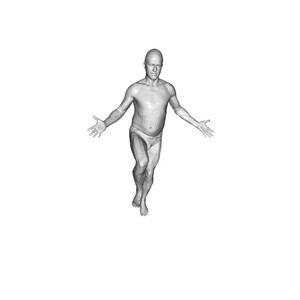} }\\ \hline
         \rule{0pt}{0.05ex} &&& \\  
         \rotatebox[origin=c]{90}{IGR}
         &
         \raisebox{-.5\height}{\includegraphics[width=0.1\linewidth, trim=94pt 50pt 62pt 41pt, clip]{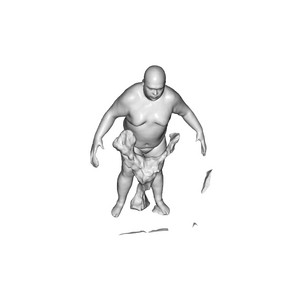} }
         &
         \raisebox{-.5\height}{\includegraphics[width=0.1\linewidth, trim=94pt 50pt 62pt 41pt, clip]{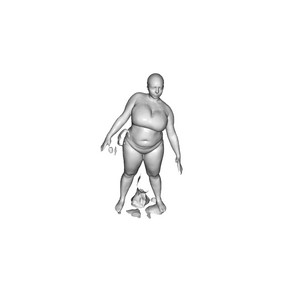} }
         &
         \raisebox{-.5\height}{\includegraphics[width=0.1\linewidth, trim=94pt 50pt 62pt 41pt, clip]{assets/figures/dfaust-shapespace/igr_3.jpg} }
         &
         \raisebox{-.5\height}{\includegraphics[width=0.1\linewidth, trim=94pt 50pt 62pt 41pt, clip]{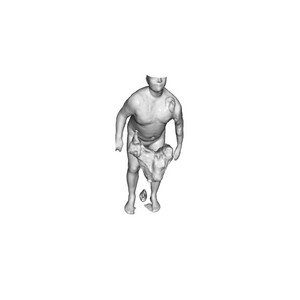} }
         &
         \raisebox{-.5\height}{\includegraphics[width=0.1\linewidth, trim=94pt 50pt 62pt 41pt, clip]{assets/figures/dfaust-shapespace/igr_5.jpg} }
         &
         \raisebox{-.5\height}{\includegraphics[width=0.1\linewidth, trim=94pt 50pt 62pt 41pt, clip]{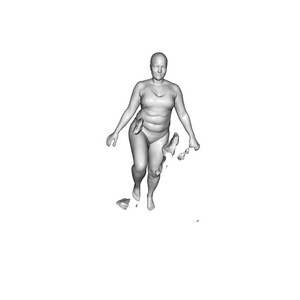} }
         &
         \raisebox{-.5\height}{\includegraphics[width=0.1\linewidth, trim=94pt 50pt 62pt 41pt, clip]{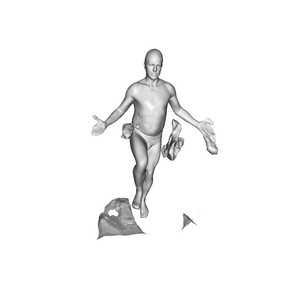} }\\
         \rule{0pt}{0.05ex} &&& \\  
         \rotatebox[origin=c]{90}{\textbf{Our DiGS}+n}
         &
         \raisebox{-.5\height}{\includegraphics[width=0.1\linewidth, trim=94pt 50pt 62pt 41pt, clip]{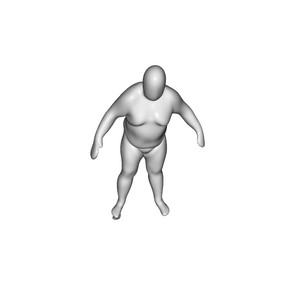} }
         &
         \raisebox{-.5\height}{\includegraphics[width=0.1\linewidth, trim=94pt 50pt 62pt 41pt, clip]{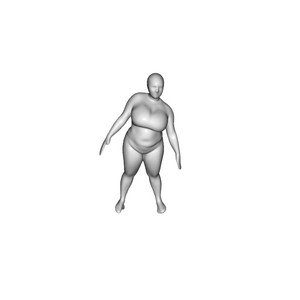} }
         &
         \raisebox{-.5\height}{\includegraphics[width=0.1\linewidth, trim=94pt 50pt 62pt 41pt, clip]{assets/figures/dfaust-shapespace/digs_n_3.jpg} }
         &
         \raisebox{-.5\height}{\includegraphics[width=0.1\linewidth, trim=94pt 50pt 62pt 41pt, clip]{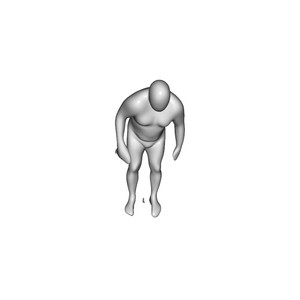} }
         &
         \raisebox{-.5\height}{\includegraphics[width=0.1\linewidth, trim=94pt 50pt 62pt 41pt, clip]{assets/figures/dfaust-shapespace/digs_n_5.jpg} }
         &
         \raisebox{-.5\height}{\includegraphics[width=0.1\linewidth, trim=94pt 50pt 62pt 41pt, clip]{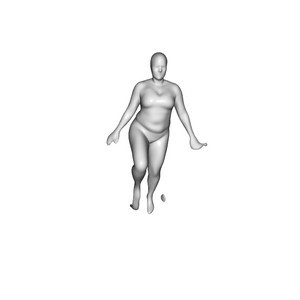} }
         &
         \raisebox{-.5\height}{\includegraphics[width=0.1\linewidth, trim=94pt 50pt 62pt 41pt, clip]{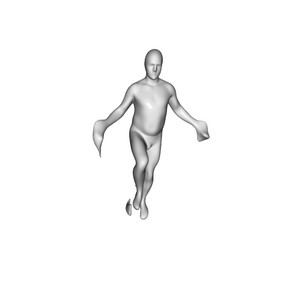} }\\ \hline   
         \rule{0pt}{0.05ex} &&& \\
    
         \rotatebox[origin=c]{90}{IGR wo n}
         &
         \raisebox{-.5\height}{\includegraphics[width=0.1\linewidth, trim=94pt 50pt 62pt 41pt, clip]{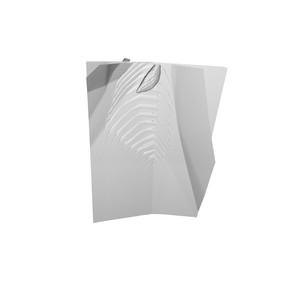}}
         &
         \raisebox{-.5\height}{\includegraphics[width=0.1\linewidth, trim=94pt 50pt 62pt 41pt, clip]{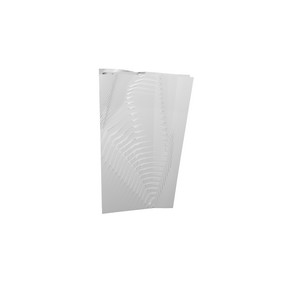}}
         &
         \raisebox{-.5\height}{\includegraphics[width=0.1\linewidth, trim=94pt 50pt 62pt 41pt, clip]{assets/figures/dfaust-shapespace/igr_wo_n_3.jpg}}
         &
         \raisebox{-.5\height}{\includegraphics[width=0.1\linewidth, trim=94pt 50pt 62pt 41pt, clip]{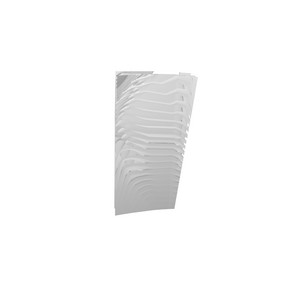}}
         &
         \raisebox{-.5\height}{\includegraphics[width=0.1\linewidth, trim=94pt 50pt 62pt 41pt, clip]{assets/figures/dfaust-shapespace/igr_wo_n_5.jpg}}
         &
         \raisebox{-.5\height}{\includegraphics[width=0.1\linewidth, trim=94pt 50pt 62pt 41pt, clip]{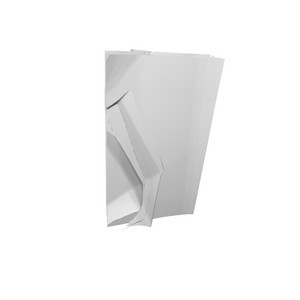}}
         &
         \raisebox{-.5\height}{\includegraphics[width=0.1\linewidth, trim=94pt 50pt 62pt 41pt, clip]{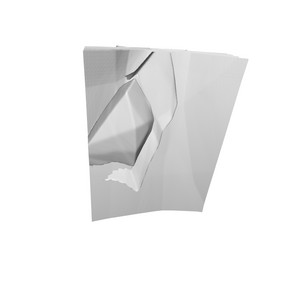}}\\
         \rule{0pt}{0.05ex} &&& \\
         
         \rotatebox[origin=c]{90}{\textbf{Our DiGS}}
         &
         \raisebox{-.5\height}{\includegraphics[width=0.1\linewidth, trim=94pt 50pt 62pt 41pt, clip]{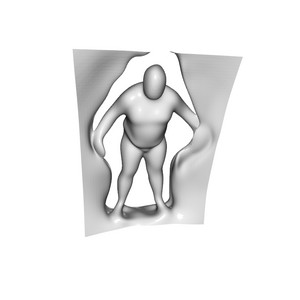}}
         &
         \raisebox{-.5\height}{\includegraphics[width=0.1\linewidth, trim=94pt 50pt 62pt 41pt, clip]{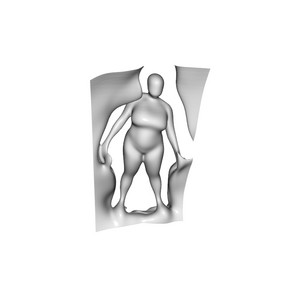}}
         &
         \raisebox{-.5\height}{\includegraphics[width=0.1\linewidth, trim=94pt 50pt 62pt 41pt, clip]{assets/figures/dfaust-shapespace/digs_3.jpg}}
         &
         \raisebox{-.5\height}{\includegraphics[width=0.1\linewidth, trim=94pt 50pt 62pt 41pt, clip]{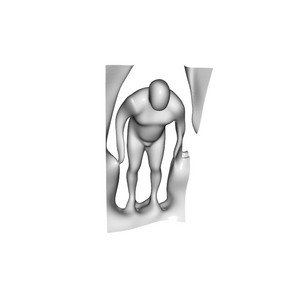}}
         &
         \raisebox{-.5\height}{\includegraphics[width=0.1\linewidth, trim=94pt 50pt 62pt 41pt, clip]{assets/figures/dfaust-shapespace/digs_5.jpg}}
         &
         \raisebox{-.5\height}{\includegraphics[width=0.1\linewidth, trim=94pt 50pt 62pt 41pt, clip]{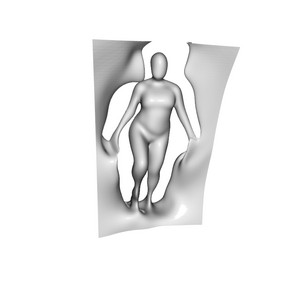}}
         &
         \raisebox{-.5\height}{\includegraphics[width=0.1\linewidth, trim=94pt 50pt 62pt 41pt, clip]{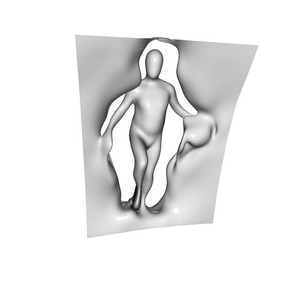}}\\
        Col & 1 & 2 & 3 & 4 & 5 & 6 & 7
     \end{tabular}}
        \caption{Qualitative results for the shape space experiment on the DFaust dataset \cite{bogo2017dynamic}. Both methods with normals, IGR and DiGS+n, manage to capture the shape of the humans. IGR has more detail, but has a lot of ghost geometries (all columns), sometimes changes orientation that is far away from the mean of the dataset (column 4) and often misses large surfaces such as forearms (columns 3 and 5). DiGS+n captures the correct surface with minimal ghost geometry and few missing regions, but oversmooths fine details (e.g. facial features). On the other hand, for methods without normals (DiGS and IGR wo n), only DiGS is able to learn multiple human shapes whereas IGR wo n is not able to learn at all. DiGS also has large ghost geometries and oversmooths the human surfaces, but manages to capture more key regions compared to IGR. }
        \label{fig:dfaust_shapespace_comparisons}
\end{figure*}

\end{document}